\documentclass{article} %
\usepackage{preprint,times}

\usepackage{amsmath,amsfonts,bm}

\def\1{\bm{1}}

\DeclareMathAlphabet{\mathsfit}{\encodingdefault}{\sfdefault}{m}{sl}
\SetMathAlphabet{\mathsfit}{bold}{\encodingdefault}{\sfdefault}{bx}{n}

\newcommand{\E}{\mathbb{E}}

\newcommand{\R}{\mathbb{R}}

\usepackage{amsmath,amssymb,amsthm,mathtools}
\usepackage{booktabs}
\usepackage{graphicx}
\usepackage{enumitem}
\usepackage{microtype}
\newtheorem{theorem}{Theorem}
\newtheorem{proposition}{Proposition}
\newtheorem{lemma}{Lemma}
\newtheorem{corollary}{Corollary}
\usepackage{hyperref}
\hypersetup{hidelinks,pdftitle={Anisotropic Representations Improve Planning in JEPA World Models},pdfauthor={Mingu Kang, Yoori Oh, Sookyung Kim, Joonseok Lee}}
\usepackage{url}
\usepackage[most]{tcolorbox}
\usepackage{xcolor}
\usepackage{titletoc}

\definecolor{linkblue}{rgb}{0.21,0.49,0.74}

\hypersetup{
    colorlinks=true,
    urlcolor=linkblue,
    linkcolor=black,
    citecolor=black
}

\usepackage[capitalize]{cleveref}
\crefname{section}{Sec.}{Secs.}
\Crefname{section}{Section}{Sections}
\crefname{table}{Tab.}{Tabs.}
\Crefname{table}{Table}{Tables}
\crefname{figure}{Fig.}{Figs.}
\Crefname{figure}{Figure}{Figures}
\crefname{appendix}{App.}{Apps.}
\Crefname{appendix}{Appendix}{Appendices}

\input{function}

\title{Anisotropic Representations \\Improve Planning in JEPA World Models}

\author{
Mingu Kang$^{1}$ \quad
Yoori Oh$^{1}$\thanks{Corresponding authors} \quad
Sookyung Kim$^{2}$\footnotemark[1] \quad
Joonseok Lee$^{1}$\footnotemark[1] \\
$^{1}$Seoul National University \\
$^{2}$Ewha Womans University \\
\footnotesize
\texttt{\{gms5560, yoori0203, joonseok\}@snu.ac.kr}\\
\texttt{sookim@ewha.ac.kr}
}

\begin{document}

\maketitle
\begin{abstract}
Latent world models learn action-conditioned dynamics in representation space and often score candidate actions by Euclidean distance to a goal representation.
Joint training typically regularizes the representation to prevent collapse, but the resulting representation geometry also determines how terminal errors are weighted during planning.
We show that accurate prediction and noncollapsed representations do not guarantee a task-aligned latent planning cost: isotropic Gaussian regularization can induce a geometry that ranks feasible outcomes differently from the task cost.
To address this mismatch, we introduce AnisoWM with $\Lambda$Reg, which replaces the fixed isotropic Gaussian target with a learnable diagonal covariance under fixed-trace and anisotropy constraints.
The prediction objective, predictor architecture, and Euclidean planner remain unchanged; the target is used only during training.
Our analysis characterizes the prediction-driven allocation of target variance, its dependence on the training distribution, and the conditions under which the induced metric reduces planning regret.
Across four visual control environments, AnisoWM improves planning success over LeWorldModel in all four.
Its latent planning cost also shows better agreement with task outcomes. \textit{Project website: \href{https://rkdrn79.github.io/AnisoWM-page/}{https://rkdrn79.github.io/AnisoWM-page/}}
\end{abstract}
\section{Introduction}
\label{sec:intro}
JEPA-based world models \citep{maes2026lewm, assran2025vjepa2,zhou2025dinowm} predict how an agent's state will evolve under candidate actions in a learned representation space.
In visual goal planning, the planner rolls out candidate actions in this space and scores the predicted outcome by its distance to the goal representation, often using Euclidean distance.
As a result, the encoder does more than provide features for prediction: it also determines the geometry of the planning cost, and hence how different terminal errors are weighted.

LeWorldModel (LeWM) \citep{maes2026lewm} jointly learns an encoder and an action-conditioned predictor from visual observations.
Given a current observation and a visual goal, the encoder maps them into latent representations, while the predictor rolls out candidate action sequences in latent space.
LeWM scores an action sequence $U$ by
\begin{equation}
 J_z(U)=\norm{\hat z_H(U)-z_g}^2,
 \label{eq:planningcost}
\end{equation}
where $H$ is the planning horizon, $\hat z_H(U)$ is the predicted terminal representation, and $z_g$ is the goal representation.
Since the planner minimizes $J_z(U)$, its ranking of candidate outcomes should agree with the task cost.

However, LeWM does not explicitly optimize the representation geometry for alignment with the task cost.
The objective combines prediction loss with Sketched Isotropic Gaussian Regularization (SIGReg), introduced in LeJEPA \citep{balestriero2025lejepa}, to prevent representation collapse.
SIGReg encourages the learned features to follow an isotropic Gaussian distribution.
The isotropic target is motivated by theoretical criteria for downstream representation quality under linear and nonlinear probing, rather than by whether the resulting Euclidean distances are suitable for planning.
This mismatch motivates our central question: does the latent geometry learned by jointly optimizing the predictor and SIGReg rank candidate outcomes in the same order as the task cost?

We answer this question by analyzing how joint prediction--SIGReg training determines the geometry used for planning (\cref{sec:analysis}).
In a linear Gaussian setting, as process noise vanishes, the joint objective selects an approximately whitened representation, so Euclidean latent distance induces inverse-state-covariance weighting in state space.
This places greater emphasis on low-variance directions and can change the ordering of feasible outcomes relative to the task cost, leaving positive planning regret even with exact conditional-mean prediction in representation space.
A nonlinear MLP toy experiment exhibits the same prediction--planning separation.
Together, these results motivate learning how variance is allocated across latent directions rather than fixing every direction to the same target variance.

We therefore introduce AnisoWM with $\Lambda$Reg (\cref{sec:method}), which learns a diagonal Gaussian target jointly with the encoder and predictor.
We keep the total target variance fixed and bound its anisotropy by a condition-number constraint $\kappa$, while the allocation across latent directions is learned through joint prediction--regularization training.
The prediction objective, predictor architecture, and Euclidean planner remain unchanged. Our analysis shows that the learned target can counteract the inverse-covariance weighting induced by isotropic regularization, while also showing that excessive anisotropy can increase planning regret.

We evaluate AnisoWM on four visual goal-planning environments.
AnisoWM improves planning success over LeWM in all four environments.
Its latent costs also show better agreement with task outcomes, providing an empirical counterpart to the ordering mismatch highlighted by our analysis.
Under the same anisotropy bound, training learns different target spectra across environments, and increasing the allowed anisotropy does not monotonically improve planning.
Together, these results suggest that the learned variance allocation plays an important role in the observed planning improvements.

Our main contributions can be summarized as follows:
\begin{itemize}[leftmargin=10pt,itemsep=0pt,topsep=0pt]

  \item We show that the expected prediction objective with isotropic SIGReg can select a planning-misaligned latent geometry despite non-collapse and exact conditional-mean prediction.

  \item We introduce AnisoWM with $\Lambda$Reg, which replaces the fixed isotropic Gaussian target with a constrained anisotropic target whose variance allocation is learned jointly with the encoder and predictor. This learned variance allocation reshapes the latent geometry, and we theoretically show that it can better align latent distances with task costs and thereby reduce planning regret.

  \item We empirically verify that AnisoWM improves LeWM in the success rate of planning across four visual control environments, as well as the agreement between latent costs and recorded task outcomes.

\end{itemize}

\section{Related Work}
\label{sec:related}

\textbf{Latent world models. }
Latent world models learn predictive dynamics in representation space for planning and control.
PlaNet \citep{hafner2019planet} learns latent dynamics directly from pixels, while TD-MPC \citep{hansen2022tdmpc} learns task-oriented latent dynamics for model-predictive control.
More recent visual world models predict over learned or pretrained visual representations, including DINO-WM \citep{zhou2025dinowm} and V-JEPA-based approaches \citep{assran2025vjepa2}.
LeWorldModel (LeWM) \citep{maes2026lewm} jointly trains an encoder and action-conditioned predictor with SIGReg and plans using Euclidean distance between predicted and goal representations.
Related work modifies this pipeline in different ways: Fast-LeWM \citep{gao2026fastlewm} changes the predictive structure to reduce rollout cost and accumulated error, while RC-aux \citep{li2026rcaux} augments training with multi-horizon prediction and budget-conditioned reachability supervision.

\textbf{Representation regularization. }
Self-supervised objectives commonly constrain feature statistics to prevent collapse and redundancy.
Barlow Twins \citep{zbontar2021barlow}, whitening-based methods \citep{ermolov2021whitening}, and VICReg \citep{bardes2022vicreg} impose second-order constraints on learned representations.
LeJEPA \citep{balestriero2025lejepa} derives an isotropic Gaussian target from probing-risk criteria and introduces SIGReg to encourage representations to match that target.
Alternative representation distributions include radial Gaussianization in Radial-VCReg \citep{kuang2026radial} and sparse nonnegative targets in Rectified LpJEPA \citep{kuang2026rectified} and LpWM \citep{kuang2026lpwm}.
HamJEPA \citep{alvarez2026hamjepa} studies anisotropic Gaussian geometry derived from a prescribed structured geometry, whereas TC-LeWM \citep{liu2026tclewm} changes which features are regularized by SIGReg through temporal centering.

\textbf{Geometry for planning. }
A broader line of work studies representations whose geometry reflects control-relevant state similarity.
Bisimulation-based methods \citep{ferns2004metrics,zhang2021dbc} and DeepMDP \citep{gelada2019deepmdp} connect latent distances and dynamics to behavioral equivalence.
More recent work focuses directly on latent planning.
Temporal Straightening \citep{wang2026temporal} reduces trajectory curvature for gradient-based planning, while SCALE \citep{hu2026scale} calibrates LeWM distances against a task-relevant state space.
TRM \citep{li2026trm} learns a horizon-aware trajectory-reachability metric that replaces or augments the terminal planning cost, and Decision-Metric Alignment \citep{wang2026decisionmetric} introduces latent--outcome ranking diagnostics together with action-conditioned objectives for improving planning geometry.
Related identifiability results \citep{klindt2026when} characterize conditions under which predictive learning recovers state up to transformations that preserve Euclidean geometry. AnisoWM instead addresses planning geometry through the Gaussian representation regularizer while retaining the Euclidean planner.
\section{Analysis: Isotropic Regularization and Planning Cost}
\label{sec:analysis}

Throughout this section, we use \emph{metric matrix} to denote a positive-definite matrix $M\succ0$ that weights the discrepancy between two physical states $x,x_g\in\mathbb{R}^D$ through the quadratic cost
\[
 (x-x_g)^\top M(x-x_g).
\]

Since planning depends only on cost rankings, $M$ and $cM$ for any $c>0$ are equivalent for our purposes.
To show that isotropic regularization can induce planning regret, we first identify the state-space metric induced by isotropic feature covariance, then show that the expected joint prediction--SIGReg objective selects this metric in arbitrary dimension.
Finally, we connect the selected metric to finite-horizon planning regret.
Proofs and the explicit finite-horizon construction are given in Appendices~\ref{app:proofs} and~\ref{ag:appendix}.

\subsection{The metric induced by isotropic covariance}
\label{sec:metric}

An affine encoder with linear part $A$ induces the latent Euclidean cost
\[
 \norm{A\Delta x}^2
 =\Delta x^\top M\Delta x,
 \qquad
 M=A^\top A,
 \qquad
 \Delta x=x-x_g.
\]

For invertible $A$, $M\succ0$; for singular $A$, the same expression defines a positive-semidefinite quadratic cost.
The following lemma identifies the metric under isotropic feature covariance.

\begin{lemma}[Metric induced by isotropic covariance]
\label{lem:metric}
Let $A$ be the linear part of a square invertible affine encoder and let $\Sigma\succ0$ denote the state covariance.
If the encoded covariance is isotropic,
\[
 A\Sigma A^\top=sI
\]
for some $s>0$, then
\begin{equation}
 M=A^\top A=s\Sigma^{-1},
 \qquad
 \norm{A(x-x_g)}^2
 =s(x-x_g)^\top\Sigma^{-1}(x-x_g).
 \label{eq:whitening}
\end{equation}
Consequently, this latent Euclidean cost agrees up to positive scale with a task cost $\Delta x^\top Q\Delta x$, where $Q\succ0$, for all residuals if and only if $Q$ is proportional to $\Sigma^{-1}$.
\end{lemma}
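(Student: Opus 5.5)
The argument is direct linear algebra in two parts: first identify $M=A^\top A$ from the isotropy constraint, then characterize when the resulting quadratic form agrees with the task cost up to scale.

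\textbf{Identifying $M$.} Since $A$ is square and invertible and $\Sigma\succ0$, the identity $A\Sigma A^\top=sI$ can be inverted on both sides. Taking inverses gives $A^{-\top}\Sigma^{-1}A^{-1}=s^{-1}I$. Multiplying on the left by $A^\top$ and on the right by $A$ yields $\Sigma^{-1}=s^{-1}A^\top A$, that is, $M=A^\top A=s\Sigma^{-1}$. An equivalent route is to note that $\Sigma^{1/2}A^\top A\Sigma^{1/2}=s^{-1}(\Sigma^{1/2}A^\top)(A\Sigma^{1/2})$, and $A\Sigma^{1/2}/\sqrt{s}$ is orthogonal, so $(A\Sigma^{1/2})^\top(A\Sigma^{1/2})=sI$. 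Substituting $\Delta x=x-x_g$ into $\norm{A\Delta x}^2=\Delta x^\top A^\top A\Delta x$ then gives \eqref{eq:whitening}. The affine offset of the encoder cancels in the difference, so only the linear part matters.

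\textbf{Agreement with the task cost.} The notion of agreement must be made precise. I take it to mean that there exists $c>0$ with $\Delta x^\top(s\Sigma^{-1})\Delta x=c\,\Delta x^\top Q\Delta x$ for all $\Delta x\in\mathbb{R}^D$.

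\emph{Sufficiency.} If $Q=\alpha\Sigma^{-1}$ with $\alpha>0$ (positivity is forced by $Q\succ0$), take $c=s/\alpha$.

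\emph{Necessity.} If the two quadratic forms agree for every $\Delta x$, then the symmetric matrix $s\Sigma^{-1}-cQ$ has a zero quadratic form. By polarization, $v^\top Bw=\tfrac14\big[(v+w)^\top B(v+w)-(v-w)^\top B(v-w)\big]$ for symmetric $B$, so a symmetric matrix with identically zero quadratic form is zero. Hence $Q=(s/c)\Sigma^{-1}$.

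\textbf{Possible weaker reading.} If ``agrees'' were instead read as preserving the ranking of all residuals, I would reduce it to the above. Equal level sets of two positive-definite forms force proportionality: compare each form on the unit sphere of the other, or note that order preservation on rays through the origin forces the ratio $\Delta x^\top M\Delta x/\Delta x^\top Q\Delta x$ to be constant across directions. I expect this variant to be the only point needing care, and I would state the lemma using the scalar-multiple reading to avoid it.
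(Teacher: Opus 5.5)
Your proposal is correct and follows the paper's proof: invert the covariance identity to get $A^\top A=s\Sigma^{-1}$, then use that two quadratic forms agreeing up to scale for all residuals must have proportional symmetric matrices. Your polarization argument simply fills in this second step, which the paper asserts without proof. One small slip is in your optional alternative route: $\Sigma^{1/2}A^\top A\Sigma^{1/2}$ equals $(A\Sigma^{1/2})^\top(A\Sigma^{1/2})$ with no $s^{-1}$ factor. Your subsequent conclusion from the orthogonality of $A\Sigma^{1/2}/\sqrt{s}$ is still correct.
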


Thus isotropic feature covariance does not in general induce an isotropic metric in the original state coordinates.
Instead, it weights errors by inverse state covariance: directions with lower training variance receive greater weight.
If the task metric and the latent metric weight directions differently, they can prefer different actions when planning requires trading off errors across directions.

\subsection{The metric selected by joint prediction--SIGReg training}
\label{sec:joint_geometry}

The preceding lemma is a geometric statement about an isotropic representation.
We now ask whether the expected joint prediction--SIGReg objective actually selects this geometry.
Consider independent Gaussian training tuples $(x,a,x')$ of state, action, and next state,
\begin{equation}
 \begin{aligned}
 x'&=F_\eta x+Ga+\xi,\\
 x&\sim\mathcal N(0,\Sigma),\qquad
 a\sim\mathcal N(0,\Gamma),\qquad
 \xi\sim\mathcal N(0,\eta W),
 \end{aligned}
 \label{eq:general_training_model}
\end{equation}
where $x$, $a$, and $\xi$ are mutually independent, $\Sigma,W\succ0$, $\Gamma\succ0$, and $G\in\mathbb R^{D\times d_a}$.

We optimize square linear encoders $A\in\mathbb R^{D\times D}$, including singular encoders to avoid assuming noncollapse, together with a linear predictor $p$ taking $(Ax,a)$ as input.
Write the prediction loss $\Lpred$ and the state-whitened process-noise covariance $R$ as
\[
 \Lpred(A,p)=\tfrac12\E\norm{p(Ax,a)-Ax'}^2,
 \qquad
 R=\Sigma^{-1/2}W\Sigma^{-1/2},
\]
and consider the isotropic-target objective,
\begin{equation}
 \mathcal L_B(A,p)
 =\Lpred(A,p)+\lambda_B\Reg(A\Sigma A^\top),
 \qquad \lambda_B>0.
 \label{eq:general_baseline_objective}
\end{equation}
Here $\Reg$ is the expected finite-batch SIGReg statistic characterized in Appendix~\ref{app:finite}, and $\lambda_B$ balances this regularization against prediction.
Under the finite-statistic assumptions there, $\Reg$ is uniquely minimized at $qI_D$ for some $q>0$, so it favors a noncollapsed isotropic feature covariance.
The common scale $q$ does not affect planning-cost rankings.

\begin{proposition}[Metric selected by isotropic joint training]
\label{prop:isotropic_joint_metric}
Assume the finite-statistic conditions of Lemma~\ref{lem:finite} and fix $\lambda_B>0$.
For all sufficiently small $\eta>0$, Eq.~\eqref{eq:general_baseline_objective} attains a global minimum.
Every minimizing encoder is invertible, even though singular encoders are admissible, and every minimizing predictor recovers the exact encoded conditional mean,
\[
 p_A(z,a)=AF_\eta A^{-1}z+AGa.
\]
All global minimizers induce the same Gram matrix
\[
 K_{B,\eta}=(A\Sigma^{1/2})^\top(A\Sigma^{1/2}),
\]
and, uniformly over the global minimizers,
\begin{equation}
 K_{B,\eta}\longrightarrow qI_D,
 \qquad
 M_{B,\eta}:=A^\top A\longrightarrow q\Sigma^{-1}
 \qquad\text{as }\eta\downarrow0.
 \label{eq:general_baseline_metric}
\end{equation}
The attained prediction loss is
\[
 \Lpred(A,p)=\tfrac\eta2\tr(K_{B,\eta}R)\longrightarrow0.
\]
\end{proposition}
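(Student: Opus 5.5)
The proof has two stages. First, for a fixed encoder $A$ we minimize the prediction loss over linear predictors in closed form. Then we study the resulting reduced objective, which depends only on the Gram matrix $K=(A\Sigma^{1/2})^\top(A\Sigma^{1/2})$.

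\textbf{Stage 1: optimal predictor.} Since $x'=F_\eta x+Ga+\xi$ with $\xi$ independent of $(x,a)$, the target $Ax'$ splits into a part measurable with respect to $(x,a)$ plus independent noise $A\xi$. For invertible $A$, the linear predictor $p_A(z,a)=AF_\eta A^{-1}z+AGa$ reproduces $A F_\eta x + AGa$ exactly. Its loss is therefore the irreducible noise term
\[
\tfrac12\E\norm{A\xi}^2=\tfrac\eta2\tr(AWA^\top)=\tfrac\eta2\tr(K R),
\]
using $W=\Sigma^{1/2}R\Sigma^{1/2}$. Uniqueness of this predictor follows because $(Ax,a)$ has nondegenerate covariance when $A$ is invertible, so the least-squares coefficients are unique.

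For singular $A$, the best linear predictor can only use $Ax$, not $x$. The loss then gains a nonnegative term: the residual variance of $AF_\eta x$ not explained by $Ax$. For a lower bound it suffices to drop this term, giving $\Lpred\ge\frac\eta2\tr(KR)$ in all cases. The reduced objective is thus
\[
\phi_\eta(A)\ \ge\ \tfrac\eta2\tr(KR)+\lambda_B\Reg(A\Sigma A^\top),
\]
with equality whenever $A$ is invertible. Note that $A\Sigma A^\top$ and $K$ share their spectrum. I will need from Lemma~\ref{lem:finite} that $\Reg$ depends on the covariance only through its eigenvalues (rotation invariance of SIGReg), or at least is orthogonally invariant, so that $\Reg(A\Sigma A^\top)=\Reg(K)$.

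\textbf{Stage 2: existence, invertibility, uniqueness of $K$, and the limit.} The reduced problem is $\min_{K\succeq0}\ \frac\eta2\tr(KR)+\lambda_B\Reg(K)$.

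\emph{Existence.} From Lemma~\ref{lem:finite} I expect coercivity of $\Reg$ (it blows up or stays bounded below away from its minimum as $\norm K\to\infty$) together with continuity. These give a minimizer over the closed PSD cone.

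\emph{Invertibility.} Singular $K$ should be excluded because $\Reg$ is uniquely minimized at $qI$ and is continuous. For small $\eta$, any minimizer satisfies
\[
\lambda_B\Reg(K)\le \lambda_B\Reg(qI)+\tfrac\eta2 q\tr R,
\]
so $K$ lies in a shrinking sublevel neighborhood of $qI$. Points in that neighborhood are positive definite, hence $A$ is invertible, and the bound then holds with equality, so these are genuine minimizers of $\mathcal L_B$. The same sublevel argument, combined with compactness and unique minimization of $\Reg$, gives $K_{B,\eta}\to qI$ uniformly over minimizers. Consequently
\[
M=A^\top A=\Sigma^{-1/2}K\Sigma^{-1/2}\to q\Sigma^{-1},
\]
and $\Lpred=\frac\eta2\tr(KR)\to0$.

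\emph{Uniqueness of $K$ (main obstacle).} "All global minimizers induce the same Gram matrix" needs strict convexity of $\Reg$ near $qI$, or more precisely positive definiteness of its Hessian there. With that, $\frac\eta2\tr(KR)+\lambda_B\Reg(K)$ is strictly convex on the small neighborhood containing all minimizers, so the minimizer is unique. I will first look for this nondegeneracy in the finite-statistic conditions of Lemma~\ref{lem:finite}. If only spectral information is available, I would instead reduce to eigenvalues: at fixed eigenvalues of $K$, the term $\tr(KR)$ is minimized by aligning eigenvectors with $R$ in opposite order. A strictly convex scalar problem in the eigenvalues, near $q$, then pins $K$ down uniquely whenever the relevant eigenvalue ties do not create ambiguity. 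If $R$ has repeated eigenvalues, the perturbation-based argument is the cleaner route, and it is the one I would pursue.
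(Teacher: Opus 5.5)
Your plan follows the paper's proof step for step. It has the same ingredients: the lower bound $\Lpred\ge\frac\eta2\tr(KR)$ with equality exactly at the conditional-mean predictor, and orthogonal invariance giving $\Reg(A\Sigma A^\top)=\Reg(K)$. It then compares a minimizer with $qI_D$ for localization and obtains strict convexity near $qI_D$ from the positive-definite Hessian in Eq.~\eqref{eq:hessian}. Lemma~\ref{lem:finite} does supply that Hessian, so your eigenvalue-alignment fallback is unnecessary. Finally, the sandwich argument shows that every full minimizer attains both the reduced minimum and the prediction bound.

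One justification needs repairing. $\Reg$ is not coercive: every exponential in Eq.~\eqref{eq:r} lies in $[0,1]$, so $r_N$, and hence $\Reg$, is bounded on the positive-semidefinite cone. Existence of a reduced minimizer therefore has to come from the prediction term, via $\Reg\ge0$ and $\tr(KR)\ge\lambda_{\min}(R)\tr K$.

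The same issue affects your localization step. You drop $\frac\eta2\tr(KR)$ from the comparison and then appeal to compactness of a near-minimal sublevel set of $\Reg$. Such sublevel sets are in fact bounded; this is your parenthetical alternative that $\Reg$ stays bounded away from its minimum at infinity. But that property is not part of Lemma~\ref{lem:finite} and would need its own argument. The simpler fix, which is what the paper does, is to keep both nonnegative terms in
\[
 \tfrac\eta2\tr(KR)+\lambda_B\bigl[\Reg(K)-\Reg(qI_D)\bigr]\le\tfrac{\eta q}{2}\tr R .
\]
This gives $\tr(KR)\le q\tr R$ uniformly in $\eta$ and over minimizers, so all minimizers lie in a fixed compact set. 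The regularizer gap is then $O(\eta)$, so every accumulation point is $qI_D$. With this change the rest of your argument goes through as written.
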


The mechanism follows by reducing the joint objective to
\[
K=(A\Sigma^{1/2})^\top(A\Sigma^{1/2}).
\]
For an invertible encoder, the exact conditional-mean predictor attains the irreducible prediction loss $\eta\tr(KR)/2$, while orthogonal invariance makes the expected SIGReg term depend only on $K$.
The reduced objective is therefore
\[
 \frac{\eta}{2}\tr(KR)+\lambda_B\Reg(K).
\]
Proposition~\ref{prop:isotropic_joint_metric} ensures that every global minimizer is invertible for sufficiently small $\eta$, so this reduction applies at the optima of interest.
Prediction favors shrinking the encoding along high-noise directions, but as $\eta\downarrow0$ the isotropic regularizer determines the leading-order geometry, forcing $K\to qI_D$ and hence $A^\top A\to q\Sigma^{-1}$.
Thus joint training selects the inverse-covariance geometry despite noncollapse, exact encoded prediction, and vanishing prediction loss.

\subsection{Finite-horizon planning separation}
\label{sec:finite_horizon_separation}

We next ask whether the metric mismatch can change the action sequence selected over a finite planning horizon.
For a deterministic sequence $U=(a_0,\ldots,a_{H-1})$, let $J_{*,\eta,H}(U)$ denote the expected squared Euclidean terminal task cost, and define
\[
 \Regret_{\eta,H}(U)
 =
 J_{*,\eta,H}(U)-\min_V J_{*,\eta,H}(V).
\]

Under the finite-statistic conditions of Lemma~\ref{lem:finite}, consider any state dimension $D\ge2$, any fixed finite horizon $H\ge1$, any action bound $\bar u>0$, and any nonscalar state covariance $\Sigma\succ0$.
Appendix~\ref{ag:finite-horizon} constructs a fully actuated linear Gaussian control family with stationary covariance $\Sigma$, isotropic process noise, bounded per-step planning actions, and a fixed goal.

\begin{theorem}[Finite-horizon planning separation]
\label{thm:finite_horizon_separation}
For any fixed $\lambda_B>0$ and sufficiently small $\eta>0$, every global optimum of the expected prediction--SIGReg objective has a noncollapsed encoder and exact encoded conditional-mean rollouts, with training prediction loss and fixed-$H$ encoded rollout error vanishing as $\eta\downarrow0$.
Nevertheless, every exact Euclidean latent-planning minimizer $U_{B,\eta}$ satisfies
\begin{equation}
 \Regret_{\eta,H}(U_{B,\eta})
 \longrightarrow \Delta_H>0.
 \label{eq:finite_horizon_positive_regret}
\end{equation}
\end{theorem}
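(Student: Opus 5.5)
The plan is to reduce everything to Proposition~\ref{prop:isotropic_joint_metric} plus an explicit finite-dimensional optimization over bounded actions.

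\textbf{Construction.} I would take $D\ge2$ and a nonscalar $\Sigma\succ0$, and work in an orthonormal eigenbasis of $\Sigma$. In that basis there are two eigenvalues $\sigma_1>\sigma_2$, with eigenvectors $e_1,e_2$. I would use fully actuated dynamics $F_\eta=\sqrt{1-\eta}\,I$ and $G=I$, with isotropic noise $W=\alpha I$. The action covariance is chosen as $\Gamma=\eta(\Sigma-\alpha I)$ after shrinking $\alpha<\sigma_{\min}$, so that $F_\eta\Sigma F_\eta^\top+G\Gamma G^\top+\eta W=\Sigma$ and $\Sigma$ is stationary. Since $\Sigma$ is exactly stationary, the training tuples fit Eq.~\eqref{eq:general_training_model} for every $\eta$.

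\textbf{Training-side claims.} Proposition~\ref{prop:isotropic_joint_metric} applies directly and gives three facts:
\begin{itemize}
\item global minimizers exist;
\item the encoders are invertible;
\item the predictor is exactly $AF_\eta A^{-1}z+AGa$.
\end{itemize}
It also gives $\Lpred=\tfrac\eta2\tr(K_{B,\eta}R)\to0$, because $K_{B,\eta}\to qI$ is bounded.

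Composing the predictor $H$ times yields $\hat z_H=A(F_\eta^Hx_0+\sum_t F_\eta^{H-1-t}Ga_t)$. This is exactly $A\,\E[x_H\mid x_0,U]$. The encoded rollout error is $\E\|A(x_H-\E x_H)\|^2=\tr(A\,C_{\eta,H}A^\top)$, where $C_{\eta,H}=\eta\alpha\sum_t F_\eta^{2t}=O(\eta)$. It therefore vanishes, since $A^\top A\to q\Sigma^{-1}$ is bounded.

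\textbf{Planning side.} Write the reachable terminal mean as $m(U)=F_\eta^Hx_0+\sum_t F_\eta^{H-1-t}a_t$ with $\|a_t\|\le\bar u$. The task cost is $J_*=\|m(U)-x_g\|^2+\tr C_{\eta,H}$, and the latent cost is $(m-x_g)^\top M_{B,\eta}(m-x_g)$.

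The feasible set of $m$ is a ball $\mathcal B_\eta$ centered at $F_\eta^Hx_0$ with radius $r_\eta=\bar u\sum_t(1-\eta)^{t/2}$. As $\eta\downarrow0$ it converges to the ball $\mathcal B_0$ of radius $H\bar u$ around $x_0$.

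I would choose $x_0$ and $x_g$ so that $x_g$ lies outside $\mathcal B_0$, with $x_g-x_0$ aligned with neither $e_1$ nor $e_2$; for instance $x_g-x_0=c(e_1+e_2)$ with $c$ large. The two limiting problems are then:
\begin{itemize}
\item the task optimum, which is the Euclidean projection of $x_g$ onto the ball, at the point $x_0+H\bar u\,(e_1+e_2)/\sqrt2$;
\item the $\Sigma^{-1}$-weighted projection, which is a different boundary point because $\sigma_1\ne\sigma_2$.
\end{itemize}
Both optima are unique by strict convexity.

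The key step is a continuity and argmin-stability argument. Since $M_{B,\eta}\to q\Sigma^{-1}$ uniformly over minimizers and $\mathcal B_\eta\to\mathcal B_0$ in Hausdorff distance, every latent minimizer $m(U_{B,\eta})$ converges to the weighted projection $m_0^\Sigma$. The task minimum converges to that of $\mathcal B_0$. Hence $\Regret\to\Delta_H=\|m_0^\Sigma-x_g\|^2-\min_{m\in\mathcal B_0}\|m-x_g\|^2$, and this is positive by uniqueness of the Euclidean projection. The noise terms cancel in the regret.

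The main obstacle is this stability step. It needs to hold for every latent minimizer $U$, not just the mean $m$, which is fine because cost depends on $U$ only through $m$. It must also hold uniformly over the possibly non-unique encoder minimizers. I would handle both with a compactness and subsequence argument on $(M_{B,\eta},m)$.

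I would also verify the two elementary facts explicitly: the weighted and Euclidean projections differ, by checking that the KKT normals $\Sigma^{-1}(m-x_g)$ and $m-x_g$ are not parallel to $m-x_0$ simultaneously; and $\alpha<\sigma_{\min}$ guarantees $\Gamma\succ0$.
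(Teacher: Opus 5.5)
Your proposal is correct and follows the same skeleton as the paper's proof of Theorem~\ref{ag:separation}. Proposition~\ref{prop:isotropic_joint_metric} gives invertible encoders, exact conditional-mean predictors and $A^\top A\to q\Sigma^{-1}$. The limiting reachable terminal means form a Euclidean ball. The Euclidean and $\Sigma^{-1}$-weighted projections of an exterior goal differ. A compactness argument, which is the paper's Lemma~\ref{ag:regret-convergence}, then gives the regret limit. One correction: the right general condition is that the goal offset is not an eigenvector of $\Sigma$, not that it is ``aligned with neither $e_1$ nor $e_2$''. Your concrete choice $c(e_1+e_2)$ with $\sigma_1\neq\sigma_2$ does satisfy it.

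The real difference is the control family. Your scalar $F_\eta=\sqrt{1-\eta}\,I$ leaves only $\Sigma-F_\eta\Sigma F_\eta^\top=\eta\Sigma$ of innovation budget. This forces the training-action covariance $\Gamma=\eta(\Sigma-\alpha I)$ to vanish as $\eta\downarrow0$. The paper instead puts the $\eta$-dependence into a non-scalar contraction that commutes with $\Sigma$: $F_\eta=(\alpha^2I_D-\eta\Sigma^{-1})^{1/2}$, where the paper's $\alpha\in(0,1)$ is a contraction factor, not your noise scale. Then $F_\eta\Sigma F_\eta^\top=\alpha^2\Sigma-\eta I_D$ and $\Gamma=(1-\alpha^2)\Sigma$ stays fixed.

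Your family still proves the statement. The reduced objective $\tfrac\eta2\operatorname{tr}(KR)+\lambda_B\Reg(K)$ involves neither $F_\eta$ nor $\Gamma$. Also, $\Gamma\succ0$ at each fixed $\eta>0$ is all that is needed to identify the predictor coefficients. However, Proposition~\ref{prop:isotropic_joint_metric} is stated for fixed $\Gamma$, and the paper's standing convention holds action statistics fixed as $\eta\downarrow0$. So you must make this observation explicitly rather than invoke the proposition ``directly.'' Your example is also more degenerate: training transitions become nearly static, with actions of size $O(\sqrt\eta)$, while planning uses actions of size $\bar u$.

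What your scalar choice buys is that the finite-$\eta$ reachable sets are exact balls, so your Hausdorff step is immediate. With the paper's non-scalar $F_\eta$ they are only convex sets converging to the limiting ball. That is why the paper argues through uniform convergence of the mean maps on the compact set $\mathcal U$. To stay within the paper's conventions, use $F_\eta=(\beta^2I_D-\eta\alpha\Sigma^{-1})^{1/2}$ with $\beta\in(0,1)$ and $\Gamma=(1-\beta^2)\Sigma$. Your argument then goes through with these convex reachable sets in place of $\mathcal B_\eta$. The limiting ball becomes centered at $\beta^Hx_0$ with radius $\bar u\sum_{j<H}\beta^j$; taking $x_0=0$, as the paper does, is simplest.
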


As $\eta\downarrow0$, the reachable terminal means in this construction form a Euclidean ball.
The task cost selects the Euclidean projection of the goal onto this ball, whereas Proposition~\ref{prop:isotropic_joint_metric} makes the latent planner select the projection under $\Sigma^{-1}$.
For a goal outside this ball with $\Sigma^{-1}x_g$ not parallel to $x_g$, the two projections differ, yielding the positive regret in Eq.~\eqref{eq:finite_horizon_positive_regret} even as prediction and fixed-$H$ rollout error vanish.
The mismatch arises because an unreachable goal forces the planner to trade off terminal errors across directions, which the inverse-covariance metric weights differently from the Euclidean task cost.

\subsection{Illustrating the Prediction--Planning Gap}
\label{sec:toy}

\begin{figure}[h]
    \centering
    \includegraphics[width=0.85\linewidth]{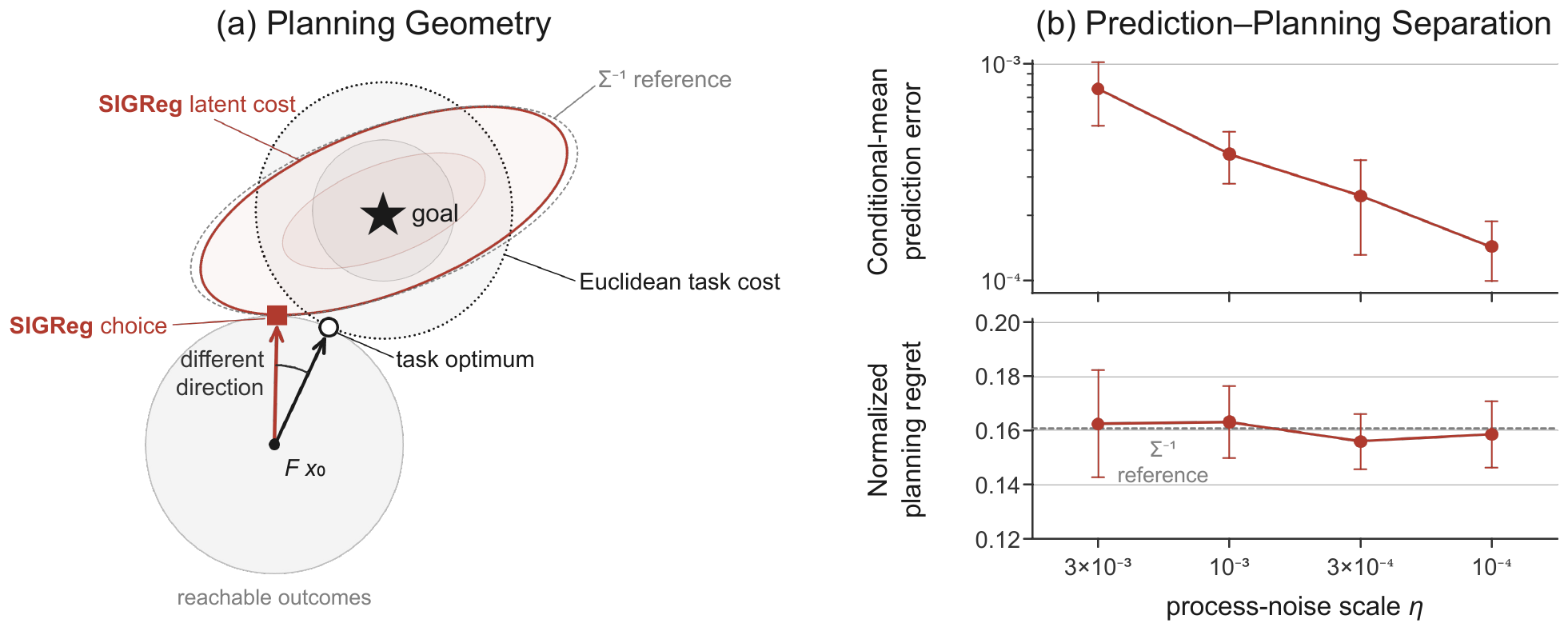}
    \caption{
    \textbf{Prediction--planning separation with a nonlinear encoder.}
    \textbf{(a)} The Euclidean task cost and the SIGReg latent cost select different outcomes from the same reachable set; the learned latent geometry closely follows the $\Sigma^{-1}$ reference.
    \textbf{(b)} As process noise decreases, held-out conditional-mean prediction error decreases while normalized physical planning regret remains nearly unchanged. Points and error bars show the mean and $95\%$ $t$-confidence interval over
ten training seeds.
    }
    \label{fig:toy_separation}
\end{figure}

To empirically illustrate Theorem~1, we test whether the prediction--planning separation persists beyond the linear setting on a two-dimensional controlled system designed to isolate the geometric mismatch analyzed above.
The training actions and process noise are isotropic, while the dynamics induce an anisotropic stationary state distribution.
At evaluation, goals are chosen outside the one-step reachable set, forcing the planner to trade off residual errors across state dimensions; the system is constructed so that the Euclidean task cost and the inverse-covariance latent cost prefer different reachable outcomes.
We train an MLP encoder and action-conditioned predictor with the prediction--SIGReg objective, using the nonlinear encoder to test whether the same behavior persists beyond the linear setting analyzed above.
As the process-noise scale decreases by $30\times$, held-out conditional-mean prediction error decreases substantially, while mean normalized physical planning regret remains near $0.16$ over ten seeds (Figure~\ref{fig:toy_separation}).
The observed regret closely matches the $0.1607$ inverse-covariance reference, and the learned local pullback metric is close to $\Sigma^{-1}$ up to scale.
Thus improved prediction does not remove the geometric action-ranking mismatch, motivating the anisotropic regularization introduced in \cref{sec:method}.

\section{Method}
\label{sec:method}

AnisoWM replaces SIGReg's fixed isotropic Gaussian target with a learnable diagonal covariance.
This permits different target variances across latent coordinates rather than imposing the same target variance in every direction.
The design follows the analysis in \cref{sec:analysis}: when Euclidean distance is used for planning, the regularization target also shapes how terminal errors are weighted by the learned representation.
We therefore relax the isotropy constraint during representation learning while leaving the prediction objective, predictor architecture, and Euclidean planner unchanged.

\subsection{Learnable Gaussian target}
\label{sec:learnable_target}

Let $f_\theta(o)\in\R^D$ be the encoder output and let
$\Lambda=\diag(v_1,\ldots,v_D)$ denote the covariance of a zero-mean Gaussian target.
We constrain
\begin{equation}
    \mathcal T_{D,\kappa}
    =
    \left\{
    \Lambda=\diag(v_1,\ldots,v_D)\succ0:
    \tr\Lambda=D,\quad
    \cond(\Lambda)\le\kappa
    \right\},
    \qquad
    \kappa\ge1,
    \label{eq:family}
\end{equation}
where $\cond(\Lambda)=\max_i v_i/\min_i v_i$.
The trace fixes the overall target scale, while $\kappa$ bounds the allowed anisotropy.
The target is diagonal in representation coordinates, while the encoder remains free to orient those coordinates relative to the underlying state geometry.
We initialize $\Lambda=I_D$; $\kappa=1$ recovers the isotropic target.

For a feature batch $Z=(z_1,\ldots,z_N)$, we define
\begin{equation}
    \widehat{\mathcal R}_{N,\Lambda}(Z;\boldsymbol\omega)
    =
    \widehat{\mathcal R}_N
    \!\left(
    \Lambda^{-1/2}Z;\boldsymbol\omega
    \right),
    \label{eq:targetregularizer}
\end{equation}
where $\widehat{\mathcal R}_N$ is the finite-batch SIGReg statistic
\citep{balestriero2025lejepa}.
Because $z\sim\mathcal N(0,\Lambda)$ implies
$\Lambda^{-1/2}z\sim\mathcal N(0,I_D)$, the original SIGReg reference distribution can be used unchanged.
This standardization is applied only inside the regularizer; prediction and planning use the original representation $z$.

\subsection{Joint training and planning}
\label{sec:joint_training}

The predictor produces
$\hat z_{t+1}=g_\phi(z_t,a_t)$ and is trained with mean squared prediction loss
$\Lpred(\theta,\phi;\mathcal B)$.
Let $Z_\theta$ denote the encoder features used by the regularizer.
We minimize
\begin{equation}
    \widehat{\mathcal L}_T
    =
    \Lpred(\theta,\phi;\mathcal B)
    +
    \lambda
    \widehat{\mathcal R}_{N,\Lambda}
    (Z_\theta;\boldsymbol\omega),
    \qquad
    \Lambda\in\mathcal T_{D,\kappa}.
    \label{eq:methodobjective}
\end{equation}
Both terms update the encoder, prediction loss updates the predictor, and
$\Lambda$ is updated only through the regularization term.

We parameterize
\begin{equation}
    v_i
    =
    D\frac{e^{\alpha_i}}{\sum_j e^{\alpha_j}},
    \qquad
    \alpha_i\in
    \left[-\tfrac12\log\kappa,\tfrac12\log\kappa\right],
    \label{eq:target_parameterization}
\end{equation}
with zero initialization and clipping after each update.
This enforces $\tr\Lambda=D$ and $\cond(\Lambda)\le\kappa$ and covers all of
$\mathcal T_{D,\kappa}$ (Appendix~\ref{app:target_parameterization}).

Planning uses autoregressive latent rollouts and is otherwise unchanged from LeWM:
\[
    J_z(U)=\norm{\hat z_H(U)-z_g}^2,
    \qquad
    z_g=f_\theta(o_g).
\]
The covariance $\Lambda$ is used only by the training regularizer and can be discarded afterward; thus AnisoWM changes the learned representation geometry without introducing an additional planning-time metric.

\subsection{Effect on planning geometry}
\label{sec:method_geometry}

We now characterize how the learnable target changes the state-space metric selected by joint training.
Return to the Gaussian model of Section~\ref{sec:joint_geometry}, with state covariance $\Sigma\succ0$ and state-whitened process-noise covariance
\[
    R=\Sigma^{-1/2}W\Sigma^{-1/2}.
\]
For a linear encoder $A$, write
\[
    K=(A\Sigma^{1/2})^\top(A\Sigma^{1/2}),
    \qquad
    A\Sigma^{1/2}=OK^{1/2},
    \qquad
    L=O^\top\Lambda O,
\]
where $O$ is orthogonal.
The matrix $L$ expresses the diagonal target covariance in state-whitened coordinates, including the orientation selected by the encoder.
Its feasible set is
\[
    \mathcal S_{D,\kappa}
    =
    \{L\succ0:\tr L=D,\ \cond(L)\le\kappa\}.
\]
Although $L$ need not be diagonal, it arises from the encoder orientation together with a diagonal $\Lambda$ and does not introduce a full-covariance target parameterization.

For the expected Gaussian model, the learned-target objective is
\begin{equation}
    \mathcal L_T(A,p,\Lambda)
    =
    \Lpred(A,p)
    +
    \lambda_T\Reg\!\left(
    \Lambda^{-1/2}A\Sigma A^\top\Lambda^{-1/2}
    \right),
    \qquad
    \Lambda\in\mathcal T_{D,\kappa}.
    \label{eq:general_target_objective}
\end{equation}

\begin{proposition}[Metric selected by target learning]
\label{prop:learned_target_metric}
Assume the finite-statistic conditions of Lemma~\ref{lem:finite}, and fix
$\lambda_T>0$ and $\kappa>1$.
For all sufficiently small $\eta>0$,
Eq.~\eqref{eq:general_target_objective} attains a global minimum.
Every global minimizer has an invertible encoder and an exact encoded conditional-mean predictor.
Uniformly over global minimizers,
\begin{equation}
    K-qL\longrightarrow0,
    \qquad
    A^\top A-q\Sigma^{-1/2}L\Sigma^{-1/2}\longrightarrow0.
    \label{eq:general_target_metric}
\end{equation}
Every accumulation point of $L$ minimizes
\begin{equation}
    \tr(LR)
    \qquad\text{over}\qquad
    L\in\mathcal S_{D,\kappa}.
    \label{eq:target_selection}
\end{equation}
If the minimizer is unique, then $L$ converges to it uniformly over global training optima, and the attained prediction loss tends to zero.
\end{proposition}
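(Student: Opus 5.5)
Our plan follows the reduction behind Proposition~\ref{prop:isotropic_joint_metric}, with the target covariance as an extra variable.

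\textbf{Step 1: reduce to $(K,L)$.} Fix $(A,\Lambda)$. The predictor enters only through $\Lpred$, and for a linear predictor this is a least-squares problem. If $A$ is invertible, the exact encoded conditional mean $p_A(z,a)=AF_\eta A^{-1}z+AGa$ is optimal and gives $\Lpred=\tfrac\eta2\tr(KR)$. For the regularizer, write $A\Sigma^{1/2}=OK^{1/2}$. Then
\[
\Lambda^{-1/2}A\Sigma A^\top\Lambda^{-1/2}=\Lambda^{-1/2}OKO^\top\Lambda^{-1/2},
\]
and this matrix is orthogonally similar to $K^{1/2}L^{-1}K^{1/2}$ with $L=O^\top\Lambda O$. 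By the orthogonal invariance of $\Reg$ (Lemma~\ref{lem:finite}), the invertible-encoder part of the objective becomes
\[
\Phi_\eta(K,L)=\tfrac\eta2\tr(KR)+\lambda_T\Reg\!\bigl(K^{1/2}L^{-1}K^{1/2}\bigr),\qquad L\in\mathcal S_{D,\kappa}.
\]
The set $\mathcal S_{D,\kappa}$ is invariant under orthogonal conjugation, trace and condition number are spectral quantities, and $O$ is free in the polar decomposition. So every $L\in\mathcal S_{D,\kappa}$ is realized by some diagonal $\Lambda\in\mathcal T_{D,\kappa}$ and some choice of $O$.

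\textbf{Step 2: existence and invertibility.} Singular encoders are handled as in Proposition~\ref{prop:isotropic_joint_metric}. Lemma~\ref{lem:finite} gives a strict gap: for singular $K$, $\Reg\ge\min\Reg+c$ with $c>0$ independent of $\eta$ and uniform over the compact set $\mathcal S_{D,\kappa}$, because $\Lambda$ only rescales features boundedly. The competitor $K=qL$ has cost $\tfrac{\eta q}{2}\tr(LR)+\lambda_T\min\Reg$. Once $\eta$ is small, this beats every singular encoder. Existence then follows from coercivity of $\Reg$ in $K$ together with compactness of $\mathcal S_{D,\kappa}$.

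\textbf{Step 3: leading order and target selection.} Since $\Reg$ is uniquely minimized at $qI$, $\Reg(K^{1/2}L^{-1}K^{1/2})$ is minimized exactly when $K^{1/2}L^{-1}K^{1/2}=qI$, i.e.\ $K=qL$. A minimizer's cost is at most $\lambda_T\min\Reg+O(\eta)$, so its regularizer excess is $O(\eta)$. Combining this with the uniqueness of the minimizer, the compactness of $\mathcal S_{D,\kappa}$, and coercivity, we get $K^{1/2}L^{-1}K^{1/2}\to qI$, hence $K-qL\to0$ uniformly. Unwrapping gives
\[
A^\top A=\Sigma^{-1/2}K\Sigma^{-1/2}\to q\Sigma^{-1/2}L\Sigma^{-1/2}.
\]
For selection, we compare a minimizer $(K,L)$ against the competitor $(qL^\star,L^\star)$, where $L^\star$ minimizes $\tr(LR)$ over $\mathcal S_{D,\kappa}$. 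Since the regularizer excess is nonnegative, $\tfrac\eta2\tr(KR)\le\tfrac{\eta q}{2}\tr(L^\star R)$. Dividing by $\eta$ and letting $\eta\downarrow0$ along a subsequence with $L\to\bar L$ gives $\tr(\bar LR)\le\tr(L^\star R)$. So every accumulation point $\bar L$ is a minimizer of $\tr(LR)$. If that minimizer is unique, compactness upgrades this to convergence of $L$, uniform over global optima. The prediction loss is $\tfrac\eta2\tr(KR)=O(\eta)\to0$.

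\textbf{Main obstacle.} The hard step is the selection argument in Step 3. The inequality $\tr(KR)\le q\tr(L^\star R)$ suffices because the regularizer excess is nonnegative. However, passing from $\tr(KR)$ to $q\tr(LR)$ requires $K-qL\to0$ at a rate that is $o(1)$ after dividing by $\eta$. In fact we only need $K-qL\to0$, which holds, since $\tr(KR)/q-\tr(LR)\to0$ directly.

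A second check is that the singular-encoder gap in Step 2 is uniform over $\Lambda$. This holds because $\cond(\Lambda)\le\kappa$ and $\tr\Lambda=D$ confine $\Lambda$ to a compact subset of the positive-definite cone.
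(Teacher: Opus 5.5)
Your proof follows the paper's argument in all essentials. The shared steps are:
\begin{itemize}
\item the polar reduction to $(K,L)$ (your $K^{1/2}L^{-1}K^{1/2}$ is orthogonally similar to the paper's $C=L^{-1/2}KL^{-1/2}$, and the latter form makes $K-qL=L^{1/2}(C-qI)L^{1/2}$ an exact identity);
\item comparison of a minimizer with $(qL^\star,L^\star)$;
\item localization $C\to qI$ from uniqueness of the minimizer of $\Reg$;
\item selection by passing to the limit in the $\eta$-free inequality $\tr(KR)\le q\tr(L^\star R)$, where, as you correctly conclude, no rate is needed.
\end{itemize}

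The one real difference is how singular encoders are excluded. You exclude them up front with a uniform gap $\Reg(C)\ge\min\Reg+c$ on singular $C$. The paper instead minimizes the reduced objective over all $K\succeq0$. There, $K-qL\to0$ together with $\lambda_{\min}(L)\ge D/(1+(D-1)\kappa)$ forces every minimizer to be positive definite for small $\eta$, so no separate gap is needed.

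Your gap is true, but it does not follow from Lemma~\ref{lem:finite} alone. The singular matrices form an unbounded set, and $\Reg$ is bounded because $r_N$ is bounded on $[0,\infty)$. Uniqueness of the minimizer therefore gives no uniform separation by itself. The fix uses ingredients you already have. Singular $K$ with $\tr(KR)>q\tr(L^\star R)$ already lose to the competitor. The remaining singular pairs $(K,L)$ with $L\in\mathcal S_{D,\kappa}$ form a compact set, and on it continuity and uniqueness give $c>0$.

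The same boundedness makes your appeal to \emph{coercivity of $\Reg$} incorrect. Existence at fixed $\eta$ comes from the term $\tfrac\eta2\tr(KR)$ with $R\succ0$. The $\eta$-uniform bound on $K$ that your compactness argument in Step~3 needs comes from the comparison inequality, which you do derive.

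Finally, to conclude that \emph{every} global minimizer uses $p_A$, you should state that the least-squares predictor is unique. This holds because $\operatorname{Cov}(Ax,a)=\diag(A\Sigma A^\top,\Gamma)$ is positive definite for invertible $A$.
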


The proof is given in Appendix~\ref{ag:target-geometry}.

\textbf{Prediction-driven metric selection. }
Under the isotropic target, $L=I_D$, and Proposition~\ref{prop:isotropic_joint_metric} recovers the inverse-covariance metric $q\Sigma^{-1}$.
Learning the target enlarges the limiting family to
\[
    q\Sigma^{-1/2}L\Sigma^{-1/2},
    \qquad
    L\in\mathcal S_{D,\kappa}.
\]
Along global minimizers, $K=qL+o(1)$, so the prediction term is
\[
    \frac{\eta}{2}\tr(KR)
    =
    \frac{\eta q}{2}\tr(LR)+o(\eta),
\]
yielding the selection rule in Eq.~\eqref{eq:target_selection}.
Thus $\kappa$ bounds the admissible anisotropy, while predictive training selects how that anisotropy is allocated.

\textbf{Task alignment and finite-horizon compensation. }
For a task metric $Q\succ0$, the limiting representation metric is proportional to $Q$ when
\begin{equation}
    L_Q
    =
    \frac{D\Sigma^{1/2}Q\Sigma^{1/2}}
         {\tr(\Sigma Q)}.
    \label{eq:target_alignment}
\end{equation}
Exact alignment therefore requires both
$L_Q\in\mathcal S_{D,\kappa}$ and that predictive training select this geometry.
For the Euclidean task metric $Q=I_D$, $L_Q$ is the trace-normalized state covariance, showing how the learnable target can compensate for the inverse-covariance weighting induced by isotropic regularization without modifying the planner.

The finite-horizon construction of Theorem~\ref{thm:finite_horizon_separation} realizes this compensation explicitly.
For the trace-normalized two-level covariance family in Eq.~\eqref{ag:two-level}, the construction uses $W=I_D$, so $R=\Sigma^{-1}$.
With $\kappa=\cond(\Sigma)$, Eq.~\eqref{eq:target_selection} uniquely selects $L\to\Sigma$, and hence
\begin{equation}
    A^\top A\longrightarrow qI_D,
    \qquad
    \Regret_{\eta,H}(U_{T,\eta})\longrightarrow0.
    \label{eq:finite_horizon_compensation}
\end{equation}
The isotropic-target planner on the same family retains the positive limiting regret in Eq.~\eqref{eq:finite_horizon_positive_regret}, while both objectives have vanishing prediction loss and fixed-horizon encoded rollout error.

Appendix~\ref{app:targetfamily} gives closed-form regret analysis, while Appendix~\ref{app:adaptive_spectrum} analyzes higher-dimensional, distribution-dependent spectrum selection.

\section{Experiments}
\label{sec:experiments}
\subsection{Experimental setup}
\label{sec:experimental_setup}

We evaluate on TwoRoom, Reacher, PushT, and Cube using the datasets, model architecture, and visual goal-planning protocol of LeWM \citep{maes2026lewm}.
AnisoWM uses a $192$-dimensional representation and regularization weight $\lambda=0.09$. 
Implementation and regularizer hyperparameters are provided in Appendix~\ref{app:implementation}.
Each environment in the primary comparison is trained with three seeds, and the reported value is the mean over the three.
For the primary comparison, we use a single shared anisotropy bound $\kappa=2$ across all environments, which permits at most a twofold ratio between target variances while avoiding environment-specific tuning.
Prediction and CEM planning operate in the original latent coordinates.
Success rates for PLDM~\citep{sobal2025learning}, DINO-WM~\citep{zhou2025dinowm}, GCBC~\citep{ghosh2021gcsl}, GCIQL~\citep{kostrikov2022iql,park2025ogbench}, GCIVL~\citep{park2025ogbench}, and Random are taken from \citet{maes2026lewm}.
The LeWM baseline is trained in our pipeline with the released code and identical settings ($\lambda=0.09$, training seeds, and evaluation pairs); it corresponds to $\kappa=1$ in our parameterization.

\subsection{Results and Analysis}
\label{sec:planning_performance}

\textbf{Planning performance. }
AnisoWM plans more successfully than LeWM in all four environments (Figure~\ref{fig:main_result}): $93\%$ against $87\%$ in TwoRoom, $89\%$ against $86\%$ in Reacher, $97\%$ against $96\%$ in PushT, and $79\%$ against $74\%$ in Cube.
Because AnisoWM changes only the representation regularization, these gains preserve LeWM's predictor and Euclidean planner.
Although some reference baselines achieve higher success in TwoRoom and Cube, AnisoWM retains LeWM's efficient latent-space planning while achieving competitive performance.

\begin{figure}[t]
    \centering
    \includegraphics[width=\linewidth]{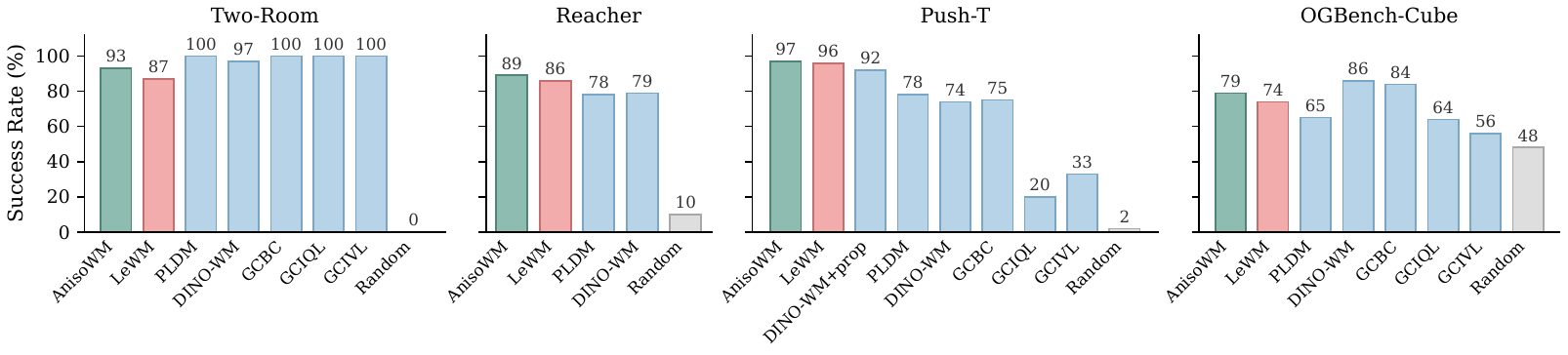}
    \vspace{-1cm}
    \caption{\textbf{Planning success across environments.}
    AnisoWM with $\Lambda$Reg at $\kappa=2$ in every environment, compared with LeWM and the reference baselines.
    Reported AnisoWM and LeWM values are means over three training seeds.}
    \label{fig:main_result}
\end{figure}

\begin{table}[t]
\vspace{-0.7cm}
\centering
\caption{\textbf{Ordering recorded action sequences by the outcome they reached.}
For each initial--goal pair, both models score the same sequences, and the entry is the fraction of sequence pairs whose cost ordering agrees with their outcome ordering.
$J_{\rm enc}$ encodes the observation a sequence actually reached; $J_{\rm pred}$ is the cost used for planning and additionally includes the predictor rollout.}
    \label{tab:action_ranking}
    \small
    \setlength{\tabcolsep}{7pt}
    \begin{tabular}{@{}lcccccc@{}}
        \toprule
        & \multicolumn{3}{c}{$J_{\rm enc}$ (representation only)}
        & \multicolumn{3}{c}{$J_{\rm pred}$ (planning cost)} \\
        \cmidrule(lr){2-4}\cmidrule(l){5-7}
        Task & LeWM & AnisoWM & Gap & LeWM & AnisoWM & Gap \\
        \midrule
        TwoRoom & 0.564 & 0.661 & $+0.097$ & 0.575 & 0.646 & $+0.071$ \\
        Reacher & 0.790 & 0.893 & $+0.103$ & 0.676 & 0.723 & $+0.048$ \\
        PushT   & 0.647 & 0.641 & $-0.005$ & 0.594 & 0.621 & $+0.028$ \\
        Cube    & 0.555 & 0.576 & $+0.021$ & 0.537 & 0.553 & $+0.016$ \\
        \bottomrule
    \end{tabular}
    \vspace{-0.5cm}
\end{table}

\textbf{Action ranking. }
\label{sec:action_selection}
Motivated by Theorem~\ref{thm:finite_horizon_separation}, we compare how LeWM and AnisoWM order the same recorded action sequences against their task outcomes; Appendix~\ref{app:ranking_protocol} gives the full protocol.
We report $J_{\rm pred}(u)=\|\hat z_H(u)-z_g\|^2$, the cost used by CEM, and $J_{\rm enc}(u)=\|f_\theta(o_H(u))-f_\theta(o_g)\|^2$, which evaluates the realized terminal observation without predictor rollout.
As shown in Table~\ref{tab:action_ranking}, AnisoWM improves $J_{\rm pred}$ ordering in all four environments.
Under $J_{\rm enc}$, the gain is positive in TwoRoom, Reacher, and Cube, while PushT is nearly unchanged, indicating that the contribution of representation geometry and predictor rollout differs across environments.
\vspace{-0.3cm}
\begin{figure}[t]
    \centering
    \includegraphics[width=0.8\textwidth]{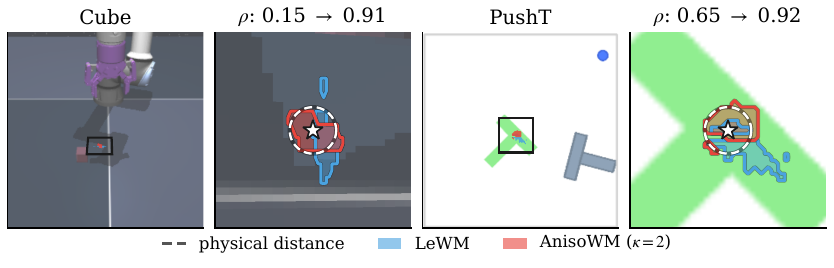}
    \vspace{-0.5cm}
    \caption{\textbf{Latent cost neighborhoods around the goal.}
    For Cube and PushT, the left panels show the rendered arena and evaluated region, and the right panels show positions in the lowest 10\% of latent planning cost around the goal ($\star$) for LeWM (blue) and AnisoWM ($\kappa{=}2$, red).
    The dashed circle shows the corresponding neighborhood under physical Euclidean distance.
    }
    \label{fig:cost-main}
\end{figure}

\paragraph{Local cost geometry.}
Figure~\ref{fig:cost-main} visualizes the latent planning cost around goals in Cube and PushT.
AnisoWM produces low-cost neighborhoods that more closely follow the corresponding task cost in physical distance,
 whereas LeWM assigns low cost to positions that can lie farther from the goal.
Spearman's rank correlation ($\rho$) between latent cost and task cost
increases from $0.15$ to $0.91$ in Cube and from $0.65$ to $0.92$ in PushT.
Additional examples are provided in Appendix~\ref{app:cost}.

\begin{figure}[t]
    \centering
\includegraphics[width=\linewidth]{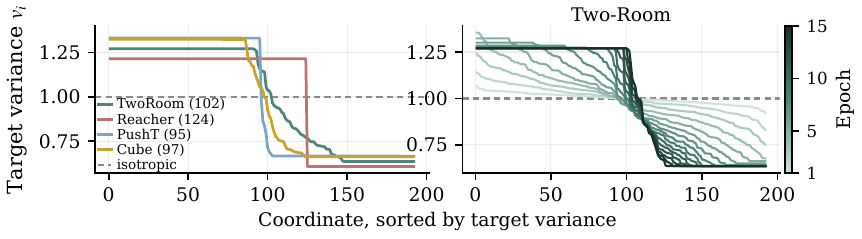}
\vspace{-0.7cm}
\caption{\textbf{Learned target spectra.}
Left: learned targets at $\kappa=2$, sorted by variance, with the isotropic target marked. The trace and maximum variance ratio are fixed, while the allocation is learned; the legend shows the number of coordinates above the mean variance.
Right: target spectrum during one TwoRoom run, showing continued evolution after the condition-number bound is reached.}
    \label{fig:target_spectra}
\end{figure}

\textbf{Learned target spectra.}
Despite the shared bound $\kappa=2$, the learned spectra differ across environments: the number of coordinates above the mean target variance ranges from $95$ in PushT to $124$ in Reacher (Figure~\ref{fig:target_spectra}).
In TwoRoom, the allocation continues to evolve after reaching the condition-number bound, showing that $\kappa$ constrains but does not determine the learned spectrum.

\begin{figure}[t]
    \centering
    \includegraphics[width=\linewidth]{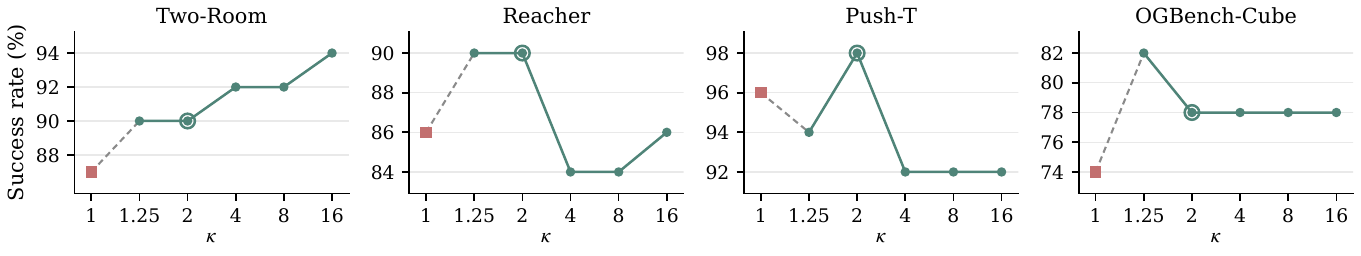}
    \vspace{-0.7cm}
    \caption{\textbf{Planning success across anisotropy bounds.}
    Each point with $\kappa>1$ reports one training run evaluated on the same set of initial--goal pairs.
    The $\kappa=1$ marker shows the isotropic LeWM baseline from the primary comparison.}
    \label{fig:kappa_sweep}
    \vspace{-0.5cm}
\end{figure}

\textbf{Sensitivity to the anisotropy bound.}
Figure~\ref{fig:kappa_sweep} shows non-monotone sensitivity to $\kappa$, with the highest observed anisotropic setting differing across environments.
Because each $\kappa>1$ point is a single training run, we treat the sweep as a sensitivity diagnostic rather than a tuned comparison. 
Appendix~\ref{app:sweep_details} reports the numerical values.

Additional experimental details are provided in Appendix~\ref{app:experiments}.

\section{Conclusion}

We studied how isotropic Gaussian regularization shapes the geometry used for latent planning.
Our analysis shows that accurate prediction and noncollapsed representations do not guarantee a task-aligned planning cost.
We introduced AnisoWM with $\Lambda$Reg, which learns a constrained anisotropic Gaussian target while leaving the prediction objective and Euclidean planner unchanged.
Our analysis characterizes the prediction-driven selection of representation geometry and establishes conditions under which it reduces planning regret.
Empirically, AnisoWM achieves higher planning success than LeWM across four visual control environments, and its latent costs show higher agreement with recorded task outcomes.
\clearpage

\bibliography{references}
\bibliographystyle{preprint}

\clearpage
\appendix
\clearpage
\hypersetup{
    colorlinks=true, %
    linkcolor=black, %
    filecolor=red, %
    urlcolor=red, %
}

\startcontents[appendix]
\section*{\textcolor{black}{Appendix}}
\printcontents[appendix]{}{1}{}\setcounter{tocdepth}{2}
\label{sec:appendix}

\section{Notation}
\begin{table}[h!]
\centering
\small
\renewcommand{\arraystretch}{1.08}
\setlength{\tabcolsep}{5pt}
\caption{Notation used throughout the paper.}
\label{tab:notation}
\begin{tabular}{p{0.27\linewidth} p{0.66\linewidth}}
\toprule
\textbf{Notation} & \textbf{Description} \\
\midrule
$o_t,\ o_g$ & Observation at time $t$ and goal observation. \\
$f_\theta,\ g_\phi$ & Neural encoder and action-conditioned latent predictor. \\
$z_t,\ z_g$ & Latent representation at time $t$ and goal representation. \\
$a_t,\ U$ & Per-step action and finite-horizon action sequence $U=(a_0,\ldots,a_{H-1})$. \\
$H$ & Planning horizon. \\
$\hat z_H(U)$ & Predicted terminal representation after rolling out $U$. \\
$J_z(U)$ & Euclidean latent terminal cost $\|\hat z_H(U)-z_g\|^2$. \\
$x,\ x'$ & Current and successor physical states in the theoretical analysis. \\
$x_g$ & Generic physical goal state. \\
$D,\ d_a$ & State/representation dimension and action dimension in the general theory. \\
$A$ & Square linear encoder in the theoretical analysis. \\
$p$ & Linear predictor in the theoretical analysis. \\
$M=A^\top A$ & State-space metric induced by Euclidean latent distance. \\
$Q$ & Positive-definite task metric. \\
$\Sigma$ & Training-state covariance. \\
$F_\eta,\ G,\ \Gamma$ & Transition matrix, control matrix, and training-action covariance. \\
$\eta,\ W$ & Process-noise scale and noise-shape covariance. \\
$R=\Sigma^{-1/2}W\Sigma^{-1/2}$ & State-whitened process-noise covariance. \\
$K=(A\Sigma^{1/2})^\top(A\Sigma^{1/2})$ & Encoder Gram matrix in state-whitened coordinates. \\
$O$ & Orthogonal factor in $A\Sigma^{1/2}=OK^{1/2}$. \\
$N,\ n_{\rm proj}$ & Batch size per regularized time slice and number of projection directions. \\
$\boldsymbol\omega$ & Collection of sampled SIGReg projection directions. \\
$\widehat{\mathcal R}_N,\ \mathcal R_N$ & Finite-batch and expected finite-batch SIGReg statistics. \\
$q$ & Scalar such that $qI_D$ minimizes the expected finite-batch statistic. \\
$\Lambda=\diag(v_1,\ldots,v_D)$ & Learnable diagonal Gaussian target covariance. \\
$\kappa$ & Condition-number bound on $\Lambda$; $\kappa=1$ gives the isotropic target. \\
$\mathcal T_{D,\kappa}$ & Feasible diagonal target family. \\
$L=O^\top\Lambda O$ & Target geometry in state-whitened coordinates. \\
$\mathcal S_{D,\kappa}$ & Feasible oriented target family. \\
$\lambda,\lambda_B,\lambda_T$ & Regularization weights; subscripts distinguish the isotropic- and learned-target theory. \\
$\mathcal U,\ y$ & Feasible action-sequence set and goal used in the explicit construction. \\
$\mu_{\eta,H}(U)$ & Conditional mean of the terminal physical state. \\
$\Omega_{\eta,H}$ & Terminal process-noise covariance. \\
$J_{*,\eta,H}(U)$ & Expected terminal task cost. \\
$\Regret_{\eta,H}(U)$ & Excess task cost over the best feasible sequence. \\
$\Delta_H$ & Positive limiting regret gap in the separation construction. \\
$\bar u,\ r_H$ & Per-step action bound and limiting reachable-ball radius. \\
$J_{\rm pred}(U)$ & Predicted terminal latent cost available to the planner. \\
$J_{\rm enc}(U)$ & Encoded cost of the realized reached observation. \\
$\ell_j,\ \pi_j$ & Feature-covariance eigenvalue and its normalized spectral weight. \\
$r_{\rm ent}$ & Entropy effective rank of the measured feature covariance. \\
\bottomrule
\end{tabular}
\end{table}
\clearpage
\section{Technical Preliminaries and a Two-Dimensional Closed-Form Specialization}
\label{app:proofs}

This appendix collects the finite-batch SIGReg facts used throughout the analysis and gives a two-dimensional specialization in which the selected action and planning regret are available in closed form.
The general joint-optimum and finite-horizon results used in the main text are proved in Appendix~\ref{ag:appendix}.
All matrix square roots are symmetric positive-semidefinite roots.
Matrix derivatives use the Frobenius inner product on symmetric matrices.
Every limit as $\eta\downarrow0$ fixes the batch size, quadrature, noise-shape parameters, action variance, and strictly positive regularization weights.
Scaling a regularization weight with $\eta$ defines a different asymptotic regime.

\subsection{Latent metric and isotropic covariance}
\label{app:metric}

\begin{proof}[Proof of Lemma~\ref{lem:metric}]
The covariance identity
\[
 A\Sigma A^\top=sI
\]
gives
\[
 \Sigma=sA^{-1}A^{-\top},
\]
and hence
\[
 A^\top A=s\Sigma^{-1}.
\]
Therefore
\[
 \norm{A(x-x_g)}^2
 =(x-x_g)^\top A^\top A(x-x_g)
 =s(x-x_g)^\top\Sigma^{-1}(x-x_g).
\]
Finally, two positive-definite quadratic forms agree up to a common
positive scale for every residual if and only if their symmetric
matrices are proportional.
Thus the latent Euclidean cost agrees up to positive scale with
$\Delta x^\top Q\Delta x$ for all residuals if and only if
$Q$ is proportional to $\Sigma^{-1}$.
\end{proof}

\paragraph{Ranking preservation under linear reparameterization.}
For completeness, consider an invertible linear transform $T$ applied
to arbitrary residuals.
It preserves all pairwise Euclidean cost rankings, including ties, if
and only if
\[
 T^\top T=cI
\]
for some $c>0$.

To see this, let $M=T^\top T\succ0$.
If $M=cI$, then
\[
 \norm{Te}^2=c\norm{e}^2
\]
for every residual $e$, so every ranking and tie is preserved.
Conversely, preservation of all ties requires $v^\top Mv$ to be
constant over the Euclidean unit sphere.
Evaluating the quadratic form at the eigenvectors of $M$ shows that
all eigenvalues of $M$ must coincide, and hence $M=cI$.

\paragraph{Ranking reversals between nonproportional metrics.}
More generally, let $M,Q\succ0$ be nonproportional.
Then there exist residuals $e_1,e_2$ whose ordering is reversed by the
two quadratic costs.
Indeed, let $v_1,v_2$ be orthonormal eigenvectors of
$M^{-1/2}QM^{-1/2}$ with eigenvalues $\mu_1<\mu_2$.
Choose $1<s<\mu_2/\mu_1$ and set
\[
 e_1=\sqrt{s}\,M^{-1/2}v_1,
 \qquad
 e_2=M^{-1/2}v_2.
\]
Then
\[
 e_1^\top M e_1=s>1=e_2^\top M e_2,
\]
whereas
\[
 e_1^\top Q e_1=s\mu_1<\mu_2=e_2^\top Q e_2.
\]
Thus nonproportional metrics can induce different rankings of feasible
outcomes.
The finite-horizon construction in Appendix~\ref{ag:finite-horizon}
realizes this geometric ranking mismatch through feasible action
sequences.

The affine-encoder assumption in Lemma~\ref{lem:metric} is essential.
In Cartesian coordinates, let $\operatorname{Rot}(\psi)$ be the planar rotation through angle $\psi$.
The smooth map $x\mapsto \operatorname{Rot}(\norm{x}^2)x$, or $(r,\theta)\mapsto(r,\theta+r^2)$ in polar coordinates, preserves standard two-dimensional Gaussian measure: the radius is unchanged, and the conditional angle remains uniform.
Its Cartesian Jacobian is nonorthogonal away from the origin.
A Gaussian marginal therefore places weaker restrictions on the local metric of a nonlinear encoder.

\subsection{Expected finite-batch SIGReg}
\label{sec:finite}
\label{app:finite}

Let $Z=(z_1,\ldots,z_N)$ contain $N>2$ independent samples from $\mathcal N(0,C)$.
Let $\omega_1,\ldots,\omega_{n_{\rm proj}}$, with $n_{\rm proj}\ge1$, be independent uniform directions on $\Sph^{D-1}$, independent of the batch.
For finitely many nonnegative knots $t_k$ and weights $w_k$, define
\begin{equation}
 \widehat{\mathcal R}_N(Z;\boldsymbol\omega)
 =\frac1{n_{\rm proj}}\sum_{\ell=1}^{n_{\rm proj}}\sum_k w_kN
 \left|\frac1N\sum_{b=1}^N e^{it_k\omega_\ell^\top z_b}
              -e^{-t_k^2/2}\right|^2,
 \label{eq:stat}
\end{equation}
where $\boldsymbol\omega=(\omega_1,\ldots,\omega_{n_{\rm proj}})$.
Assume that at least one positive-weight knot is strictly positive.
Averaging over samples and directions gives
\begin{equation}
 \begin{aligned}
 \Reg(C)&=\E_v r_N(v^\top Cv),\\
 r_N(s)&=\sum_k w_k\bigl[1+(N-1)e^{-t_k^2s}
       -2Ne^{-t_k^2(s+1)/2}+Ne^{-t_k^2}\bigr],
 \end{aligned}
 \label{eq:r}
\end{equation}
where $v$ is uniform on $\Sph^{D-1}$.
We assume $r_N'(0)<0$, a condition determined by the batch size and quadrature.

\begin{lemma}[Covariance minimum of the expected statistic]
\label{lem:finite}
There is a unique $q\in(0,1)$ satisfying $r_N'(q)=0$, and $\Reg$ is uniquely minimized over $C\succeq0$ at $qI_D$.
With $c_N=r_N''(q)>0$, its Hessian satisfies
\begin{equation}
 D^2\Reg(qI_D)[E,E]
 =\frac{c_N}{D(D+2)}
 \left[(\tr E)^2+2\tr(E^2)\right]
 \label{eq:hessian}
\end{equation}
for every symmetric matrix $E$.
\end{lemma}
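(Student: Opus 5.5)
The plan is to reduce everything to the scalar function $r_N$ on $[0,\infty)$, since $\Reg(C)=\E_v r_N(v^\top Cv)$ depends on $C$ only through the random quadratic form $v^\top Cv\ge0$. First I would differentiate Eq.~\eqref{eq:r}:
\[
 r_N'(s)=\sum_k w_kt_k^2\bigl[N e^{-t_k^2(s+1)/2}-(N-1)e^{-t_k^2 s}\bigr].
\]
At $s=1$ each bracket equals $e^{-t_k^2}>0$. Because at least one positive-weight knot is strictly positive, this gives $r_N'(1)>0$. Combined with the standing assumption $r_N'(0)<0$, continuity yields a zero $q\in(0,1)$.

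For uniqueness I would study the sign pattern of $r_N'$. Each summand can be written $w_kt_k^2e^{-t_k^2s}\bigl[Ne^{-t_k^2(1-s)/2}-(N-1)\bigr]$. Its bracket is strictly increasing in $s$, so each summand changes sign exactly once, at $s_k=1-\tfrac{2}{t_k^2}\log\tfrac{N}{N-1}<1$.

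The goal is to show that $r_N'$ itself has exactly one sign change on $[0,\infty)$, from negative to positive. The tool I would try first is the identity
\[
 r_N''(s)=-\sum_k\tfrac{t_k^2}{2}\,(r_N')_k(s)+\tfrac{N-1}{2}\sum_k w_kt_k^4e^{-t_k^2s},
\]
where $(r_N')_k$ is the $k$-th summand of $r_N'$. The aim is to deduce $r_N''>0$ at every zero of $r_N'$. Such a zero is then a strict upward crossing, and a $C^1$ function whose zeros are all strict upward crossings has at most one zero.

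This step is the main obstacle, because the knots $t_k$ differ and the weighted sum $\sum_k t_k^2(r_N')_k$ does not vanish automatically where $\sum_k(r_N')_k$ does. If the identity does not close the argument, my fallback is a variation-diminishing argument. The family $e^{-t^2 s}$, $e^{-t^2(s+1)/2}$ is a sum of exponentials with ordered rates, so Descartes' rule of signs for exponential sums bounds the number of zeros of $r_N'$ in terms of sign changes in the ordered coefficient sequence. I would then use the per-knot crossing structure, each $s_k<1$, to show there is only one such sign change. The same argument should also give $c_N=r_N''(q)>0$.

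Granting that $r_N$ is strictly decreasing on $[0,q]$ and strictly increasing on $[q,\infty)$, the minimization claim follows pointwise. For any $C\succeq0$ we have $r_N(v^\top Cv)\ge r_N(q)$ for every $v$, so $\Reg(C)\ge r_N(q)=\Reg(qI_D)$. Equality forces $v^\top Cv=q$ for almost every $v$, and by continuity for every $v\in\Sph^{D-1}$. Evaluating at the eigenvectors of $C$ then gives $C=qI_D$, so the minimizer is unique.

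For the Hessian, I would differentiate under the expectation, which is justified because $r_N$ is smooth with bounded derivatives on bounded sets. This gives
\[
 D^2\Reg(C)[E,E]=\E_v\bigl[r_N''(v^\top Cv)(v^\top Ev)^2\bigr].
\]
At $C=qI_D$ this equals $c_N\E_v(v^\top Ev)^2$. Finally I would apply the standard spherical fourth-moment identity
\[
 \E_v\, v_iv_jv_kv_l=\frac{\delta_{ij}\delta_{kl}+\delta_{ik}\delta_{jl}+\delta_{il}\delta_{jk}}{D(D+2)},
\]
which yields $\E_v(v^\top Ev)^2=\bigl[(\tr E)^2+2\tr(E^2)\bigr]/(D(D+2))$, i.e.\ Eq.~\eqref{eq:hessian}.
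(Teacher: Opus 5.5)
Your outer argument matches the paper: a zero exists because $r_N'(0)<0<r_N'(1)$, the minimum of $\E_v r_N(v^\top Cv)$ is obtained pointwise, and the Hessian comes from differentiating under the expectation and applying the spherical fourth-moment identity. But the lemma's real content is that $r_N'$ has only one zero on $[0,\infty)$, so that $r_N$ is unimodal, and that $c_N=r_N''(q)>0$. Your proposal only grants this, and neither route you sketch establishes it.

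\begin{itemize}
\item The identity route stalls, as you note yourself, at the first sum.
\item The Descartes fallback fails. Write $r_N'(s)=\sum_k w_kt_k^2\bigl[Ne^{-t_k^2/2}e^{-t_k^2 s/2}-(N-1)e^{-t_k^2 s}\bigr]$. The positive coefficients sit at rates $t_k^2/2$ and the negative ones at rates $t_k^2$. With several knots these interleave; for LeWM's knots, $t_2^2/2$ already lies between $t_1^2$ and $t_2^2$. The ordered coefficient sequence therefore has many sign changes, and the crossing points $s_k$ do not enter that count.
\item Single-crossing of each summand is not enough either. The summands are not monotone on $[0,\infty)$, since they decay to zero, and a sum of single-crossing functions can cross several times.
\end{itemize}

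The missing idea is to split at $s=1$ and argue summand by summand, using the second derivative on $[0,1]$.

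On $[1,\infty)$, your own observation $s_k<1$ shows every summand of $r_N'$ is nonnegative and at least one is strictly positive. Hence $r_N'>0$ there.

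On $[0,1]$, each summand of $r_N''$ equals
\[
 w_kt_k^4e^{-t_k^2(s+1)/2}\Bigl[(N-1)e^{t_k^2(1-s)/2}-\tfrac N2\Bigr].
\]
Since $e^{t_k^2(1-s)/2}\ge1$ for $s\le1$, the bracket is at least $N/2-1>0$, because $N>2$. So $r_N'$ is strictly increasing on $[0,1]$. This gives a unique zero $q\in(0,1)$, with $r_N'<0$ on $[0,q)$ and $r_N'>0$ on $(q,\infty)$. It also gives $c_N=r_N''(q)>0$, since $q\in[0,1]$.

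Your identity is just the sum of these per-knot second derivatives. It closes as soon as you use their sign on $[0,1]$, which contains every zero of $r_N'$. With this step supplied, the rest of your proof goes through as written.
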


The contraction from $I_D$ to $qI_D$ reflects finite-batch bias in the squared discrepancy between the empirical and reference characteristic functions \citep{balestriero2025lejepa}.
The distinction between population discrepancies and their finite-sample approximations also appears in kernel formulations of SIGReg \citep{zimmermann2025kerjepa}.
The lemma characterizes Gaussian covariances after averaging over samples and directions.

\begin{proof}[Proof of Lemma~\ref{lem:finite}]
For a fixed direction with projected variance $s=v^\top Cv\ge0$, let $Y_b=e^{it v^\top z_b}$ and $\widehat\phi=N^{-1}\sum_bY_b$.
Independence gives
\begin{equation}
 \E|\widehat\phi|^2
 =\frac1N+\frac{N-1}{N}e^{-t^2s},\qquad
 \E\widehat\phi=e^{-t^2s/2}.
\end{equation}
Expanding $N\E|\widehat\phi-e^{-t^2/2}|^2$ yields~\eqref{eq:r}, including at $s=0$.
This is the finite-sample expectation of the original biased empirical statistic used by SIGReg \citep{balestriero2025lejepa}.

Writing $a_k=t_k^2$, differentiation gives
\begin{align}
 r_N'(s)
 &=\sum_k w_ka_k\left[
 -(N-1)e^{-a_ks}+Ne^{-a_k(s+1)/2}\right],\\
 r_N''(s)
 &=\sum_k w_ka_k^2\left[
 (N-1)e^{-a_ks}-\tfrac N2e^{-a_k(s+1)/2}\right].
\end{align}
For $s\in[0,1]$, the bracket in the second line equals
\[
 e^{-a_k(s+1)/2}
 \left[(N-1)e^{a_k(1-s)/2}-\tfrac N2\right].
\]
It is strictly positive when $a_k>0$, since $N>2$.
Thus $r_N'$ is strictly increasing on $[0,1]$.
The assumption $r_N'(0)<0$ and the identity
\[
 r_N'(1)=\sum_k w_ka_ke^{-a_k}>0
\]
give a unique zero $q\in(0,1)$.
For $s\ge1$, the bracket in the first derivative is at least $e^{-a_k(s+1)/2}$, so $r_N'(s)>0$.
Consequently, $q$ is the unique global minimizer of $r_N$ on $[0,\infty)$, and $c_N=r_N''(q)>0$.

Pointwise minimization gives $\Reg(C)\ge r_N(q)$.
Equality requires $v^\top Cv=q$ for almost every unit vector $v$.
Continuity extends this identity to every unit vector, which implies $C=qI_D$.
Conversely, $qI_D$ attains equality.
This proves uniqueness over the entire positive-semidefinite cone, including singular covariances.

The quadrature is finite and the sphere is compact, so the integrand's matrix derivatives are uniformly bounded on each compact covariance set.
Differentiation under the expectation is therefore justified by dominated convergence.
At $qI_D$ this gives
\[
 D^2\Reg(qI_D)[E,E]=c_N\E(v^\top Ev)^2.
\]
The uniform-sphere fourth moments are
\[
 \E v_i^4=\frac{3}{D(D+2)},\qquad
 \E v_i^2v_j^2=\frac{1}{D(D+2)}\quad(i\ne j).
\]
Diagonalizing $E$ and expanding the square proves~\eqref{eq:hessian}.
\end{proof}

LeWM's SIGReg statistic~\citep{maes2026lewm} uses knots $t_k=3k/16$, $k=0,\ldots,16$, and the weights are $w_k=c_k(3/16)e^{-t_k^2/2}$, with $c_0=c_{16}=1$ and $c_k=2$ otherwise.
These are symmetry-doubled trapezoidal weights on $[0,3]$, multiplied by the Gaussian window.
A sufficient condition for $r_N'(0)<0$ is $N\ge58$.
For every positive knot,
\begin{equation}
 1-e^{-t_k^2/2}
 \ge\frac{t_k^2/2}{1+t_k^2/2}
 \ge\frac9{521},\qquad
 1-N(1-e^{-t_k^2/2})<0.
\end{equation}
The first inequality follows from $e^x\ge1+x$, and the final strict inequality follows from $58\cdot9>521$.
Every nonzero summand of $r_N'(0)$ is therefore negative.
The sufficient condition includes $N=128$, the batch size at each time point in our training setup.
A smaller batch threshold may also suffice.

Independent normalized Gaussian directions are uniform on the sphere, as required by the calculation.
We compute the empirical characteristic function across examples at each time point and average the resulting statistics over time.
Each training batch contains four time points with $N=128$ examples each.
If these slices have the same Gaussian marginal covariance $C$ and independent examples within each slice, the expected average remains $\Reg(C)$ under dependence across slices.
With different covariances $C_t$, the expectation is the average of $\Reg(C_t)$; dependence among examples within a slice requires a different finite-sample calculation.

\subsection{Two-dimensional closed-form specialization}
\label{app:two_dimensional}
\label{sec:obstruction}

The general results in Section~\ref{sec:analysis} do not require a two-dimensional state space.
The following specialization is useful because both the isotropic and learned-target planners admit closed-form limiting actions and regrets.

Consider independent reset transitions
\begin{equation}
 \begin{aligned}
 x'&=F_\eta x+gu+\xi,\\
 x&\sim\mathcal N(0,\Sigma),\qquad
 u\sim\mathcal N(0,\tau^2),\qquad
 \xi\sim\mathcal N(0,W_\eta),
 \end{aligned}
 \label{eq:world}
\end{equation}
where $x,u,\xi$ are mutually independent.
Set
\[
 \Sigma=\diag(1+\delta,1-\delta),\qquad
 R_0=\diag(\rho_1,\rho_2),\qquad
 g=(1,1)^\top,
\]
with $0<\delta<1$, $\rho_1,\rho_2>0$, and $0<\tau^2<(1-\delta^2)/2$, and let
\[
 W_\eta=\eta\Sigma^{1/2}R_0\Sigma^{1/2}.
\]
The covariance-preserving transition matrix $F_\eta$ is defined in Eq.~\eqref{eq:transition_construction} below.
For sufficiently small $\eta>0$, it is invertible and gives $x'$ the same covariance $\Sigma$ as $x$.

For evaluation, let $d=(1,-1)^\top$, take $x_0=F_\eta^{-1}d$, set $x_g=0$, and restrict $u\in[-1,1]$.
The terminal conditional mean is
\[
 d+gu=(1+u,u-1)^\top.
\]
We evaluate actions using squared Euclidean distance in state space:
\begin{equation}
 \begin{aligned}
 J_*(u)&=\E\norm{d+gu+\xi}^2
       =2+2u^2+\tr W_\eta,\\
 u_*&=0,\qquad
 \Regret(u)=J_*(u)-J_*(0)=2u^2.
 \end{aligned}
 \label{eq:task}
\end{equation}
The noise contributes the same additive term to every action and therefore does not affect their ranking.

We train linear encoders $A\in\R^{2\times2}$, including singular encoders, together with predictors linear in $(Ax,u)$ under
\begin{equation}
 \mathcal L_B(A,p)
 =\underbrace{\tfrac12\E\norm{p(Ax,u)-Ax'}^2}_{\Lpred(A,p)}
 +\lambda_B\Reg(A\Sigma A^\top),
 \qquad \lambda_B>0.
 \label{eq:baseline}
\end{equation}
The latent planner minimizes
\[
 J_{z,\eta}(u)=\norm{p(Ax_0,u)-Ax_g}^2
\]
over the same feasible interval.

\begin{theorem}[Closed-form isotropic-target specialization]
\label{thm:obstruction}
Fix the preceding control parameters and the statistic assumptions of Lemma~\ref{lem:finite}, and let $\lambda_B>0$.
For all sufficiently small $\eta>0$, Eq.~\eqref{eq:baseline} attains a global minimum over all linear encoders and predictors.
Every minimizing encoder is invertible, and every minimizing predictor recovers the exact encoded conditional mean,
\[
 p(Ax,u)=A(F_\eta x+gu).
\]
All global minimizers induce the same
\[
 K_\eta=(A\Sigma^{1/2})^\top(A\Sigma^{1/2})\longrightarrow qI,
\]
and therefore
\begin{equation}
 M_\eta=A^\top A
 =\Sigma^{-1/2}K_\eta\Sigma^{-1/2}
 \longrightarrow q\Sigma^{-1}.
 \label{eq:baseline_metric}
\end{equation}
If $u_B$ minimizes $J_{z,\eta}$ on $[-1,1]$, then
\begin{equation}
 \Lpred(A,p)=\tfrac\eta2\tr(K_\eta R_0)\longrightarrow0,
 \qquad
 u_B\longrightarrow\delta,
 \qquad
 \Regret(u_B)\longrightarrow2\delta^2>0.
 \label{eq:obstruction}
\end{equation}
For all sufficiently small positive $\eta$, the latent cost strictly prefers $u_B$ to the task-optimal action $0$, whereas the task cost strictly prefers $0$ to $u_B$.
\end{theorem}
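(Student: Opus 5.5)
The plan is to treat Theorem~\ref{thm:obstruction} as the $D=2$, $d_a=1$ case of Proposition~\ref{prop:isotropic_joint_metric}, and then compute the latent planner in closed form. For the training part, I would check that the model in Eq.~\eqref{eq:world} fits Eq.~\eqref{eq:general_training_model}. The correspondences are $G=g$, $\Gamma=\tau^2$, and $W=\Sigma^{1/2}R_0\Sigma^{1/2}\succ0$, so the whitened noise is $R=R_0$.

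Existence of a global minimum, invertibility of every minimizing encoder, and the exact conditional-mean predictor then follow directly. The same holds for uniqueness of $K_\eta$ across minimizers, $K_\eta\to qI$, and $\Lpred=\tfrac\eta2\tr(K_\eta R_0)\to0$.

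The metric statement is algebra. Since $A\Sigma^{1/2}$ has Gram matrix $K_\eta$, we get $A^\top A=\Sigma^{-1/2}K_\eta\Sigma^{-1/2}\to q\Sigma^{-1}$. The role of $F_\eta$ (Eq.~\eqref{eq:transition_construction}) is only to make $x'$ have covariance $\Sigma$ and be invertible for small $\eta$. The condition $\tau^2<(1-\delta^2)/2$ presumably guarantees that $\Sigma-\tau^2gg^\top-W_\eta\succ0$, so that the required $F_\eta$ exists. I would take that construction as given.

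For the planning part, the exact predictor gives
\[
p(Ax_0,u)-Ax_g=A(F_\eta x_0+gu)=A(d+gu).
\]
Hence
\[
J_{z,\eta}(u)=(d+gu)^\top M_\eta(d+gu).
\]
This is a convex quadratic in $u$ with positive leading coefficient $g^\top M_\eta g$ (positive because $M_\eta\succ0$). Its unconstrained minimizer is
\[
u_\eta=-\frac{g^\top M_\eta d}{g^\top M_\eta g}.
\]
Because $M_\eta\to q\Sigma^{-1}$ and the scale $q$ cancels in this ratio, we have $u_\eta\to -\frac{g^\top\Sigma^{-1}d}{g^\top\Sigma^{-1}g}$. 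With $\Sigma^{-1}=\diag\bigl((1+\delta)^{-1},(1-\delta)^{-1}\bigr)$:
\[
g^\top\Sigma^{-1}d=\tfrac{1}{1+\delta}-\tfrac{1}{1-\delta}=\tfrac{-2\delta}{1-\delta^2},
\qquad
g^\top\Sigma^{-1}g=\tfrac{2}{1-\delta^2}.
\]
So $u_\eta\to\delta\in(0,1)$. This limit is interior, so for small $\eta$ the constrained minimizer on $[-1,1]$ is unique and equal to $u_\eta$. It follows that $u_B\to\delta$, and from Eq.~\eqref{eq:task}, $\Regret(u_B)=2u_B^2\to2\delta^2$.

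For the final strict-preference claim, fix $\eta$ small. Then $u_B$ is the unique minimizer of the strictly convex $J_{z,\eta}$, and $u_B\neq0$ because $u_B\to\delta>0$. Therefore $J_{z,\eta}(u_B)<J_{z,\eta}(0)$. On the task side, $J_*(u_B)-J_*(0)=2u_B^2>0$.

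The only step I expect to need care is applying Proposition~\ref{prop:isotropic_joint_metric} uniformly. I need $K_\eta\to qI$ to hold uniformly over the global minimizers, so that $u_\eta$ converges along every selection of minimizers and not just one. The proposition supplies exactly this uniformity, and $u_\eta$ depends only on $M_\eta$, which is a continuous function of $K_\eta$. So the main work is confirming that the hypotheses match, in particular $W\succ0$ and the invertibility of $F_\eta$, after which the rest is routine.
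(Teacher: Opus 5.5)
Your proof is correct, but it is organized differently from the paper's. You get the entire training half by specializing Proposition~\ref{prop:isotropic_joint_metric} with $D=2$, $d_a=1$, $G=g$, $\Gamma=\tau^2$, and $W=\Sigma^{1/2}R_0\Sigma^{1/2}$, so that $R=R_0$. This is legitimate, since that proposition's proof in Appendix~\ref{ag:training} does not depend on this theorem. That proposition also places no condition on $F_\eta$, so invertibility of $F_\eta$ is needed only to define the evaluation state $x_0=F_\eta^{-1}d$, not to apply the proposition. For the planning half you use the general minimizer $u(M)=-g^\top Md/(g^\top Mg)$, its continuity in $M$, and the uniform convergence $M_\eta\to q\Sigma^{-1}$.

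The paper instead proves the theorem from scratch in two dimensions. After the same comparison-and-localization argument, it shows that $K_\eta$ is diagonal: the reduced objective is invariant under conjugation by $\mathrm{diag}(1,-1)$, and the minimizer is unique. It then applies the implicit function theorem to obtain the first-order expansions of $K_\eta=\mathrm{diag}(c_1,c_2)$ and of $\beta_\eta=(c_1-c_2)/(c_1+c_2)$. Finally, it writes the finite-noise action exactly as $u_B=(\delta-\beta_\eta)/(1-\delta\beta_\eta)$.

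Your route is shorter and yields exactly the limits the statement asks for. The paper's extra structure buys finite-$\eta$ information: $u_B\in(0,\delta)$ and $c_1\rho_1-c_2\rho_2<0$ when $\rho_1<\rho_2$, together with the diagonal canonical encoder. These are reused in Proposition~\ref{prop:path}.
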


Equation~\eqref{eq:baseline_metric} makes the mechanism explicit: the lower-variance second state coordinate receives greater weight under $\Sigma^{-1}$, so the latent planner trades the two terminal errors differently from the equal-weight task cost.
This specialization is not needed for the general separation in Section~\ref{sec:finite_horizon_separation}; it is used in Appendix~\ref{app:targetfamily} to obtain a closed-form dependence on the target anisotropy bound.

\subsection{Transition construction and prediction reduction for the specialization}
\label{app:reduction}

For the independent reset model in~\eqref{eq:world}, fix $0<\delta<1$, $\rho_1,\rho_2>0$, and $0<\tau^2<(1-\delta^2)/2$.
With the training-state covariance $\Sigma=\diag(1+\delta,1-\delta)$, noise shape $R_0=\diag(\rho_1,\rho_2)$, and $g=(1,1)^\top$, set
\begin{equation}
 W_\eta=\eta\Sigma^{1/2}R_0\Sigma^{1/2},\qquad
 F_\eta=(\Sigma-\tau^2gg^\top-W_\eta)^{1/2}\Sigma^{-1/2}.
 \label{eq:transition_construction}
\end{equation}
The model is well defined for sufficiently small $\eta$.
The rank-one positive-definiteness criterion gives
\begin{equation}
 \begin{aligned}
 \Sigma-\tau^2gg^\top\succ0
 &\quad\Longleftrightarrow\quad
 \tau^2g^\top\Sigma^{-1}g<1\\
 &\quad\Longleftrightarrow\quad
 \tau^2<\frac{1-\delta^2}{2}.
 \end{aligned}
\end{equation}
Positive definiteness persists after subtracting $W_\eta$ for small $\eta>0$.
The resulting $F_\eta$ is invertible and satisfies
\[
 F_\eta\Sigma F_\eta^\top+\tau^2gg^\top+W_\eta=\Sigma.
\]
Thus $x$ and $x'$ have the same Gaussian marginal.

The same covariance identity gives a stationary trajectory construction.
Initialize $x_0\sim\mathcal N(0,\Sigma)$ and draw iid innovation pairs $(u_t,\xi_t)\sim\mathcal N(0,\tau^2)\otimes\mathcal N(0,W_\eta)$, with the pair sequence independent of $x_0$.
Induction on the transition gives $x_t\sim\mathcal N(0,\Sigma)$ at every time.
Each transition has the same one-step tuple law as~\eqref{eq:world}, so $N$ independent trajectories provide iid examples within each time slice.

For fixed encoder, predictor, and target parameters, the expected prediction loss averaged over any fixed finite set of times equals that of the reset construction.
The same equality holds for the average of SIGReg statistics computed separately at each time slice.

For any encoder $A$, including singular $A$, independence and the zero mean of $\xi$ yield
\begin{align}
 \Lpred(A,p)
 &=\tfrac12\E\norm{p(Ax,u)-AF_\eta x-Agu}^2
   +\tfrac12\tr(AW_\eta A^\top)\nonumber\\
 &\ge\tfrac\eta2\tr(KR_0),\qquad
 B=A\Sigma^{1/2},\quad K=B^\top B\succeq0.
 \label{eq:predictionbound}
\end{align}
For invertible $A$, equality is attained by the linear encoded conditional-mean predictor
\begin{equation}
 p_A(z,u)=AF_\eta A^{-1}z+Agu.
 \label{eq:bayes}
\end{equation}
Equality uniquely determines the predictor.
The covariance of $(Ax,u)$ is positive definite, so two linear predictions that agree almost surely have identical coefficient matrices.

The matrices $BB^\top$ and $B^\top B$ are orthogonally conjugate, including when $B$ is singular.
Orthogonal invariance of $\Reg$ therefore gives $\Reg(A\Sigma A^\top)=\Reg(K)$.
The loss of every encoder and predictor pair is bounded below by the reduced objective
\begin{equation}
 \mathcal F_\eta(K)
 =\tfrac\eta2\tr(KR_0)+\lambda_B\Reg(K),\qquad K\succeq0.
 \label{eq:Freduced}
\end{equation}
Every positive-definite $K$ attains this bound with $A=K^{1/2}\Sigma^{-1/2}$ and the predictor in~\eqref{eq:bayes}.
It remains to show that all reduced global minimizers are positive definite.

\subsection{Proof of Theorem~\ref{thm:obstruction}}
\label{app:obstruction}

\paragraph{Existence and uniform localization.}
For every fixed $\eta>0$, $\mathcal F_\eta$ is continuous and coercive on the positive-semidefinite cone, since $R_0\succ0$ and $\Reg\ge0$.
It therefore attains its minimum.
Comparing any minimizer $K_\eta$ with $qI$ gives
\begin{equation}
 \frac\eta2\tr(K_\eta R_0)
 +\lambda_B\bigl[\Reg(K_\eta)-r_N(q)\bigr]
 \le\frac{\eta q}{2}\tr(R_0).
 \label{eq:comparison}
\end{equation}
Both terms on the left are nonnegative.
In particular,
\[
 \tr(K_\eta R_0)\le q\tr(R_0),\qquad
 0\le\Reg(K_\eta)-r_N(q)
 \le\frac{\eta q}{2\lambda_B}\tr(R_0).
\]
The trace bound is uniform over $\eta>0$ and all reduced global minimizers.
Every cluster point as $\eta\downarrow0$ minimizes $\Reg$, so Lemma~\ref{lem:finite} gives $K_\eta\to qI$.
To verify uniform convergence over the minimizing sets, suppose a sequence of minimizers lies outside a fixed neighborhood of $qI$.
The trace bound gives a convergent subsequence whose limit must be $qI$, a contradiction.

\paragraph{Uniqueness and the first-order perturbation.}
The Hessian in~\eqref{eq:hessian} is positive definite at $qI$ and, by continuity, throughout a sufficiently small convex neighborhood.
All global minimizers eventually lie in this neighborhood.
Strict convexity therefore gives a unique positive-definite reduced minimizer $K_\eta$ for small $\eta>0$.
Conjugation by $J=\diag(1,-1)$ preserves both terms of~\eqref{eq:Freduced}.
Uniqueness implies $JK_\eta J=K_\eta$, so $K_\eta$ is diagonal.

The stationarity equation is
\[
 \frac\eta2R_0+\lambda_B\nabla\Reg(K_\eta)=0.
\]
In two dimensions, the Hessian operator at $qI$ is
\begin{equation}
 E\longmapsto\frac{c_N}{8}\bigl[\tr(E)I+2E\bigr].
\end{equation}
It is invertible on the symmetric matrices.
The implicit function theorem gives a smooth local solution, which coincides with the unique global minimizer for small positive $\eta$.
Differentiating at zero yields
\begin{equation}
 K'_0=-\frac{1}{\lambda_Bc_N}
       \left(2R_0-\bar\rho I\right),\qquad
 \bar\rho=\frac{\rho_1+\rho_2}{2}.
\end{equation}
Consequently,
\begin{equation}
 K_\eta=qI-\frac{\eta}{\lambda_Bc_N}
 \left(2R_0-\tfrac12\tr(R_0)I\right)+O(\eta^2).
 \label{eq:Kexp}
\end{equation}
Write $K_\eta=\diag(c_1,c_2)$ and $\beta_\eta=(c_1-c_2)/(c_1+c_2)$.
Then
\begin{equation}
 \beta_\eta
 =\frac{\eta(\rho_2-\rho_1)}{\lambda_Bqc_N}
 +O(\eta^2).
 \label{eq:betaexp}
\end{equation}
For sufficiently small positive $\eta$, convergence gives $|\beta_\eta|<\delta$ and $c_i>q/2$.
Under the additional ordering $\rho_1<\rho_2$, the expansion also gives $\beta_\eta>0$ and $c_1\rho_1-c_2\rho_2<0$.
If $\rho_1=\rho_2$, the objective is invariant under every orthogonal conjugation, so uniqueness forces $K_\eta$ to be a scalar matrix and $\beta_\eta=0$.

The positive-definite reduced minimizer is attained by an invertible encoder and~\eqref{eq:bayes}, so the full training objective has a global minimum.
Every full minimizer must attain both the reduced minimum and the prediction bound.
Failure to attain either would give a strictly larger objective value.
All full global minimizers therefore induce the same $K_\eta$ and recover the exact encoded conditional mean.
The feature covariance $BB^\top$ has eigenvalues $c_1,c_2>q/2$, proving the noncollapse bound.

\paragraph{Planning and feasible regret.}
At $x_0$, the encoded conditional mean is $A(d+gu)$.
The latent planner therefore minimizes
\begin{equation}
 \begin{aligned}
 J_{z,\eta}(u)&=(d+gu)^\top M_\eta(d+gu),\\
 M_\eta&=A^\top A=\Sigma^{-1/2}K_\eta\Sigma^{-1/2}
       =\diag\left(\frac{c_1}{1+\delta},
                    \frac{c_2}{1-\delta}\right).
 \end{aligned}
\end{equation}
Writing the diagonal entries as $m_1,m_2>0$, the objective is $m_1(1+u)^2+m_2(u-1)^2$.
Its unique unconstrained minimizer is
\[
 u_B=\frac{m_2-m_1}{m_1+m_2}
     =\frac{\delta-\beta_\eta}{1-\delta\beta_\eta}
     \in(0,1),
\]
which is feasible for sufficiently small $\eta$.
When $\rho_1<\rho_2$, it lies in $(0,\delta)$.
Equation~\eqref{eq:task} gives $\Regret(u_B)=2u_B^2\to2\delta^2$.
The attained prediction loss is $\eta\tr(K_\eta R_0)/2$ and tends to zero.
Strict convexity and $u_B\ne0$ imply $J_{z,\eta}(u_B)<J_{z,\eta}(0)$ and $J_*(0)<J_*(u_B)$.
This proves the ranking reversal.
\qed

The strict inequalities between $u_B$ and $0$ extend by continuity to sufficiently small feasible neighborhoods of those actions.
A candidate pool containing actions from both neighborhoods therefore reverses their ordering under the latent and physical task costs.
Finite-budget CEM performance also depends on candidate sampling and the search procedure.

\subsection{General residuals and a fixed evaluation state}
\label{app:residuals}

Let the terminal conditional-mean residual be $\tilde d=(d_1,d_2)^\top$, with goal zero, $g=(1,1)^\top$, and task metric $I$.
For a diagonal latent metric $M=\diag(m_1,m_2)\succ0$, the unconstrained task and latent optima are
\[
 u_*^0=-\frac{d_1+d_2}{2},\qquad
 u_z^0=-\frac{m_1d_1+m_2d_2}{m_1+m_2}.
\]
Subtracting and completing the square in the task cost gives
\begin{equation}
 u_z^0-u_*^0
 =\frac{(m_2-m_1)(d_1-d_2)}{2(m_1+m_2)},\qquad
 J_*(u)-J_*(u_*^0)=2(u-u_*^0)^2.
 \label{eq:general_residual}
\end{equation}
When both optima are interior to $[-1,1]$, the resulting regret is
\[
 \Regret(u_z^0)
 =\frac{(d_1-d_2)^2}{2}
   \left(\frac{m_2-m_1}{m_1+m_2}\right)^2.
\]
For the baseline metric, this converges to $\delta^2(d_1-d_2)^2/2$.
For the favorable learned-target limit in Theorem~\ref{thm:bounded}, the corresponding limit is
\begin{equation}
 \frac{(d_1-d_2)^2}{2}
 \left(\frac{c-\kappa}{c+\kappa}\right)^2.
 \label{eq:general_target_regret}
\end{equation}
Assume that the task, baseline, and target limiting optima are interior and that $d_1\ne d_2$.
For each fixed residual satisfying these conditions, the target limiting regret is strictly lower than the baseline limit exactly when $1<\kappa<c^2$.
A uniform positive improvement margin over a set of residuals further requires a positive lower bound on $|d_1-d_2|$.

For arbitrary residuals, let $\Pi(t)=\min\{1,\max\{-1,t\}\}$.
The constrained optima are $u_*=\Pi(u_*^0)$ and $u_z=\Pi(u_z^0)$, and the exact regret is
\[
 \Regret(u_z)=2\bigl[(u_z-u_*^0)^2-(u_*-u_*^0)^2\bigr].
\]
Distinct unconstrained optima can project to the same endpoint.
For example, $(m_1,m_2)=(1,2)$ and $(d_1,d_2)=(2,3)$ give $u_*^0=-5/2$ and $u_z^0=-8/3$, both projecting to $-1$.
The strictness statement therefore uses the interior assumption.

The original evaluation state $x_0=F_\eta^{-1}d$ fixes the residual $d=(1,-1)^\top$ at every noise level.
Alternatively, take the fixed state $\bar x_0=F_0^{-1}d$, where $F_0=(\Sigma-\tau^2gg^\top)^{1/2}\Sigma^{-1/2}$.
Continuity of the positive-definite square root gives $F_\eta\bar x_0\to d$.
The projected quadratic argmin is continuous in the residual and the positive-definite metric, so the actions and regrets have the limits in Theorems~\ref{thm:obstruction} and~\ref{thm:bounded}.
At positive noise, regret in this variant is measured relative to its task-optimal action $\Pi(-g^\top F_\eta\bar x_0/2)$, which tends to zero.

\section{Closed-Form Analysis of the Learnable Gaussian Target}
\label{app:targetfamily}

This appendix analyzes $\Lambda$Reg in the two-dimensional specialization of Appendix~\ref{app:two_dimensional}.
The purpose of the specialization is to expose the dependence of the selected action and regret on the anisotropy bound $\kappa$ in closed form.
The general learned-target metric is treated in Appendix~\ref{ag:target-geometry}, and arbitrary-dimensional spectrum selection is treated in Appendix~\ref{app:general_spectrum}.

Assume the two-dimensional model and evaluation setup of Theorem~\ref{thm:obstruction}.
For a diagonal target $\Lambda\in\mathcal T_{2,\kappa}$, the expected learned-target objective is
\begin{equation}
 \mathcal L_T(A,p,\Lambda)
 =\Lpred(A,p)
  +\lambda\Reg\!\left(
    \Lambda^{-1/2}A\Sigma A^\top\Lambda^{-1/2}
  \right),
 \qquad \Lambda\in\mathcal T_{2,\kappa}.
 \label{eq:targetobjective}
\end{equation}
We focus first on the ordering $\rho_1<\rho_2$, under which predictive training favors a variance reallocation that counteracts the inverse-covariance weighting of the isotropic solution.
The reverse ordering is treated in Appendix~\ref{app:scope}.

\begin{theorem}[Selected geometry and planning regret in the two-dimensional specialization]
\label{thm:bounded}
Under the assumptions of Theorem~\ref{thm:obstruction}, suppose $\rho_1<\rho_2$, fix $\kappa>1$ and $\lambda>0$, and define
\[
 c=\cond(\Sigma)=\frac{1+\delta}{1-\delta},
 \qquad
 b=\frac{\kappa-1}{\kappa+1}.
\]
For all sufficiently small $\eta>0$, Eq.~\eqref{eq:targetobjective} attains a global minimum.
Every minimizing encoder is invertible, and every minimizing predictor recovers the exact encoded conditional mean.
For each global minimizer, let
\[
 K_{\eta,T}=(A\Sigma^{1/2})^\top(A\Sigma^{1/2}),
\]
and let $u_T$ minimize the learned latent cost over $[-1,1]$.
Then, as $\eta\downarrow0$,
\begin{equation}
 K_{\eta,T}\longrightarrow q\diag(1+b,1-b),
 \qquad
 u_T\longrightarrow\frac{c-\kappa}{c+\kappa},
 \qquad
 \Regret(u_T)\longrightarrow
 2\left(\frac{c-\kappa}{c+\kappa}\right)^2.
 \label{eq:targetlimit}
\end{equation}
The limiting regret is zero at $\kappa=c$.
Relative to the isotropic baseline at any fixed $\lambda_B>0$, it is strictly lower if and only if $1<\kappa<c^2$, equal at $\kappa=c^2$, and strictly higher for $\kappa>c^2$.
\end{theorem}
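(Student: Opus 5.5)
The plan is to specialize Proposition~\ref{prop:learned_target_metric} to $D=2$, identify the selected oriented target explicitly, and then repeat the planning computation from the proof of Theorem~\ref{thm:obstruction} with the new limiting metric.

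\textbf{Reduction and existence.} The first part of the statement carries over from the general result. For any encoder, write $B=A\Sigma^{1/2}=OK^{1/2}$ and $L=O^\top\Lambda O$. The regularizer argument $\Lambda^{-1/2}BB^\top\Lambda^{-1/2}$ is orthogonally conjugate to $K^{1/2}L^{-1}K^{1/2}$. Together with the prediction bound~\eqref{eq:predictionbound}, this reduces Eq.~\eqref{eq:targetobjective} to
\[
\frac\eta2\tr(KR_0)+\lambda\Reg\bigl(K^{1/2}L^{-1}K^{1/2}\bigr),
\qquad L\in\mathcal S_{2,\kappa}.
\]
Existence of a global minimum, invertibility of the minimizing encoder, exactness of the conditional-mean predictor, $K-qL\to0$, and the fact that accumulation points of $L$ minimize $\tr(LR_0)$ then follow from Proposition~\ref{prop:learned_target_metric}.

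If I instead argue directly, I would use the comparison trick from~\eqref{eq:comparison}. Take the competitor $K=qL^\star$, $\Lambda$ diagonalizing $L^\star$. This gives $\tr(KR_0)\le q\tr(L^\star R_0)$ and a regularizer excess of $O(\eta)$. Lemma~\ref{lem:finite} then forces $K^{1/2}L^{-1}K^{1/2}\to qI$, so $K-qL\to0$, and passing to the limit gives $\tr(LR_0)\le\tr(L^\star R_0)$.

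\textbf{Identifying the selected target.} I would minimize the linear functional $\tr(LR_0)$ over $\mathcal S_{2,\kappa}$. Let $L$ have eigenvalues $1\pm b'$ with $0\le b'\le b$; the bound $b'\le b$ is exactly the constraint $\cond(L)\le\kappa$ together with $\tr L=2$. By the von Neumann trace inequality, for a fixed spectrum the minimum is attained by pairing the larger eigenvalue $1+b'$ with the smaller $\rho_1$. Since $\rho_1<\rho_2$, this pairing is unique up to sign, forcing $L$ to be diagonal. The value $(1+b')\rho_1+(1-b')\rho_2$ is strictly decreasing in $b'$, so the unique minimizer is $L^\star=\diag(1+b,1-b)$. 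Uniqueness upgrades accumulation points to convergence, giving $K_{\eta,T}\to q\diag(1+b,1-b)$ uniformly over global optima.

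\textbf{Planning limit and comparison.} The limiting metric is
\[
M_\eta\to q\,\diag\!\left(\frac{1+b}{1+\delta},\frac{1-b}{1-\delta}\right).
\]
Using $(1+b)/(1-b)=\kappa$ and $(1+\delta)/(1-\delta)=c$, the ratio of diagonal entries is $m_2/m_1\to c/\kappa$. As in the proof of Theorem~\ref{thm:obstruction}, the latent minimizer is $u_T=(m_2-m_1)/(m_1+m_2)$, which converges to $(c-\kappa)/(c+\kappa)$. This limit lies in $(-1,1)$, so for small $\eta$ the unconstrained minimizer is feasible and the projected argmin is continuous in the metric. Then $\Regret=2u_T^2$ gives~\eqref{eq:targetlimit}, and the regret vanishes at $\kappa=c$.

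For the comparison, the baseline limit $2\delta^2$ equals $2\bigl((c-1)/(c+1)\bigr)^2$. So I need to compare $|c-\kappa|/(c+\kappa)$ with $(c-1)/(c+1)$. The map $\kappa\mapsto|c-\kappa|/(c+\kappa)$ decreases on $[1,c]$ to $0$ and increases on $[c,\infty)$. By the symmetry $\kappa\leftrightarrow c^2/\kappa$, it equals $(c-1)/(c+1)$ exactly at $\kappa=1$ and $\kappa=c^2$. This gives strictly lower, equal, and strictly higher regret on the three stated ranges.

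The main obstacle is the first-order selection step, namely showing that the $O(\eta)$ prediction term, not the regularizer, pins down $L$. It must be handled through the comparison inequality above rather than an implicit-function expansion, because the constraint $\cond(L)\le\kappa$ is active at the optimum.
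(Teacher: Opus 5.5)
Your proposal is correct and follows essentially the paper's route. Both arguments use the polar reduction to $(K,L)$, the comparison with $(qL^\star,L^\star)$, uniqueness of the trace minimizer $\diag(1+b,1-b)$, continuity of the planning action, and the baseline limit $2\delta^2$. The differences are cosmetic: you use a rearrangement bound where the paper parameterizes the target set as the disk $s^2+t^2\le b^2$, and a monotonicity plus $\kappa\leftrightarrow c^2/\kappa$ symmetry argument where the paper factors the gap as $\Delta_{\rm imp}=8c(c^2-\kappa)(\kappa-1)/[(c+1)^2(c+\kappa)^2]$.

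One small point: you have not shown that the finite-$\eta$ learned metric is diagonal, since the baseline proof got diagonality from uniqueness of $K_\eta$. So write the action as $u(M)=-g^\top Md/(g^\top Mg)$ rather than $(m_2-m_1)/(m_1+m_2)$, and let your continuity remark carry it to the diagonal limit.
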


At $\kappa=c$, the limiting state-space metric is $qI$, so the learned target exactly cancels the inverse-covariance weighting of the isotropic solution in this specialization.
The improvement is not monotone in $\kappa$: excessive anisotropy eventually increases regret.
This specialization makes the effect of the anisotropy bound explicit; Appendix~\ref{app:adaptive_spectrum} characterizes the target spectra selected in higher dimensions.

\subsection{Proof of Theorem~\ref{thm:bounded}}
\label{app:bounded}

The prediction-loss normalization is the same as in Appendix~\ref{app:two_dimensional}, so the reduced prediction bound in Eq.~\eqref{eq:predictionbound} applies directly.

\paragraph{Reduction with a free encoder orientation.}
For independent samples $x_b\sim\mathcal N(0,\Sigma)$ and $Z=(Ax_1,\ldots,Ax_N)$, the $\Lambda$Reg statistic satisfies
\[
 \E_{Z,\boldsymbol\omega}\widehat{\mathcal R}_{N,\Lambda}(Z;\boldsymbol\omega)
 =\Reg\left(
 \Lambda^{-1/2}A\Sigma A^\top\Lambda^{-1/2}
 \right).
\]
This gives the expected objective in~\eqref{eq:targetobjective}.
To separate the target spectrum from the encoder orientation, let
\[
 \mathcal S_\kappa
 =\{L\succ0:\tr L=2,\ \cond(L)\le\kappa\}
\]
be the family of orthogonal rotations of the diagonal target covariances.
Write $B=A\Sigma^{1/2}=OK^{1/2}$ by polar decomposition, with $K=B^\top B$ and an orthogonal extension when $B$ is singular.
Set $L=O^\top\Lambda O\in\mathcal S_\kappa$.
Since $\Lambda^{-1/2}=OL^{-1/2}O^\top$,
\[
 \begin{aligned}
 \Lambda^{-1/2}A\Sigma A^\top\Lambda^{-1/2}
 &=\Lambda^{-1/2}BB^\top\Lambda^{-1/2}\\
 &=O\bigl(L^{-1/2}KL^{-1/2}\bigr)O^\top.
 \end{aligned}
\]
Orthogonal invariance of $\Reg$ and the prediction lower bound in~\eqref{eq:predictionbound} give the reduced lower bound
\begin{equation}
 \begin{aligned}
 \mathcal G_\eta(K,L)
 &=\frac{\eta}{2}\tr(KR_0)
   +\lambda\Reg\bigl(L^{-1/2}KL^{-1/2}\bigr),\\
 &\hspace{2em}K\succeq0,\qquad L\in\mathcal S_\kappa.
 \end{aligned}
 \label{eq:Greduced}
\end{equation}
Conversely, every admissible pair $(K,L)$ with $K\succ0$ and $L\in\mathcal S_\kappa$ is realizable.
Choose an orthogonal $O$ such that $OLO^\top=\Lambda$ is diagonal, set $A=OK^{1/2}\Sigma^{-1/2}$, and use the predictor in~\eqref{eq:bayes}.
This pair attains the reduced bound.
The encoder's orthogonal freedom realizes every orientation in $\mathcal S_\kappa$ with a diagonal output target.

\paragraph{The unique limiting allocation.}
For $b=(\kappa-1)/(\kappa+1)$, every $L\in\mathcal S_\kappa$ has the form
\begin{equation}
 L=I+\begin{pmatrix}s&t\\t&-s\end{pmatrix},
 \qquad s^2+t^2\le b^2.
 \label{eq:disk}
\end{equation}
Its eigenvalues are $1\pm\sqrt{s^2+t^2}$, whose ratio is at most $\kappa$ precisely on this disk.
Moreover,
\[
 \tr(LR_0)=\rho_1+\rho_2-(\rho_2-\rho_1)s.
\]
Since $\rho_2>\rho_1$, the unique minimum over the disk occurs at $s=b,t=0$, giving $L_*=\diag(1+b,1-b)$.

For fixed $\eta>0$, $\mathcal G_\eta$ attains its minimum.
The set $\mathcal S_\kappa$ is compact with a positive lower eigenvalue bound, and the nonnegative prediction term bounds $\tr(KR_0)$ on every sublevel set.
Comparing a minimizing pair $(K_{\eta,T},L_\eta)$ with $(qL_*,L_*)$ gives
\begin{equation}
 \begin{gathered}
 \frac{\eta}{2}\tr(K_{\eta,T}R_0)
 +\lambda\bigl[\Reg(C_\eta)-r_N(q)\bigr]
 \le\frac{\eta q}{2}\tr(L_*R_0),\\
 C_\eta=L_\eta^{-1/2}K_{\eta,T}L_\eta^{-1/2}.
 \end{gathered}
 \label{eq:Gcomparison}
\end{equation}
Both terms on the left are nonnegative, so $\tr(K_{\eta,T}R_0)\le q\tr(L_*R_0)$ uniformly over all minimizers.
The lower eigenvalue bound on $L_\eta$ also bounds $C_\eta$.
The regularizer gap tends to zero, and every cluster point of $C_\eta$ equals $qI$ by Lemma~\ref{lem:finite}.
Hence $C_\eta\to qI$ uniformly over all minimizers.

For any subsequence with $L_\eta\to\bar L$,
\[
 K_{\eta,T}=L_\eta^{1/2}C_\eta L_\eta^{1/2}
 \longrightarrow q\bar L.
\]
The trace bound from~\eqref{eq:Gcomparison} implies $\tr(\bar L R_0)\le\tr(L_*R_0)$.
Uniqueness of the minimizer over the disk in~\eqref{eq:disk} forces $\bar L=L_*$.
Thus
\[
 L_\eta\longrightarrow L_*,
 \qquad K_{\eta,T}\longrightarrow qL_*
\]
uniformly over the minimizing sets.
Every minimizing $K_{\eta,T}$ is therefore positive definite for sufficiently small $\eta$, with
\[
 \lambda_{\min}(K_{\eta,T})>\frac{q(1-b)}{2}.
\]

The positive-definite reduced minimizers can all be realized by an encoder, predictor, and diagonal target, so the full objective attains the reduced minimum.
As in Appendix~\ref{app:obstruction}, every full global minimizer must attain both the reduced minimum and the prediction lower bound.
Its encoder is invertible and its predictor is $p_A$ from~\eqref{eq:bayes}.
The preceding uniform limits therefore hold for the Gram matrices of all full global minimizers.

\paragraph{The limiting action and the improvement range.}
For a positive-definite state metric $M$, the unconstrained minimizing action is
\[
 u(M)=-\frac{g^\top Md}{g^\top Mg},
\]
which is continuous in $M$.
The limiting state metric is
\[
 q\diag\left(
 \frac{1+b}{1+\delta},\frac{1-b}{1-\delta}
 \right),
\]
so
\[
 u_T\longrightarrow\frac{\delta-b}{1-\delta b}
 =\frac{c-\kappa}{c+\kappa},
 \qquad c=\frac{1+\delta}{1-\delta}.
\]
This limit lies strictly in $(-1,1)$ for $\delta,b\in(0,1)$.
The unconstrained action is therefore eventually feasible, uniformly over the minimizing sets.
Positive definiteness makes the quadratic cost strictly convex, so the minimizing action is unique.
The task cost in~\eqref{eq:task} gives the regret limit in~\eqref{eq:targetlimit}.

Relative to any baseline with fixed $\lambda_B>0$, the limiting regret improvement is
\begin{equation}
 \begin{aligned}
 \Delta_{\rm imp}(c,\kappa)
 &=2\left(\frac{c-1}{c+1}\right)^2
   -2\left(\frac{c-\kappa}{c+\kappa}\right)^2\\
 &=\frac{8c(c^2-\kappa)(\kappa-1)}{(c+1)^2(c+\kappa)^2}.
 \end{aligned}
 \label{eq:gain}
\end{equation}
Since $c>1$ and $\kappa>1$, the sign of $\Delta_{\rm imp}$ is the sign of $c^2-\kappa$.
Limiting regret is therefore lower than the baseline for $1<\kappa<c^2$, equal at $\kappa=c^2$, and higher for $\kappa>c^2$.
It vanishes at $\kappa=c$.
Uniform convergence over the minimizing sets of both objectives gives uniform convergence of their regret difference to $\Delta_{\rm imp}$.
When $\Delta_{\rm imp}>0$, choose $\eta$ sufficiently small that the deviation is less than $\Delta_{\rm imp}/2$.
The finite-noise improvement then exceeds $\Delta_{\rm imp}/2$.
At $\kappa=c^2$, higher-order terms determine the finite-noise sign.
\hfill$\square$

The limiting regret depends on $|\log\kappa-\log c|$, so it is symmetric about $\kappa=c$ on the logarithmic scale.

\subsection{A target-coupled rescaling path}
\label{sec:path}

To examine how target anisotropy trades prediction loss against planning regret, we construct a coupled rescaling of an isotropic-target optimum.
The path preserves the standardized features seen by $\Lambda$Reg while changing the encoder's representation geometry.
Let $c_1,c_2$ be the baseline Gram eigenvalues from Appendix~\ref{app:obstruction}.
We use the canonical baseline encoder
\[
 A_B=\diag(\sqrt{c_1},\sqrt{c_2})\Sigma^{-1/2}
\]
and predictor $p_B=p_{A_B}$ from~\eqref{eq:bayes} as the starting pair.
Define the encoder, predictor, and target along the path by
\begin{equation}
 \begin{aligned}
 \Lambda_h&=\diag(1+h,1-h),\qquad
 S_h=\Lambda_h^{1/2},\qquad 0\le h<1,\\
 A_h&=S_hA_B,\qquad
 p_h(z,u)=S_hp_B(S_h^{-1}z,u).
 \end{aligned}
 \label{eq:path}
\end{equation}
Each target has trace two.
Under a fixed condition-number bound $\kappa$, this path belongs to $\mathcal T_{2,\kappa}$ precisely for $0\le h\le b=(\kappa-1)/(\kappa+1)$.
The full interval $0\le h<1$ describes the effect of varying the allowed anisotropy.

Since $\Lambda_h^{-1/2}A_h=A_B$, $\Lambda$Reg is constant along this path: for every batch $X$ and common set of projection directions $\boldsymbol\omega$,
\[
 \widehat{\mathcal R}_{N,\Lambda_h}(A_hX;\boldsymbol\omega)
 =\widehat{\mathcal R}_N(A_BX;\boldsymbol\omega).
\]
For the evaluation state and goal in Appendix~\ref{app:two_dimensional}, let
\[
 u_h=\operatorname*{arg\,min}_{u\in[-1,1]}
 \norm{p_h(A_hx_0,u)-A_hx_g}^2.
\]

\begin{proposition}[A target-coupled rescaling path]
\label{prop:path}
Under the baseline setup of Theorem~\ref{thm:obstruction}, fix $\lambda_B>0$ and assume $\rho_1<\rho_2$.
For all sufficiently small $\eta>0$ and every $0\le h<1$,
\begin{align}
 \Lpred(A_h,p_h)-\Lpred(A_B,p_B)
 &=\frac{\eta h}{2}(c_1\rho_1-c_2\rho_2)<0
 \qquad(h>0),
 \label{eq:pathpred}\\
 \Regret(u_h)
 &=2\left(\frac{u_B-h}{1-u_Bh}\right)^2.
 \label{eq:pathregret}
\end{align}
Along the full path, regret strictly decreases on $[0,u_B]$, reaches zero at $h=u_B$, and exceeds its baseline value when $h>2u_B/(1+u_B^2)$.
At every $h>0$, the expected original isotropic-target regularizer is strictly larger than its baseline value.
Its increase, weighted by $\lambda_B$, exceeds the decrease in expected prediction loss.
\end{proposition}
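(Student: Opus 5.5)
The plan is to compute everything in closed form along the path, reusing the structure already established in the proof of Theorem~\ref{thm:obstruction} (Appendix~\ref{app:obstruction}). Throughout, fix $\eta$ small enough for three facts from that proof to hold: the baseline reduced minimizer $K_\eta=\diag(c_1,c_2)$ is unique, $c_1,c_2>0$, and (under $\rho_1<\rho_2$) $c_1\rho_1-c_2\rho_2<0$ with $u_B\in(0,\delta)\subset(0,1)$.

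\textbf{Step 1 (the path stays at exact prediction).} First I will rewrite the path predictor using $p_B=p_{A_B}$ from~\eqref{eq:bayes}. This gives
\[
 p_h(z,u)=S_hA_BF_\eta A_B^{-1}S_h^{-1}z+S_hA_Bgu=A_hF_\eta A_h^{-1}z+A_hgu=p_{A_h}(z,u).
\]
So $(A_h,p_h)$ is the exact encoded conditional-mean pair, and it attains equality in~\eqref{eq:predictionbound}: $\Lpred(A_h,p_h)=\tfrac\eta2\tr(K_hR_0)$. Here $K_h=B_h^\top B_h$ with $B_h=A_h\Sigma^{1/2}=S_h\diag(\sqrt{c_1},\sqrt{c_2})$, so
\[
 K_h=\diag\bigl((1+h)c_1,(1-h)c_2\bigr).
\]
Subtracting the baseline value $\tfrac\eta2(c_1\rho_1+c_2\rho_2)$ gives~\eqref{eq:pathpred}. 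Its sign for $h>0$ is the inequality $c_1\rho_1-c_2\rho_2<0$ from Appendix~\ref{app:obstruction}.

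\textbf{Step 2 (the selected action and regret).} Exactness of $p_h$ means the latent planning cost is $(d+gu)^\top M_h(d+gu)$. The metric is $M_h=A_h^\top A_h=\Sigma^{-1/2}K_h\Sigma^{-1/2}$, with diagonal entries
\[
 m_1=\frac{(1+h)c_1}{1+\delta},\qquad m_2=\frac{(1-h)c_2}{1-\delta}.
\]
As in Appendix~\ref{app:obstruction}, the unconstrained minimizer is $u_h=(m_2-m_1)/(m_1+m_2)$. To compare with the baseline, write the baseline ratio as $r=m_2^B/m_1^B=(1+u_B)/(1-u_B)$. Then $m_2/m_1=r(1-h)/(1+h)$, and a two-line simplification gives
\[
 u_h=\frac{u_B-h}{1-u_Bh}.
\]
Since $u_B,h\in[0,1)$, this is a Möbius map of the disk, so it lies in $(-1,1)$. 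Hence $u_h$ is feasible and is the unique constrained minimizer by strict convexity. Equation~\eqref{eq:task} then gives~\eqref{eq:pathregret}.

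\textbf{Step 3 (shape of the regret curve).} Differentiating in $h$ gives $\partial_hu_h=(u_B^2-1)/(1-u_Bh)^2<0$. So $u_h$ is strictly decreasing, which makes $|u_h|$ (and hence the regret) strictly decrease on $[0,u_B]$ and vanish at $h=u_B$. Regret exceeds the baseline value $2u_B^2$ exactly when $u_h<-u_B$. That inequality reads $u_B-h<-u_B+u_B^2h$, i.e.\ $h>2u_B/(1+u_B^2)$.

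\textbf{Step 4 (the isotropic regularizer increases).} This is the only step that is not a direct computation, and I expect it to be the main point to argue carefully. I will avoid computing $\Reg$ explicitly and argue by optimality instead. For $h>0$ we have $K_h\neq K_B$ because $c_1,c_2>0$. By the uniqueness of the reduced minimizer in Theorem~\ref{thm:obstruction}, every pair with Gram matrix $K_h$ has strictly larger baseline objective~\eqref{eq:baseline}:
\[
 \Lpred(A_h,p_h)+\lambda_B\Reg(A_h\Sigma A_h^\top)>\Lpred(A_B,p_B)+\lambda_B\Reg(A_B\Sigma A_B^\top).
\]
Step 1 shows the prediction term strictly decreased. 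Rearranging, $\lambda_B$ times the regularizer increase strictly exceeds the prediction decrease, and in particular the increase is positive. Here $\Reg(A\Sigma A^\top)=\Reg(K)$ by the orthogonal invariance used in Appendix~\ref{app:reduction}. The only care needed is to make sure the uniqueness statement is invoked for the same small-$\eta$ regime fixed at the start.
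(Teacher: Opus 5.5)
Your proposal is correct and follows the paper's proof essentially step by step. You use exactness of the path predictor to get $\tfrac{\eta}{2}\operatorname{tr}(K_hR_0)$ with $K_h=\operatorname{diag}((1+h)c_1,(1-h)c_2)$, rescale the diagonal metric entries to obtain $u_h=(u_B-h)/(1-u_Bh)$ with negative derivative, and derive the regularizer claim from uniqueness of the reduced minimizer $K_\eta$ of $\mathcal F_\eta$ together with the strict prediction decrease. Your explicit check that $p_h=p_{A_h}$ and the ratio-based simplification of $u_h$ are only cosmetic variations on the paper's argument.
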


Increasing $h$ continues to reduce latent prediction loss after regret reaches zero.
Each predictor along the path recovers the encoded conditional mean, so the change in~\eqref{eq:pathpred} comes from the encoder's weighting of process noise.
Proposition~\ref{prop:path} compares objectives and planning costs along this prescribed path in parameter space.
Theorem~\ref{thm:bounded} identifies the limiting geometry selected by the constrained joint objective.

\begin{proof}[Proof of Proposition~\ref{prop:path}]
The baseline predictor $p_B$ recovers the encoded conditional mean.
The definitions of $A_h$ and $p_h$ apply $S_h$ to its prediction and target, giving
\[
 \begin{aligned}
 \Lpred(A_h,p_h)
 &=\frac12\tr(A_hW_\eta A_h^\top)\\
 &=\frac{\eta}{2}
 \bigl[(1+h)c_1\rho_1+(1-h)c_2\rho_2\bigr].
 \end{aligned}
\]
Subtracting the baseline loss proves~\eqref{eq:pathpred}.
Appendix~\ref{app:obstruction} establishes $c_1\rho_1-c_2\rho_2<0$ for sufficiently small positive $\eta$.

Write the baseline state-metric entries as $m_1=c_1/(1+\delta)$ and $m_2=c_2/(1-\delta)$, so that $u_B=(m_2-m_1)/(m_1+m_2)$.
Along the path, these entries become $(1+h)m_1$ and $(1-h)m_2$.
The minimizing action is therefore
\[
 u_h=
 \frac{(1-h)m_2-(1+h)m_1}{(1+h)m_1+(1-h)m_2}
 =\frac{u_B-h}{1-u_Bh}.
\]
For $0\le h<1$, it lies strictly in $(-1,1)$ and satisfies
\[
 \frac{du_h}{dh}
 =-\frac{1-u_B^2}{(1-u_Bh)^2}<0.
\]
Thus $u_h^2$ strictly decreases until $h=u_B$ and strictly increases thereafter.
Since $\Regret(u)=2u^2$, this gives \eqref{eq:pathregret}.
Solving $u_h^2<u_B^2$ yields
\[
 0<h<\frac{2u_B}{1+u_B^2},
\]
with equality at $h=0$ and at the upper threshold.

The unstandardized Gram matrix along the path is
\[
 K_h=\diag\bigl((1+h)c_1,(1-h)c_2\bigr)\ne K_\eta
 \qquad(h>0).
\]
Uniqueness of the minimizer of~\eqref{eq:Freduced} implies
\[
 \begin{aligned}
 0&<\mathcal F_\eta(K_h)-\mathcal F_\eta(K_\eta)\\
  &=\Delta\Lpred
    +\lambda_B\bigl[\Reg(K_h)-\Reg(K_\eta)\bigr],
 \end{aligned}
\]
where $\Delta\Lpred=\Lpred(A_h,p_h)-\Lpred(A_B,p_B)<0$.
Consequently, the original isotropic-target regularizer is larger than its baseline value, and its weighted increase exceeds $-\Delta\Lpred$.
\end{proof}

\subsection{Spectral bounds for admissible targets}

\begin{corollary}[Spectral bounds for admissible targets]
\label{cor:boundaries}
Let $D\ge2$ and $\kappa\ge1$.
Every target $\Lambda=\diag(v_1,\ldots,v_D)\in\mathcal T_{D,\kappa}$ satisfies
\begin{equation}
 \begin{aligned}
 \frac{D}{1+(D-1)\kappa}
 &\le v_i\le\frac{D\kappa}{\kappa+D-1},\\
 d_{\mathrm{eff}}(\Lambda)
 &:=\frac{(\tr\Lambda)^2}{\tr(\Lambda^2)}
 \ge\frac{4\kappa}{(\kappa+1)^2}D.
 \end{aligned}
 \label{eq:dim}
\end{equation}
\end{corollary}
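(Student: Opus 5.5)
The plan is to argue directly from the two constraints defining $\mathcal T_{D,\kappa}$: the trace condition $\sum_i v_i=D$ and the condition-number bound $v_{\max}\le\kappa v_{\min}$. Write $m=\min_j v_j$ and $M=\max_j v_j$.

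\textbf{Upper bound on $v_i$.} Fix any index $i$. Every other coordinate satisfies $v_j\ge m\ge v_i/\kappa$. Substituting into the trace condition gives
\[
D=v_i+\sum_{j\ne i}v_j\ge v_i+(D-1)v_i/\kappa ,
\]
which rearranges to $v_i\le D\kappa/(\kappa+D-1)$.

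\textbf{Lower bound on $v_i$.} Symmetrically, every other coordinate satisfies $v_j\le M\le\kappa v_i$. Then
\[
D\le v_i+(D-1)\kappa v_i ,
\]
which gives $v_i\ge D/(1+(D-1)\kappa)$. Both bounds are attained by two-level spectra, with one coordinate at an extreme and the rest at the other, so they are tight. Tightness is not required for the statement, however.

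\textbf{Effective-dimension bound.} This is the only step requiring a real inequality. The goal is $\tr(\Lambda^2)\le\frac{(\kappa+1)^2}{4\kappa}\,\frac{(\tr\Lambda)^2}{D}$.

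1. Since each $v_i\in[m,M]$, we have $(v_i-m)(M-v_i)\le0$ for every $i$.
2. Summing over $i$ yields $\sum_i v_i^2\le (m+M)\sum_i v_i-DmM$. This is the Pólya–Szegő/Kantorovich mechanism.
3. Set $S=\sum_i v_i=D$, so the right-hand side is $(m+M)S-DmM$.
4. By AM–GM, $(m+M)S\le \frac{S^2}{D}\cdot\frac{(m+M)^2}{4mM}+DmM$. This reduces to $\bigl(\tfrac{(m+M)S}{2\sqrt{DmM}}-\sqrt{DmM}\bigr)^2\ge0$.
5. Combining steps 2–4 gives $\tr\Lambda^2\le\frac{(m+M)^2}{4mM}\frac{S^2}{D}$.
6. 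The function $r\mapsto (1+r)^2/(4r)$ is increasing for $r\ge1$, and $r=M/m\le\kappa$. Therefore $(m+M)^2/(4mM)\le(\kappa+1)^2/(4\kappa)$.

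This yields $d_{\rm eff}(\Lambda)\ge \frac{4\kappa}{(\kappa+1)^2}D$.

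The main obstacle is steps 2–4, namely landing exactly on the Kantorovich constant rather than a weaker bound. A crude estimate such as $\tr\Lambda^2\le DM^2$ combined with the upper bound on $v_i$ gives a worse constant. The pairing $(v_i-m)(M-v_i)\le0$ followed by AM–GM is the intended route, and the case $\kappa=1$ should be checked separately: there all $v_i=1$, and the bound holds with equality.
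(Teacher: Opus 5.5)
Your proof is correct and follows essentially the paper's argument. The eigenvalue bounds come from the same trace and condition-number substitution, and the effective-dimension bound comes from summing the same interval inequality; the paper uses the endpoint $\kappa v_{\min}$ and completes the square in $v_{\min}$, where you keep $[m,M]$ and finish with AM--GM plus monotonicity of $(1+r)^2/(4r)$. One slip: in step~1 the product $(v_i-m)(M-v_i)$ is $\ge 0$, not $\le 0$, and the $\ge 0$ version is what gives the inequality you correctly state in step~2.
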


A fixed condition-number bound gives a lower bound on target effective dimension; at $\kappa=2$, the bound is $8D/9$.
This bound quantifies variance concentration in the target covariance; the covariance of a trained neural representation also depends on the encoder.
In two dimensions, trace two and the variance floor $v_i\ge2/(\kappa+1)$ describe the same feasible spectra as the condition-number constraint.

\begin{proof}
Write $v_{\min}=\min_i v_i$ and $v_{\max}=\max_i v_i$.
The trace constraint and $v_{\max}\le\kappa v_{\min}$ imply
\[
 \begin{aligned}
 D=\sum_i v_i&\le v_{\min}+(D-1)\kappa v_{\min},\\
 D=\sum_i v_i&\ge v_{\max}+(D-1)v_{\max}/\kappa.
 \end{aligned}
\]
Thus $v_{\min}\ge D/[1+(D-1)\kappa]$ and $v_{\max}\le D\kappa/(\kappa+D-1)$, giving the eigenvalue bounds.
Also, $v_i\in[v_{\min},\kappa v_{\min}]$ implies
\[
 (v_i-v_{\min})(\kappa v_{\min}-v_i)\ge0,
 \qquad
 v_i^2\le(\kappa+1)v_{\min}v_i-\kappa v_{\min}^2.
\]
Summing and completing the square yields
\[
 \sum_i v_i^2
 \le D\bigl[(\kappa+1)v_{\min}-\kappa v_{\min}^2\bigr]
 \le\frac{D(\kappa+1)^2}{4\kappa}.
\]
Since $\sum_i v_i=D$, the effective-dimension bound follows.
For $\kappa>1$, equality requires all eigenvalues to be in $\{v_{\min},\kappa v_{\min}\}$ and $v_{\min}=(\kappa+1)/(2\kappa)$.
The trace then requires exactly $D/(\kappa+1)$ upper eigenvalues.
Thus equality holds precisely when this number is an integer and the spectrum has that multiplicity.
For $D=2$ and trace two, the eigenvalues are $v_{\min}$ and $2-v_{\min}$, so
\[
 \cond(\Lambda)\le\kappa
 \quad\Longleftrightarrow\quad
 \frac{2-v_{\min}}{v_{\min}}\le\kappa
 \quad\Longleftrightarrow\quad
 v_{\min}\ge\frac{2}{\kappa+1}.
\]
\end{proof}

\phantomsection
\label{app:target_parameterization}
The logit parameterization in Section~\ref{sec:joint_training} covers every admissible target.
For any $\Lambda=\diag(v_1,\ldots,v_D)\in\mathcal T_{D,\kappa}$, let $v_{\min}=\min_i v_i$ and $v_{\max}=\max_i v_i$.
Choose $\alpha_i=\log v_i-(\log v_{\max}+\log v_{\min})/2$.
Since $\log v_{\max}-\log v_{\min}\le\log\kappa$, these logits lie in $[-\tfrac12\log\kappa,\tfrac12\log\kappa]$.
The common shift cancels in the normalized exponential, and $\sum_i v_i=D$ gives $D e^{\alpha_i}/\sum_j e^{\alpha_j}=v_i$.

\begin{samepage}
\subsection{Scope of the analysis}
\label{app:scope}

Theorem~\ref{thm:obstruction} holds for any $\rho_1,\rho_2>0$.
Theorem~\ref{thm:bounded} assumes $\rho_1<\rho_2$: minimizing $\tr(LR_0)$ then assigns the larger target variance to the direction with larger state variance.
For $\rho_1>\rho_2$, the trace over the disk in~\eqref{eq:disk} is minimized at $s=-b,t=0$.
The same compactness and localization argument gives
\[
 K_{\eta,T}\longrightarrow q\diag(1-b,1+b),\qquad
 u_T\longrightarrow\frac{\delta+b}{1+\delta b}
 =\frac{c\kappa-1}{c\kappa+1}.
\]
Since
\[
 \frac{\delta+b}{1+\delta b}-\delta
 =\frac{b(1-\delta^2)}{1+\delta b}>0,
\]
every $\kappa>1$ gives strictly higher limiting regret than the baseline.
If $\rho_1=\rho_2$, all target orientations tie in the leading trace term.
\end{samepage}

\paragraph{Two-dimensional target spectrum.}
In two dimensions, the target spectrum is fully determined by $\kappa$ up to permutation. 
In higher dimensions, Proposition~4 additionally selects the multiplicity of the larger target variance as a function of the predictive structure.

For the same control family with diagonal task metric $Q=\diag(\gamma_1,\gamma_2)\succ0$, set $\kappa_Q=c\gamma_1/\gamma_2$.
Under $\rho_1<\rho_2$, if $\kappa_Q>1$, choosing $\kappa=\kappa_Q$ in~\eqref{eq:targetlimit} gives
\[
 M_{\eta,T}\longrightarrow\frac{2q}{\tr(\Sigma Q)}Q.
\]
Indeed, the limiting target allocation is $2\Sigma Q/\tr(\Sigma Q)$.
The planner therefore attains zero limiting regret for the $Q$-weighted task cost.
The case $Q=I$ recovers $\kappa_Q=c$.
This calibration uses the state covariance and task metric.
Under a physical-state coordinate change $\bar x=Tx$, the same physical cost has metric $T^{-\top}QT^{-1}$.

The asymptotic comparisons above hold the regularization weights fixed and strictly positive as $\eta \downarrow 0$.
Allowing the regularization weights to scale with $\eta$ defines a different asymptotic regime.
\section{Prediction-Driven Target Spectrum Selection}
\label{app:adaptive_spectrum}

This appendix characterizes the target spectrum selected by the joint prediction--regularization objective.
We first establish its structure in arbitrary dimension and identify the role of the state-whitened noise covariance.
We then give two three-dimensional training distributions that select different variance allocations under the same trace and condition-number constraints.
As in Appendix~\ref{app:targetfamily}, $A$ and $p$ denote the encoder and predictor in the original latent coordinates.
We use the prediction-loss normalization $\tfrac12\E\norm{p(Ax,u)-Ax'}^2$ throughout.
For dimension $D$, multiplying the coordinate-averaged implementation objective by $D/2$ gives this convention with a correspondingly rescaled positive regularization weight; this does not change the global minimizers or planning costs.

\subsection{Spectrum selection in arbitrary dimension}
\label{app:general_spectrum}

Target learning selects both the variance values and the number of directions assigned to each value.
We characterize this selection in arbitrary dimension before applying it to the two control models.
For $D\ge2$, $\kappa>1$, and a positive-definite state-whitened noise covariance $R$, define
\begin{equation}
 \begin{aligned}
 \mathcal S_{D,\kappa}
 &=\{L\succ0:\tr L=D,\ \cond(L)\le\kappa\},\\
 \mathcal G_\eta(K,L)
 &=\tfrac\eta2\tr(KR)
   +\lambda\Reg(L^{-1/2}KL^{-1/2}),
 \qquad K\succeq0,\quad L\in\mathcal S_{D,\kappa}.
 \end{aligned}
 \label{eq:general_reduction}
\end{equation}
Let $0<r_1\le\cdots\le r_D$ be the eigenvalues of $R$, and set
\begin{equation}
 a_m=\frac{D}{D+m(\kappa-1)},\qquad
 \phi_m=a_m\left[\tr R+(\kappa-1)\sum_{i=1}^m r_i\right],
 \quad 1\le m<D.
\label{eq:rank_selector}
\end{equation}
Here $a_m$ and $\kappa a_m$ are the lower and upper target variances, and $\phi_m$ is the trace cost $\tr(LR)$ when the upper variance is assigned to the $m$ lowest-noise directions.

\begin{proposition}[Structure of optimal target spectra]
\label{prop:target_spectrum}
Assume the statistic conditions of Lemma~\ref{lem:finite}, and fix $D\ge2$, $R\succ0$, $\kappa>1$, and $\lambda>0$.
For all sufficiently small $\eta>0$, every global minimizer of \eqref{eq:general_reduction} has a target geometry of the form
\[
 L_\eta=a_m\bigl[I+(\kappa-1)P\bigr],
 \qquad P=P^\top=P^2,\quad \operatorname{rank}P=m,
 \quad 1\le m<D.
\]
Its spectrum has two values, $\kappa a_m$ and $a_m$, and its condition number equals $\kappa$.
Moreover, $L_\eta^{-1/2}K_\eta L_\eta^{-1/2}\to qI$ uniformly over the global minimizers as $\eta\downarrow0$.

For every sequence $\eta_n\downarrow0$ and choice of global minimizers $(K_n,L_n)$, each accumulation point of $L_n$ minimizes $\tr(LR)$ on $\mathcal S_{D,\kappa}$.
Its multiplicity $m$ minimizes \eqref{eq:rank_selector}, and its larger-variance eigenspace is a span of $m$ lowest-noise eigendirections of $R$.
If $\phi_m$ has a unique minimizing index $m_*$ and $r_{m_*}<r_{m_*+1}$, the target geometries converge to the unique limit in any matrix norm, uniformly over the global minimizers.
\end{proposition}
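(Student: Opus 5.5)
The plan is to eliminate $K$ first and then show that the reduced objective, viewed as a function of $L$ alone, is \emph{strictly concave}. Once that is established, minimizers must sit at extreme points of $\mathcal S_{D,\kappa}$, and the extreme points are exactly the two-level spectra $a_m[I+(\kappa-1)P]$.

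\textbf{Concave profile in $L$.} For fixed $L\in\mathcal S_{D,\kappa}$, the substitution $K=L^{1/2}CL^{1/2}$ is a bijection of the positive-semidefinite cone onto itself. It turns \eqref{eq:general_reduction} into
\[
 \tfrac\eta2\tr\bigl(C\,X_L\bigr)+\lambda\Reg(C),
 \qquad X_L=L^{1/2}RL^{1/2}.
\]
Because $\Reg$ is orthogonally invariant, we may replace $C$ by $UCU^\top$ for any orthogonal $U$, and replace $X_L$ by any matrix with the same spectrum. Since $X_L$ has the same spectrum as $Y_L=R^{1/2}LR^{1/2}$, the profile $h(L)=\min_K\mathcal G_\eta(K,L)$ equals
\[
 \psi(Y_L),\qquad \psi(Y)=\min_{C\succeq0}\Bigl[\tfrac\eta2\tr(CY)+\lambda\Reg(C)\Bigr].
\]
The map $L\mapsto Y_L$ is linear, and $\psi$ is a pointwise infimum of affine functions of $Y$. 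Hence $h$ is concave on $\mathcal S_{D,\kappa}$.

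\textbf{Strict concavity.} First I will localize, exactly as in the proof of Theorem~\ref{thm:bounded}. Comparing any minimizer with $(qL,L)$ bounds the prediction term and forces $\Reg(C_\eta)-r_N(q)=O(\eta)$. Lemma~\ref{lem:finite} then gives $C_\eta\to qI$ uniformly, which is the claimed convergence $L_\eta^{-1/2}K_\eta L_\eta^{-1/2}\to qI$. The same comparison applies to the inner problem uniformly over the compact set $\{Y_L: L\in\mathcal S_{D,\kappa}\}$.

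Near $qI$ the Hessian \eqref{eq:hessian} is positive definite. The implicit function theorem therefore yields a unique smooth minimizer $C(Y)$ of the inner problem, and by the envelope theorem
\[
 \nabla\psi(Y)=\tfrac\eta2 C(Y),\qquad
 D^2\psi(Y)=-\tfrac{\eta^2}{4\lambda}\bigl[D^2\Reg(C(Y))\bigr]^{-1}\prec0 .
\]
So $\psi$ is strictly concave on this region. Because $L\mapsto Y_L$ is injective ($R\succ0$), $h$ is strictly concave on $\mathcal S_{D,\kappa}$.

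\textbf{Extreme points.} A strictly concave function on a compact convex set attains its minima only at extreme points. Indeed, a nonextreme minimizer lies in the interior of a segment of $\mathcal S_{D,\kappa}$, and strict concavity would make one endpoint strictly smaller. It remains to identify the extreme points. If $L$ has eigenvalues $v_i$ with $\max_i v_i\le\kappa\min_i v_i$ and some eigenvalue lies strictly between $v_{\min}$ and $\kappa v_{\min}$, there are two ways to perturb. One is to move that eigenvalue while compensating the minimal block so as to keep the trace. The other, if all eigenvalues already lie in $\{v_{\min},\kappa v_{\min}\}$ but the ratio is less than $\kappa$, is to rescale the two blocks in opposite directions. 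Either perturbation stays in $\mathcal S_{D,\kappa}$ in both directions, so $L$ is not extreme.

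Hence every extreme point has spectrum $\{a,\kappa a\}$ with multiplicities $m$ and $D-m$, where $1\le m<D$. The trace constraint then gives $a=a_m$, and $L=a_m[I+(\kappa-1)P]$ with $P$ a rank-$m$ orthogonal projection.

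\textbf{Limit selection.} Along minimizers, $K_n=qL_n+o(1)$. The comparison inequality gives $\tr(\bar LR)\le\tr(LR)$ for every accumulation point $\bar L$ and every admissible $L$, so $\bar L$ minimizes $\tr(LR)$ over $\mathcal S_{D,\kappa}$. By the previous step $\bar L$ is two-level, since that set is closed. For a fixed $m$, Ky Fan's inequality gives
\[
 \tr(LR)=a_m\tr R+a_m(\kappa-1)\tr(PR)\ge\phi_m,
\]
with equality exactly when the range of $P$ is spanned by $m$ lowest-noise eigenvectors of $R$. Thus $m$ minimizes $\phi_m$. If $m_*$ is unique and $r_{m_*}<r_{m_*+1}$, that eigenspace is unique. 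The limit is then unique, and the usual subsequence-contradiction argument gives uniform convergence over the global minimizers.

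\textbf{Expected obstacle.} The main obstacle is the strict-concavity step. It must hold \emph{uniformly} on all of $\mathcal S_{D,\kappa}$ for a single small $\eta$, not only near the eventual limit. The localization of $C(Y)$ near $qI$ therefore has to be uniform in $Y$ over the compact image of $\mathcal S_{D,\kappa}$, before the implicit function theorem is applied.
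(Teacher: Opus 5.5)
Your reduction to $\psi(Y_L)$, the uniform localization, and the limit-selection step via Ky Fan are sound. Your envelope and implicit-function route to strict concavity is also valid. It differs from the paper, which avoids second derivatives by contradiction: equality in the concavity inequality would force a single interior minimizer to satisfy $\lambda\nabla\Reg(C)+\tfrac\eta2 Y_i=0$ for two distinct $Y_i$.

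\textbf{The gap.} It lies in the identification of extreme points, on the face $\cond(L)=\kappa$, which is exactly where the minimizers live. Take $L$ with an eigenvalue equal to $\kappa v_{\min}$ and an intermediate eigenvalue $v_j\in(v_{\min},\kappa v_{\min})$. Your first perturbation moves $v_j$ by $\pm\epsilon$ and compensates on the minimal block. In the direction where $v_j$ increases, $v_{\min}$ decreases while the top eigenvalue stays at $\kappa v_{\min}^{\mathrm{old}}$, so the condition number exceeds $\kappa$. The perturbation is therefore one-sided and does not show $L$ is nonextreme.

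Concretely, take $D=3$ and $\kappa=2$. The spectrum $(18,12,9)/13$ has trace $3$ and condition number $2$. It equals $\tfrac{8}{13}(3/2,3/4,3/4)+\tfrac{5}{13}(6/5,6/5,3/5)$ in a common eigenbasis, so it is not extreme, but your perturbation cannot exhibit this. Your second perturbation, as phrased, only covers $L=v_{\min}I$: all eigenvalues in $\{v_{\min},\kappa v_{\min}\}$ with ratio below $\kappa$ forces that. For $\cond(L)<\kappa$ it is cleaner to say that every small traceless symmetric perturbation remains feasible.

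\textbf{The fix.} The missing idea is a perturbation that keeps the ratio of the two extreme blocks fixed. In a common eigenbasis, add $\kappa\epsilon$ to every eigenvalue equal to $\kappa v_{\min}$ and add $\epsilon$ to every eigenvalue equal to $v_{\min}$. Then distribute the opposite total change over the intermediate eigenvalues. For both signs of small $\epsilon$, the top-to-bottom ratio stays exactly $\kappa$, the intermediate eigenvalues stay strictly inside, and the trace is preserved. In the example this is $(18/13+2\epsilon,\,12/13-3\epsilon,\,9/13+\epsilon)$.

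With this correction, every matrix not of the two-level form is nonextreme, and your strict-concavity argument then gives the two-level form. You do not need the converse, that every two-level matrix is extreme. You should also record that $\mathcal S_{D,\kappa}$ is convex, for example because $\lambda_{\max}(L)-\kappa\lambda_{\min}(L)$ is a convex function of $L$. Both the extreme-point reasoning and the passage of strict concavity to segments of $\mathcal S_{D,\kappa}$ rely on it.
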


\begin{proof}
The set $\mathcal S_{D,\kappa}$ is compact by the eigenvalue bounds in~\eqref{eq:dim}.
It is convex because $\lambda_{\max}(L)-\kappa\lambda_{\min}(L)$ is a convex function of the symmetric matrix $L$.
For each fixed $\eta>0$, the joint objective is coercive in $K$ uniformly over $L$.
Compactness of the target set therefore gives a global minimum.
For an effective noise covariance $Y\succ0$, define the value obtained after optimizing the standardized feature covariance:
\[
 \Phi_\eta(Y)=\min_{C\succeq0}
 \left\{\lambda\Reg(C)+\tfrac\eta2\tr(CY)\right\}.
\]
The minimum exists by continuity and coercivity.
Substituting $K=L^{1/2}CL^{1/2}$ in~\eqref{eq:general_reduction} gives the coefficient $L^{1/2}RL^{1/2}$ in the trace.
This matrix and $R^{1/2}LR^{1/2}$ have the same eigenvalues.
Orthogonal invariance of $\Reg$ therefore gives
\begin{equation}
 \min_{K\succeq0}\mathcal G_\eta(K,L)
 =\Phi_\eta(R^{1/2}LR^{1/2}).
 \label{eq:target_envelope}
\end{equation}

The matrices $Y=R^{1/2}LR^{1/2}$ range over a compact convex set with a uniform positive lower eigenvalue bound.
Comparing any minimizer $C$ in the definition of $\Phi_\eta(Y)$ with $qI$ gives
\[
 \tfrac\eta2\tr(CY)
 +\lambda[\Reg(C)-\Reg(qI)]
 \le\tfrac{\eta q}{2}\tr Y.
\]
Both left-hand terms are nonnegative.
This bounds $C$ uniformly and makes its regularizer gap tend uniformly to zero.
By compactness and Lemma~\ref{lem:finite}, all such minimizers approach $qI$ uniformly and are positive definite for sufficiently small $\eta>0$.
Orthogonal conjugacy transfers this convergence to the standardized covariance $L^{-1/2}KL^{-1/2}$ in the original reduced objective.

As an infimum of affine functions of $Y$, $\Phi_\eta$ is concave.
It is strictly concave on the displayed set for small positive $\eta$.
Indeed, equality in the concavity inequality for distinct $Y_0,Y_1$ would force a minimizer at their proper convex combination to minimize both endpoint objectives.
This minimizer is interior, so its two stationarity equations give
\[
 \lambda\nabla_C\Reg(C)+\tfrac\eta2Y_i=0,
 \qquad i=0,1,
\]
contradicting $Y_0\ne Y_1$.
The map $L\mapsto R^{1/2}LR^{1/2}$ is linear and injective.
Hence~\eqref{eq:target_envelope} is strictly concave in $L$, and every minimizing $L$ is an extreme point of $\mathcal S_{D,\kappa}$.

We now characterize those extreme points.
If $\cond(L)<\kappa$, small traceless symmetric perturbations of both signs remain feasible, so $L$ is not extreme.
Suppose $\cond(L)=\kappa$, with minimum eigenvalue $a$.
If any eigenvalue lies strictly between $a$ and $\kappa a$, perturb all upper endpoint eigenvalues by $\kappa\epsilon$ and all lower endpoint eigenvalues by $\epsilon$.
Distribute the opposite total perturbation among the intermediate eigenvalues to preserve the trace.
Both signs of sufficiently small $\epsilon$ remain feasible.
Such an $L$ is again nonextreme.

The remaining matrices have the stated form $L=a_m[I+(\kappa-1)P]$.
To show that each is extreme, suppose $L=tX+(1-t)Y$ with $0<t<1$ and $X,Y\in\mathcal S_{D,\kappa}$.
For unit vectors $e_+\in\operatorname{ran}P$ and $e_-\in\ker P$,
\[
 e_+^\top X e_+\le\lambda_{\max}(X)
 \le\kappa\lambda_{\min}(X)\le\kappa e_-^\top X e_-,
\]
and the same inequalities hold for $Y$.
Since $e_+^\top L e_+=\kappa e_-^\top L e_-$, equality holds throughout both chains.
Thus the two subspaces are respectively maximal and minimal eigenspaces of $X$ and $Y$.
The trace condition forces $X=Y=L$.

For any trace minimizer $L_*$, comparison of a global minimizer $(K_\eta,L_\eta)$ with $(qL_*,L_*)$ gives $\tr(K_\eta R)\le q\tr(L_*R)$.
Along any convergent subsequence $L_\eta\to\bar L$, the uniform covariance limit gives $K_\eta\to q\bar L$.
Hence $\bar L$ minimizes $\tr(LR)$.
The extreme-point set is a finite union of compact projection orbits, so $\bar L$ has the same two-value spectrum.
The rearrangement inequality places the larger eigenvalues along the $m$ smallest eigenvalues of $R$, giving $\phi_m$.
A unique minimizing index and the corresponding eigengap make the spectral projector, and therefore $\bar L$, unique.
\end{proof}

For a square linear encoder with training-state covariance $\Sigma\succ0$, write $B_\eta=A_\eta\Sigma^{1/2}=O_\eta K_\eta^{1/2}$ and $L_\eta=O_\eta^\top\Lambda_\eta O_\eta$, where $O_\eta$ is orthogonal.
This rotation transfers the spectrum of $L_\eta$ to the diagonal output target $\Lambda_\eta$.
At the global optima in the proposition, $K_\eta-qL_\eta\to0$, so the feature covariance satisfies $A_\eta\Sigma A_\eta^\top-q\Lambda_\eta\to0$.
Its finite-noise spectrum may have more than two distinct eigenvalues.
The selector $\phi_m$ depends on the relative magnitudes of the noise eigenvalues.

\paragraph{Selection among tied spectra.}
For fixed $D$, $R\succ0$, $\lambda>0$, and $\kappa>1$, uniform localization and the positive-definite Hessian give a common smooth minimizer branch in $\eta Y$ by the implicit-function theorem.
Expanding the optimum value gives, uniformly for $L\in\mathcal S_{D,\kappa}$,
\begin{equation}
 \begin{aligned}
 \Phi_\eta(Y)
 &=\lambda\Reg(qI)+\frac{\eta q}{2}\tr Y\\
 &\quad-\frac{\eta^2D}{16\lambda c_N}
 \left[(D+2)\tr(Y^2)-(\tr Y)^2\right]+O(\eta^3),
 \end{aligned}
 \label{eq:second_order_envelope}
\end{equation}
where $Y=R^{1/2}LR^{1/2}$ and $c_N=r_N''(q)>0$.

Write $\phi(L)=\tr(LR)$ and $\psi(L)=\tr((LR)^2)=\tr(Y^2)$.
Every target limit $\bar L$ minimizes $\phi$.
Compare a convergent sequence of global minimizers with any minimizer $L_*$ of $\phi$ and divide the value inequality by $\eta^2$.
The nonnegative first-order gap and uniform remainder give $\psi(\bar L)\ge \psi(L_*)$.
Thus its two-level rank maximizes
\begin{equation}
 \psi_m=a_m^2\left[\kappa^2\sum_{i=1}^m r_i^2
                  +\sum_{i=m+1}^D r_i^2\right]
 \label{eq:second_order_selector}
\end{equation}
among indices tied for the smallest $\phi_m$.
A unique maximizing index $m$ and $r_m<r_{m+1}$ give a unique target limit.
Remaining ties in $\psi_m$ depend on higher-order terms.
For $D=3$, $\kappa=2$, and $R=\diag(1,2,4)$, $\phi_1=\phi_2=6$ and $\psi_1=27/2>324/25=\psi_2$, so every optimum at sufficiently small positive noise uses the $m=1$ branch.

\subsection{Two training distributions with different selected spectra}
\label{app:two_spectrum_models}

Let $j\in\{1,2\}$ index the models.
Define
\begin{equation}
 \begin{aligned}
 R_1&=\diag(1,10,11),&
 \Sigma_1&=\diag(3/2,3/4,3/4),\\
 R_2&=\diag(1,2,10),&
 \Sigma_2&=\diag(6/5,6/5,3/5),\\
 g_1&=(1,1,0)^\top,& d_1&=(1,-1,0)^\top,\\
 g_2&=(0,1,1)^\top,& d_2&=(0,1,-1)^\top.
 \end{aligned}
 \label{eq:two_models}
\end{equation}
Fix $\tau^2=1/10$ and set
\[
 W_{j,\eta}=\eta\Sigma_j^{1/2}R_j\Sigma_j^{1/2},\qquad
 F_{j,\eta}=(\Sigma_j-\tau^2g_jg_j^\top-W_{j,\eta})^{1/2}
             \Sigma_j^{-1/2}.
\]
Since $g_1^\top\Sigma_1^{-1}g_1=2$ and $g_2^\top\Sigma_2^{-1}g_2=5/2$, the matrix inside the square root is positive definite for sufficiently small $\eta>0$.
Thus $F_{j,\eta}$ is invertible.
Draw independent reset tuples
\[
 x\sim\mathcal N(0,\Sigma_j),\quad
 u\sim\mathcal N(0,\tau^2),\quad
 \xi\sim\mathcal N(0,W_{j,\eta}),\qquad
 x'=F_{j,\eta}x+g_ju+\xi.
\]
The marginal covariance of $x'$ is also $\Sigma_j$.
We use independent tuples within each batch and the same finite-statistic assumptions as Lemma~\ref{lem:finite}, which applies in dimension three.

We optimize an unrestricted encoder $A\in\R^{3\times3}$ and a predictor $p$ that is linear in $(Ax,u)$.
For any fixed $\lambda>0$, their joint training objective with the target is
\begin{equation}
 \mathcal L_{j,\eta}(A,p,\Lambda)
 =\tfrac12\E\norm{p(Ax,u)-Ax'}^2
  +\lambda\Reg\bigl(\Lambda^{-1/2}A\Sigma_jA^\top
                         \Lambda^{-1/2}\bigr),
 \quad \Lambda\in\mathcal T_{3,2}.
 \label{eq:three_dimensional_objective}
\end{equation}

For evaluation, use $x_0=F_{j,\eta}^{-1}d_j$, goal $x_g=0$, and $u\in[-1,1]$.
Both physical costs satisfy
\begin{equation}
 J_{*,j}(u)=\E\norm{d_j+g_ju+\xi}^2
          =2+2u^2+\tr W_{j,\eta},\qquad
 \Regret_j(u)=2u^2.
 \label{eq:three_dimensional_task}
\end{equation}
The task-optimal action is zero; the task cost is used only for evaluation.

\begin{proposition}[Distribution-dependent selected spectra]
\label{prop:adaptive_spectrum}
For the two three-dimensional models above, fix $\kappa=2$ and strictly positive regularization weights, held constant as $\eta\downarrow0$.
At global training optima, the target geometries in state-whitened coordinates converge to
\[
 L_1^*=\Sigma_1=\diag(3/2,3/4,3/4),\qquad
 L_2^*=\Sigma_2=\diag(6/5,6/5,3/5).
\]
The limiting target spectrum therefore assigns the larger variance to one direction in the first model and two directions in the second, despite the identical ordering of their noise eigenvalues.
Every global optimum is noncollapsed for sufficiently small positive noise, and its state-space metric converges to $qI$.
The exact Euclidean planner consequently has zero limiting task regret in each model.
These limits hold uniformly over global minimizers as $\eta\downarrow0$.
\end{proposition}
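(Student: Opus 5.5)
The plan is to reduce both models to the arbitrary-dimensional Proposition~\ref{prop:target_spectrum} and then read off the selected spectra from the rank selector $\phi_m$ in \eqref{eq:rank_selector}.

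\textbf{Reduction.} First I verify that the three-dimensional objective \eqref{eq:three_dimensional_objective} reduces to $\mathcal G_\eta$ in \eqref{eq:general_reduction} with $R=R_j$. By construction $W_{j,\eta}=\eta\Sigma_j^{1/2}R_j\Sigma_j^{1/2}$, so the state-whitened noise covariance is exactly $\eta R_j$. The argument of Appendix~\ref{app:reduction} carries over verbatim to dimension three:
\begin{itemize}
\item the prediction lower bound $\tfrac\eta2\tr(KR_j)$ holds for every encoder, including singular ones;
\item it is attained by the encoded conditional-mean predictor $p_A$ whenever $A$ is invertible;
\item the polar-decomposition step of Appendix~\ref{app:bounded} turns the $\Lambda$Reg term into $\Reg(L^{-1/2}KL^{-1/2})$ with $L=O^\top\Lambda O\in\mathcal S_{3,2}$;
\item every pair $(K,L)$ with $K\succ0$ is realizable by some $(A,p,\Lambda)$.
\end{itemize}
Consequently full global minimizers correspond to reduced global minimizers that attain the prediction bound, as soon as the reduced minimizers are positive definite.

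\textbf{Selecting the spectrum.} With $D=3$ and $\kappa=2$, the lower target variance is $a_m=3/(3+m)$, so $a_1=3/4$ and $a_2=3/5$.
\begin{itemize}
\item Model 1: $\tr R_1=22$, giving $\phi_1=\tfrac34(22+1)=69/4$ and $\phi_2=\tfrac35(22+11)=99/5$. Hence $m_*=1$ is the unique minimizing index, and the eigengap $r_1=1<10=r_2$ holds.
\item Model 2: $\tr R_2=13$, giving $\phi_1=\tfrac34\cdot14=21/2$ and $\phi_2=\tfrac35\cdot16=48/5$. Hence $m_*=2$ is the unique minimizing index, and the eigengap $r_2=2<10=r_3$ holds.
\end{itemize}
Because $R_j$ is diagonal, the larger-variance eigenspace is spanned by the leading coordinate directions. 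The proposition therefore yields uniform convergence of the target geometries:
\[
L_1\to\tfrac34\diag(2,1,1)=\Sigma_1,
\qquad
L_2\to\tfrac35\diag(2,2,1)=\Sigma_2.
\]
The proposition also gives $L^{-1/2}KL^{-1/2}\to qI$, hence $K_\eta\to qL_j^*\succ0$ uniformly. This yields positive definiteness of $K_\eta$ for small $\eta$, and so invertible encoders, exact predictors and noncollapse. For the noncollapse bound I use that the feature covariance $BB^\top$ is orthogonally conjugate to $K_\eta$, so its eigenvalues are eventually larger than $q\lambda_{\min}(L_j^*)/2$.

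\textbf{Metric and regret.} The state metric is $A^\top A=\Sigma_j^{-1/2}K_\eta\Sigma_j^{-1/2}$, which tends to $q\Sigma_j^{-1/2}\Sigma_j\Sigma_j^{-1/2}=qI$ uniformly. Because $F_{j,\eta}x_0=d_j$, the exact planner minimizes $(d_j+g_ju)^\top M(d_j+g_ju)$. Its unconstrained minimizer $u(M)=-g_j^\top Md_j/g_j^\top Mg_j$ is continuous in $M\succ0$ and converges to $-g_j^\top d_j/\norm{g_j}^2=0$, since $g_j\perp d_j$ in both models. The limit $0$ is interior to $[-1,1]$, so $u(M)$ is eventually feasible. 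Strict convexity then makes it the unique planner action, and \eqref{eq:three_dimensional_task} gives $\Regret_j=2u^2\to0$ uniformly over minimizers.

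The main obstacle is the reduction step. It requires making sure that the extreme-point/strict-concavity conclusions of Proposition~\ref{prop:target_spectrum} apply at the full-objective level, i.e.\ that no singular encoder sneaks in among global minimizers. I would handle this exactly as in Appendix~\ref{app:obstruction}: a full minimizer must attain both the reduced minimum and the prediction bound, since failing either gives a strictly larger value. The remaining work is arithmetic.
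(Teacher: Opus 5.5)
Your proposal is correct. It shares the paper's reduction and realizability argument, its argument that every full minimizer attains both the reduced minimum and the prediction bound, and its planning step ($g_j\perp d_j$, so $u(M)\to0$). The difference is in how you identify the limiting target.

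The paper re-runs the comparison argument directly for $\mathcal G_{j,\eta}$ and needs only that $\tr(LR_j)$ has a unique minimizer over $\mathcal S$. It finds that minimizer in two steps. First, a rearrangement (doubly stochastic) argument aligns sorted target spectra with the increasing noise entries. Second, a linear function is minimized over the triangle of admissible sorted spectra, with vertices $(1,1,1)$, $(3/2,3/4,3/4)$, $(6/5,6/5,3/5)$.

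You instead invoke Proposition~\ref{prop:target_spectrum} as a black box. Its extreme-point characterization excludes the isotropic vertex, and uniqueness reduces to checking that $\phi_m$ has a unique minimizing index together with the single eigengap $r_{m_*}<r_{m_*+1}$. Your values $69/4,\,99/5$ and $21/2,\,48/5$ are exactly the paper's trace values at the two nonisotropic vertices.

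Your route is shorter, and it shows that one eigengap suffices rather than strict ordering of all noise entries. The paper's route is self-contained and avoids the strict-concavity and extreme-point machinery. That machinery establishes the two-level form of $L_\eta$ at every small positive $\eta$, which is more than this statement needs, since only the limit of $L$ is claimed.
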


\begin{proof}
\textbf{Selected target geometries. }

Write $B=A\Sigma_j^{1/2}=OK^{1/2}$ and $L=O^\top\Lambda O$.
Appendix~\ref{app:bounded} gives
\begin{equation}
 \begin{aligned}
 \mathcal G_{j,\eta}(K,L)
 &=\tfrac\eta2\tr(KR_j)
   +\lambda\Reg(L^{-1/2}KL^{-1/2}),\\
 &K\succeq0,\qquad
 L\in\mathcal S:=\{L\succ0:\tr L=3,\ \cond(L)\le2\}.
 \end{aligned}
 \label{eq:three_dimensional_reduction}
\end{equation}
For $K\succ0$, the polar construction realizes each pair with a diagonal output target, encoder orientation, and $p_A(z,u)=AF_{j,\eta}A^{-1}z+Ag_ju$.
The target domain $\mathcal S$ is compact, with eigenvalues uniformly bounded away from zero by~\eqref{eq:dim}.

Let $L_*$ be the unique minimizer of $\tr(LR_j)$ over $\mathcal S$, identified below.
For each $\eta>0$, continuity and coercivity give a reduced global minimum.
The comparison argument in the proof of Proposition~\ref{prop:target_spectrum} gives
\begin{equation}
 \tfrac\eta2\tr(KR_j)
 +\lambda[\Reg(C)-\Reg(qI)]
 \le\tfrac{\eta q}{2}\tr(L_*R_j),\qquad
 C=L^{-1/2}KL^{-1/2}.
 \label{eq:spectrum_comparison}
\end{equation}
Since $R_j\succeq I$, the trace bound controls $K$.
The uniform lower eigenvalue bound on $L$ also bounds $C$.
The regularizer gap is $O(\eta)$, so compactness and Lemma~\ref{lem:finite} give $C\to qI$ uniformly.
Hence $K=L^{1/2}CL^{1/2}\to q\bar L$ along every subsequence $L\to\bar L$.
Equation~\eqref{eq:spectrum_comparison} and uniqueness force $\bar L=L_*$.
Compactness then gives convergence uniformly over all reduced minimizers:
\begin{equation}
 L\longrightarrow L_*,\qquad K\longrightarrow qL_*.
 \label{eq:spectrum_selection}
\end{equation}
Thus $K\succ0$ for every reduced minimizer at sufficiently small positive noise.
Each positive-definite reduced minimizer is realizable by an encoder, predictor, and admissible target.
Thus the full and reduced objectives have the same minimum value.
Every full minimizer induces a reduced minimizer and attains the conditional-mean prediction bound.
Its feature covariance shares $K$'s eigenvalues and is noncollapsed.

We minimize $\tr(LR_j)$ over both the target spectrum and its orientation.
Each $R_j$ has strictly increasing diagonal entries.
For fixed eigenvalues of $L$, the trace is minimized by aligning its largest eigenvalue with the smallest noise entry.
To see this, expand $\tr(\diag(r)V\diag(v)V^\top)$ as $\sum_{i,k}r_iv_kV_{ik}^2$.
The squared entries form a doubly stochastic matrix.
Minimizing the linear expression over such matrices admits a permutation minimizer, and the rearrangement inequality orders the eigenvalues oppositely.
Strict ordering of the noise entries makes the minimizing matrix unique.
Rotations within a repeated target eigenspace leave that matrix unchanged.

It remains to optimize $v_1\ge v_2\ge v_3$ subject to $\sum_i v_i=3$ and $v_1\le2v_3$.
This feasible region is the triangle with vertices $(1,1,1)$, $(3/2,3/4,3/4)$, and $(6/5,6/5,3/5)$.
\par\noindent
\begin{minipage}{\linewidth}
The corresponding trace values are
\begin{center}
\begin{tabular}{lrr}
\toprule
Spectrum & $\tr(LR_1)$ & $\tr(LR_2)$ \\
\midrule
$(1,1,1)$ & $22$ & $13$ \\
$(3/2,3/4,3/4)$ & $69/4$ & $21/2$ \\
$(6/5,6/5,3/5)$ & $99/5$ & $48/5$ \\
\bottomrule
\end{tabular}
\end{center}
\end{minipage}
\par Each column has a unique minimizing vertex.
Thus $L_{1,\mathrm{learn}}^*=\Sigma_1$ and $L_{2,\mathrm{learn}}^*=\Sigma_2$.
Although the two noise spectra have the same ordering, their relative magnitudes select different numbers of larger target eigenvalues.

\begin{samepage}
\paragraph{Planning regret.}

Since $L_{j,\mathrm{learn}}^*=\Sigma_j$,
\[
 A^\top A=\Sigma_j^{-1/2}K\Sigma_j^{-1/2}\longrightarrow qI.
\]
The Euclidean planner minimizes $(d_j+g_ju)^\top A^\top A(d_j+g_ju)$.
Its unconstrained action converges to zero and hence is feasible for sufficiently small noise.
The limiting task regret is zero in both models.
\end{samepage}

\end{proof}

The construction sets $\Sigma_j=L_{j,\mathrm{learn}}^*$, yielding the full limiting state metric $qI$.
In general, agreement with Euclidean state costs over all residuals requires $L_*$ proportional to $\Sigma$; under Proposition~\ref{prop:target_spectrum}, the state covariance must therefore have exactly two distinct eigenvalues with ratio $\kappa$, up to an overall scale.
Agreement on a restricted feasible residual set can hold under weaker conditions.

\section{General-Dimensional Geometry, Finite-Horizon Planning, and Nonlinear Stability}
\label{ag:appendix}

This appendix proves the arbitrary-dimensional joint-optimum result used in Section~\ref{sec:joint_geometry}, gives the explicit finite-horizon construction underlying Theorem~\ref{thm:finite_horizon_separation}, and records the corresponding learned-target and nonlinear extensions.
Throughout, $\norm{\cdot}$ denotes the Euclidean norm for vectors and the operator norm for matrices.
Matrix square roots are symmetric positive-semidefinite roots.
All limits fix the dimension, horizon, batch size, quadrature, action statistics, and positive regularization weights.

\subsection{Standing regularizer assumptions}
\label{ag:regularizer}

We use the expected finite-batch statistic $\Reg$ of Appendix~\ref{app:finite} in dimension $D$.
Specifically, $N>2$, the quadrature has finitely many nonnegative knots $t_k$ and weights $w_k$, at least one $w_kt_k$ is positive, projection directions are uniform on $\Sph^{D-1}$, and $r_N'(0)<0$ in Eq.~\eqref{eq:r}.
Lemma~\ref{lem:finite} and its proof show that $\Reg$ is continuous, nonnegative, and invariant under orthogonal conjugation on the positive-semidefinite cone.
Its unique minimum is $qI_D$, where $q\in(0,1)$, and it is smooth near this minimum.
With $c_N=r_N''(q)>0$, Eq.~\eqref{eq:hessian} gives
\[
 D^2\Reg(qI_D)[E,E]
 =\frac{c_N}{D(D+2)}\left[(\tr E)^2+2\tr(E^2)\right]
\]
for every symmetric $E$.
In particular, the Hessian is positive definite on symmetric matrices.
These are the regularizer properties used below; the finite-statistic proof need not be repeated.

\subsection{General Gaussian training model and isotropic joint optimum}
\label{ag:training}

For the proof of Proposition~\ref{prop:isotropic_joint_metric}, consider the Gaussian training model
\begin{equation}
 \begin{aligned}
 x'&=F_\eta x+Ga+\xi,\\
 x&\sim\mathcal N(0,\Sigma),\qquad
 a\sim\mathcal N(0,\Gamma),\qquad
 \xi\sim\mathcal N(0,\eta W),
 \end{aligned}
 \label{ag:training-model}
\end{equation}
where $x$, $a$, and $\xi$ are mutually independent, $\Sigma,W\in\mathbb R^{D\times D}$ and $\Gamma\in\mathbb R^{d_a\times d_a}$ are positive definite, and $G\in\mathbb R^{D\times d_a}$.
Encoders $A\in\mathbb R^{D\times D}$ may be singular; predictors are linear in $(Ax,a)$.
Write
\[
 \Lpred(A,p)=\tfrac12\mathbb E\norm{p(Ax,a)-Ax'}^2,
 \qquad R=\Sigma^{-1/2}W\Sigma^{-1/2}.
\]
Multiplying a coordinate-averaged prediction objective by $D/2$ gives this normalization, with its regularization weight multiplied by the same factor.
If the regularizer is averaged over current and successor features, the stationary-marginal condition
\[
 F_\eta\Sigma F_\eta^\top+G\Gamma G^\top+\eta W=\Sigma
\]
makes the two expected statistics identical.
Independent trajectories also suffice when examples are independent within each regularized time slice.

For reference, write the isotropic objective as
\begin{equation}
 \mathcal L_B(A,p)
 =\Lpred(A,p)+\lambda_B\Reg(A\Sigma A^\top).
 \label{ag:baseline-objective}
\end{equation}

\begin{proof}[Proof of Proposition~\ref{prop:isotropic_joint_metric}]
\emph{Prediction reduction.}
Let $B=A\Sigma^{1/2}$ and $K=B^\top B\succeq0$.
Independence and zero-mean noise give, including for singular $A$,
\begin{equation}
 \Lpred(A,p)
 =\tfrac12\mathbb E\norm{p(Ax,a)-A(F_\eta x+Ga)}^2
   +\tfrac\eta2\tr(KR)
 \ge\tfrac\eta2\tr(KR).
 \label{ag:prediction-reduction}
\end{equation}
Every positive-definite $K$ attains this lower bound with an invertible encoder and
\begin{equation}
 p_A(z,a)=AF_\eta A^{-1}z+AGa.
 \label{ag:exact-predictor}
\end{equation}

\emph{Isotropic target.}
The matrices $BB^\top$ and $B^\top B$ are orthogonally similar.
The reduced objective is therefore
\[
 f_\eta(K)=\tfrac\eta2\tr(KR)+\lambda_B\Reg(K),\qquad K\succeq0.
\]
It is continuous and coercive for every $\eta>0$ since $R\succ0$ and $\Reg\ge0$.
Comparing any minimizer with $qI_D$ yields
\begin{equation}
 \tfrac\eta2\tr(KR)+\lambda_B[\Reg(K)-\Reg(qI_D)]
 \le\tfrac{\eta q}{2}\tr R.
 \label{ag:baseline-comparison}
\end{equation}
Both left-hand terms are nonnegative.
Thus all minimizing $K$ lie in a common compact set and their regularizer gaps tend to zero.
Every accumulation point is $qI_D$.
This localization is uniform: otherwise a sequence of minimizers staying a fixed distance from $qI_D$ would have a subsequence converging to it.
By the Hessian formula and continuity, $\Reg$ is strictly convex on a small convex neighborhood of $qI_D$.
All global minimizers eventually lie there, so the reduced minimizer is unique and positive definite.

It is attained by $A=K^{1/2}\Sigma^{-1/2}$ and Eq.~\eqref{ag:exact-predictor}.
A global minimizer of the full objective must attain both the reduced minimum and equality in Eq.~\eqref{ag:prediction-reduction}.
Its encoder is invertible; since $\operatorname{Cov}(Ax,a)=\diag(A\Sigma A^\top,\Gamma)$ is positive definite, equality in prediction identifies the linear predictor coefficients.
Finally $A^\top A=\Sigma^{-1/2}K\Sigma^{-1/2}$ gives Eq.~\eqref{eq:general_baseline_metric}, and the prediction-loss identity follows from Eq.~\eqref{ag:prediction-reduction}.
\end{proof}

\subsection{Proof of Proposition~\ref{prop:learned_target_metric}}
\label{ag:target-geometry}

In addition to the diagonal target family $\mathcal T_{D,\kappa}$ in Eq.~\eqref{eq:family}, define
\[
 \mathcal S_{D,\kappa}
 =\{L\succ0:\tr L=D,\ \cond(L)\le\kappa\}.
\]
The set $\mathcal S_{D,\kappa}$ incorporates the encoder's free orientation; it does not introduce a full-covariance target parameterization.

\begin{proof}
Let $B=A\Sigma^{1/2}$ and write the polar decomposition as
$B=OK^{1/2}$, using an orthogonal extension if $B$ is singular.
Set $L=O^\top\Lambda O$.
Then
\[
 \Lambda^{-1/2}A\Sigma A^\top\Lambda^{-1/2}
 =O(L^{-1/2}KL^{-1/2})O^\top.
\]
Together with the prediction reduction in Eq.~\eqref{ag:prediction-reduction} and orthogonal invariance of $\Reg$, the learned-target objective reduces to
\[
 g_\eta(K,L)
 =\tfrac\eta2\tr(KR)
 +\lambda_T\Reg(L^{-1/2}KL^{-1/2}),
 \qquad
 K\succeq0,\quad L\in\mathcal S_{D,\kappa}.
\]

Every admissible pair $(K,L)$ with $K\succ0$ is realizable by the full model:
choose $O$ such that $OLO^\top$ is diagonal, set
$\Lambda=OLO^\top$ and $A=OK^{1/2}\Sigma^{-1/2}$, and use the predictor in Eq.~\eqref{ag:exact-predictor}.
For $L\in\mathcal S_{D,\kappa}$,
\[
 \lambda_{\min}(L)\ge \frac{D}{1+(D-1)\kappa},
 \qquad
 \lambda_{\max}(L)\le D,
\]
so $\mathcal S_{D,\kappa}$ is compact.
Since the reduced objective is continuous and coercive in $K$, it attains a minimum for every $\eta>0$.

Let $L_0$ minimize $\tr(LR)$ on $\mathcal S_{D,\kappa}$.
For any reduced global minimizer $(K,L)$, comparison with $(qL_0,L_0)$ gives, with
$C=L^{-1/2}KL^{-1/2}$,
\begin{equation}
 \tfrac\eta2\tr(KR)
 +\lambda_T[\Reg(C)-\Reg(qI_D)]
 \le\tfrac{\eta q}{2}\tr(L_0R).
 \label{ag:target-comparison}
\end{equation}
The two terms on the left are nonnegative.
The comparison therefore bounds $K$ uniformly over minimizing pairs and forces the regularizer gap to vanish as $\eta\downarrow0$.
Compactness of the minimizing sets and the uniqueness of the covariance minimizer of $\Reg$ then give, uniformly over reduced global minimizers,
\begin{equation}
 L^{-1/2}KL^{-1/2}\longrightarrow qI_D.
 \label{ag:standardized-target-limit}
\end{equation}
Consequently,
\[
 K-qL
 =L^{1/2}\!\left(L^{-1/2}KL^{-1/2}-qI_D\right)L^{1/2}
 \longrightarrow0
\]
uniformly.
In particular, every minimizing $K$ is positive definite for all sufficiently small $\eta$.
The reduced minimum is therefore attained by the full objective, proving existence of a global minimum in Eq.~\eqref{eq:general_target_objective}.
Moreover, every full global minimizer must attain both the reduced minimum and equality in the prediction bound, so its encoder is invertible and its predictor is the exact encoded conditional mean in Eq.~\eqref{ag:exact-predictor}.

Along any sequence of global minimizers with $L\to\bar L$,
Eq.~\eqref{ag:target-comparison} and $K-qL\to0$ imply
\[
 \tr(\bar L R)\le \tr(L_0R).
\]
Hence every accumulation point $\bar L$ minimizes $\tr(LR)$ on $\mathcal S_{D,\kappa}$.
If this trace minimizer is unique, compactness gives uniform convergence
$L\to L_0$ over global minimizers.
Finally,
\[
 A^\top A
 =\Sigma^{-1/2}K\Sigma^{-1/2},
\]
so conjugating $K-qL\to0$ by $\Sigma^{-1/2}$ yields the metric limit in
Eq.~\eqref{eq:general_target_metric}.
The trace bounds in Eq.~\eqref{ag:target-comparison} also imply
$\Lpred=\eta\tr(KR)/2\to0$.
The same bounds give uniform upper and lower bounds on the encoder singular values for sufficiently small noise.
This proves Proposition~\ref{prop:learned_target_metric}.
\end{proof}

The task-alignment condition used in Section~\ref{sec:method_geometry} follows directly from the limiting metric in Eq.~\eqref{eq:general_target_metric}.
For a task metric $Q\succ0$, proportionality to $Q$ together with $\tr L=D$ gives
\[
 L_*=\frac{D\Sigma^{1/2}Q\Sigma^{1/2}}{\tr(\Sigma Q)},
\]
which is Eq.~\eqref{eq:target_alignment}.
Here $L=O^\top\Lambda O$ incorporates the encoder orientation.
The diagonal target alone therefore does not determine the state metric: the aligned geometry must be both feasible and selected by predictive training.

\subsection{Exact rollouts and finite-horizon regret}
\label{ag:rollouts}

For a deterministic planned sequence $U=(a_0,\ldots,a_{H-1})$, use independent innovations in Eq.~\eqref{ag:training-model} and a fixed initial state $x_0$.
Set
\begin{align}
 \mu_{\eta,H}(U)
 &=F_\eta^Hx_0+\sum_{t=0}^{H-1}F_\eta^{H-1-t}Ga_t,
 \label{ag:terminal-mean}\\
 \Omega_{\eta,H}
 &=\eta\sum_{j=0}^{H-1}F_\eta^jW(F_\eta^j)^\top.
 \label{ag:terminal-covariance}
\end{align}
For a fixed goal $y$ and $Q\succ0$, let $J_{*,\eta,H}(U)=\mathbb E\norm{x_H-y}_Q^2$, where $\norm{v}_Q^2=v^\top Qv$.
All planning comparisons below use the same feasible sequence set.

\begin{lemma}[Exact finite-horizon encoded means]
\label{ag:exact-rollouts}
At every global optimum covered by Proposition~\ref{prop:isotropic_joint_metric} or Proposition~\ref{prop:learned_target_metric}, autoregressive prediction initialized at $Ax_0$ satisfies
\begin{equation}
 \hat z_H(U)=A\mu_{\eta,H}(U)=\mathbb E[Ax_H\mid x_0,U].
 \label{ag:encoded-mean}
\end{equation}
Moreover,
\begin{align}
 \mathbb E\norm{\hat z_H(U)-Ax_H}^2
 &=\tr(A\Omega_{\eta,H}A^\top),
 \label{ag:rollout-mse}\\
 J_{*,\eta,H}(U)
 &=\norm{\mu_{\eta,H}(U)-y}_Q^2+\tr(Q\Omega_{\eta,H}).
 \label{ag:task-cost}
\end{align}
If $F_\eta\to F_0$ and $H$ is fixed, Eq.~\eqref{ag:rollout-mse} is $O(\eta)$ uniformly over all training optima and deterministic planned sequences.
\end{lemma}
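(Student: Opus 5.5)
The plan is to derive all three identities from the exact encoded conditional-mean predictor that Proposition~\ref{prop:isotropic_joint_metric} and Proposition~\ref{prop:learned_target_metric} guarantee at every global optimum. In both cases the encoder $A$ is invertible and the predictor is $p_A(z,a)=AF_\eta A^{-1}z+AGa$ from Eq.~\eqref{ag:exact-predictor}.

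\textbf{Step 1: autoregressive rollout.} Set $\hat z_0=Ax_0$ and $\hat z_{t+1}=p_A(\hat z_t,a_t)$. By induction on $t$, the predicted latents satisfy $\hat z_t=A\mu_{\eta,t}(U)$, where $\mu_{\eta,t}$ obeys the linear recursion $\mu_{t+1}=F_\eta\mu_t+Ga_t$ with $\mu_0=x_0$. The inductive step is one line:
\[
 AF_\eta A^{-1}(A\mu_t)+AGa_t=A(F_\eta\mu_t+Ga_t).
\]
Unrolling the recursion gives Eq.~\eqref{ag:terminal-mean}. Separately, unrolling the true dynamics gives
\[
 x_H=\mu_{\eta,H}(U)+\sum_{j=0}^{H-1}F_\eta^j\xi_{H-1-j}.
\]
The innovations are independent and zero mean, so $\mathbb E[x_H\mid x_0,U]=\mu_{\eta,H}(U)$, and linearity of $A$ yields Eq.~\eqref{ag:encoded-mean}.

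\textbf{Step 2: the two second-moment identities.} The residual $x_H-\mu_{\eta,H}$ is a sum of independent zero-mean terms, so its covariance is $\Omega_{\eta,H}$ as in Eq.~\eqref{ag:terminal-covariance}. The encoded error $\hat z_H-Ax_H=-A(x_H-\mu_{\eta,H})$ therefore has second moment $\tr(A\Omega_{\eta,H}A^\top)$, which is Eq.~\eqref{ag:rollout-mse}. For the task cost, decompose
\[
 x_H-y=(\mu_{\eta,H}-y)+(x_H-\mu_{\eta,H}).
\]
The cross term vanishes in expectation, which gives Eq.~\eqref{ag:task-cost}. Because $U$ is deterministic, none of these steps depends on the sequence.

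\textbf{Step 3: uniform $O(\eta)$ bound.} This is the only step needing care, since the constant must be uniform over all training optima. Bound
\[
 \tr(A\Omega_{\eta,H}A^\top)\le\eta D\norm{A}^2\norm{W}\sum_{j<H}\norm{F_\eta^j}^2.
\]
Because $F_\eta\to F_0$ and $H$ is fixed, $\sum_{j<H}\norm{F_\eta^j}^2$ is bounded for small $\eta$. It remains to bound $\norm{A}^2=\norm{A^\top A}$ uniformly over optima. Here I use $A^\top A=\Sigma^{-1/2}K\Sigma^{-1/2}$ together with the comparison inequalities Eq.~\eqref{ag:baseline-comparison} and Eq.~\eqref{ag:target-comparison}. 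These give $\tr(KR)\le q\tr R$ and $\tr(KR)\le q\tr(L_0R)\le qD\norm{R}$ respectively. With $R\succ0$, either bound yields $\norm{K}\le\tr K\le C/\lambda_{\min}(R)$ uniformly, so $\norm{A}^2\le C'\norm{\Sigma^{-1}}$. The error is therefore $O(\eta)$ uniformly over optima and sequences.
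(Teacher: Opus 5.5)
Your proof is correct and follows the paper's own argument. You induct with the exact predictor $p_A$, let the independent zero-mean innovations produce $\Omega_{\eta,H}$ and the two second-moment identities, and combine a finite sum of bounded powers of $F_\eta$ with a uniform bound on $\norm{A}$. Your explicit derivation of that bound from the comparison inequalities \eqref{ag:baseline-comparison} and \eqref{ag:target-comparison} is exactly what the paper uses implicitly.
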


\begin{proof}
Induction with Eq.~\eqref{ag:exact-predictor} gives Eq.~\eqref{ag:encoded-mean}.
The independent, zero-mean innovations give Eq.~\eqref{ag:terminal-covariance}; expanding squared error gives the two remaining identities.
A finite sum of bounded powers of $F_\eta$, together with the uniform bound on $\norm{A}$, gives the $O(\eta)$ estimate.
The task-noise term is independent of the planned sequence and cancels in open-loop regret.
\end{proof}

\begin{lemma}[Convergence of constrained planning regret]
\label{ag:regret-convergence}
Let $\mathcal U\subset\mathbb R^{d_aH}$ be nonempty, compact, and convex, let $F_\eta\to F_0$, and suppose the learned metric $M_\eta=A^\top A$ converges to $M_0\succ0$ uniformly over the training optima under consideration.
Define
\[
 \mathcal Y_0=\{\mu_{0,H}(U):U\in\mathcal U\},\qquad
 v_M=\operatorname*{arg\,min}_{v\in\mathcal Y_0}\norm{v-y}_{M_0}^2,
 \qquad
 v_Q=\operatorname*{arg\,min}_{v\in\mathcal Y_0}\norm{v-y}_Q^2.
\]
The two terminal minimizers are unique.
Every exact latent-planning minimizer $U_\eta$ satisfies, uniformly over training and planning minimizers,
\begin{equation}
 J_{*,\eta,H}(U_\eta)-\min_{U\in\mathcal U}J_{*,\eta,H}(U)
 \longrightarrow\norm{v_M-y}_Q^2-\norm{v_Q-y}_Q^2.
 \label{ag:regret-limit}
\end{equation}
The limit is positive if and only if $v_M\ne v_Q$.
\end{lemma}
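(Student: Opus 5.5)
The plan is to reduce planning at the training optimum to a convex quadratic program over the fixed compact set $\mathcal U$, with coefficients depending continuously on $\eta$, and then pass to the limit using strict convexity. By Lemma~\ref{ag:exact-rollouts}, the exact latent planner minimizes
\[
 \norm{A\mu_{\eta,H}(U)-Ay}^2=\norm{\mu_{\eta,H}(U)-y}_{M_\eta}^2
\]
over $\mathcal U$. The task cost equals $\norm{\mu_{\eta,H}(U)-y}_Q^2$ plus the sequence-independent constant $\tr(Q\Omega_{\eta,H})$, which cancels in the regret. The regret therefore equals
\[
 \norm{\mu_{\eta,H}(U_\eta)-y}_Q^2-\min_{U\in\mathcal U}\norm{\mu_{\eta,H}(U)-y}_Q^2 .
\]
The map $U\mapsto\mu_{\eta,H}(U)$ is affine, and its coefficients $F_\eta^j$ and $F_\eta^Hx_0$ converge to those of $\mu_{0,H}$ uniformly on the compact set $\mathcal U$. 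Hence $\mathcal Y_\eta=\mu_{\eta,H}(\mathcal U)$ is a compact convex set converging in Hausdorff distance to $\mathcal Y_0$, which is also compact and convex.

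\textbf{Uniqueness of the terminal minimizers.} $v_M$ and $v_Q$ are the projections of $y$ onto the compact convex set $\mathcal Y_0$ in the inner products induced by $M_0\succ0$ and $Q\succ0$. Existence follows from compactness, and uniqueness from strict convexity of these norms. Only the terminal point is unique; the sequence $U$ need not be, but the regret depends on $U$ only through $\mu_{\eta,H}(U)$.

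\textbf{Convergence of the latent terminal point.} Let $U_\eta$ be any latent-planning minimizer at any training optimum, and set $w_\eta=\mu_{\eta,H}(U_\eta)$. Uniform convergence $M_\eta\to M_0$, together with boundedness of $\mathcal Y_\eta$ (uniform in $\eta$), gives
\[
 \sup_{U\in\mathcal U}\Bigl|\norm{\mu_{\eta,H}(U)-y}_{M_\eta}^2-\norm{\mu_{0,H}(U)-y}_{M_0}^2\Bigr|\to0 .
\]
Hence $\norm{\mu_{0,H}(U_\eta)-y}_{M_0}^2\to\min_{\mathcal Y_0}\norm{\cdot-y}_{M_0}^2$. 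Strong convexity of the $M_0$-norm squared on the convex set $\mathcal Y_0$ then gives, via the projection inequality,
\[
 \norm{v-v_M}_{M_0}^2\le\norm{v-y}_{M_0}^2-\norm{v_M-y}_{M_0}^2\qquad(v\in\mathcal Y_0).
\]
Applying this to $v=\mu_{0,H}(U_\eta)$ yields $\mu_{0,H}(U_\eta)\to v_M$. Since $\norm{w_\eta-\mu_{0,H}(U_\eta)}\to0$ uniformly, $w_\eta\to v_M$ uniformly over all training and planning minimizers. This step is the one needing care: the uniformity must come from the explicit quantitative inequality above rather than a subsequence argument. If one prefers, a contradiction-by-subsequence argument also gives uniformity, because every limit point is forced to be $v_M$.

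\textbf{Regret limit and sign.} By uniform convergence of the affine maps, $\min_{U\in\mathcal U}\norm{\mu_{\eta,H}(U)-y}_Q^2\to\norm{v_Q-y}_Q^2$, and by the previous step $\norm{w_\eta-y}_Q^2\to\norm{v_M-y}_Q^2$. Together these give Eq.~\eqref{ag:regret-limit}. The limit is nonnegative because $v_M\in\mathcal Y_0$, and by uniqueness of the $Q$-projection it is zero if and only if $v_M=v_Q$. This establishes positivity exactly when $v_M\ne v_Q$.
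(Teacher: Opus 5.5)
Your proof is correct and follows the paper's route. You reduce to terminal-state quadratics via the exact-rollout lemma, use uniform convergence of the affine mean maps and latent costs on the compact set $\mathcal U$, identify the limiting latent terminal point as $v_M$, and cancel the sequence-independent noise term $\tr(Q\Omega_{\eta,H})$. The one variation is the step showing convergence to $v_M$. You use the projection inequality $\norm{v-v_M}_{M_0}^2\le\norm{v-y}_{M_0}^2-\norm{v_M-y}_{M_0}^2$, which gives an explicit rate, whereas the paper uses a subsequence-compactness argument plus a contradiction step for uniformity. Either suffices, so your remark that uniformity must come from the quantitative inequality is unnecessary.
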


\begin{proof}
The terminal set is compact and convex, so strict convexity of each quadratic gives a unique minimizing terminal state.
Optimal action sequences themselves need not be unique.
Since $H$ is fixed and $\mathcal U$ is compact, the mean maps and the latent costs converge uniformly.
For any $\eta_n\downarrow0$ and choices of training and planning minimizers, compactness gives a subsequence $U_{\eta_n}\to\bar U$.
Passing the minimizing inequality to the limit shows that $\mu_{0,H}(\bar U)=v_M$.
Uniform convergence of the mean task costs gives convergence of their minima and proves Eq.~\eqref{ag:regret-limit}, since the common noise term cancels.
If convergence were not uniform, a sequence with a fixed nonzero deviation would have the same convergent-subsequence property, a contradiction.
Strict convexity of the task cost on $\mathcal Y_0$ proves the last claim.
\end{proof}

\subsection{Separation in every dimension and at every finite horizon}
\label{ag:finite-horizon}

\begin{longtheorem}[Explicit finite-horizon construction and target compensation]
\label{ag:separation}
Fix $D\ge2$, a finite $H\ge1$, $\alpha\in(0,1)$, $\bar u>0$, and any nonscalar covariance $\Sigma\succ0$.
Set
\begin{equation}
 F_\eta=(\alpha^2I_D-\eta\Sigma^{-1})^{1/2},
 \qquad \Gamma=(1-\alpha^2)\Sigma,
 \label{ag:transition}
\end{equation}
and train on independent Gaussian tuples
\begin{equation}
 x'=F_\eta x+a+\xi,\qquad
 x\sim\mathcal N(0,\Sigma),\quad
 a\sim\mathcal N(0,\Gamma),\quad
 \xi\sim\mathcal N(0,\eta I_D).
 \label{ag:construction}
\end{equation}
Assume the statistic conditions in Appendix~\ref{ag:regularizer}.
Evaluate from $x_0=0$, with $\norm{a_t}\le\bar u$ at every step, using expected squared Euclidean terminal error to a fixed goal $y$.
Put
\[
 \mathcal U=\{(a_0,\ldots,a_{H-1}):\norm{a_t}\le\bar u\},
 \qquad r_H=\bar u\sum_{j=0}^{H-1}\alpha^j,
\]
and choose $\norm{y}>r_H$ with $\Sigma^{-1}y$ not parallel to $y$.

For any fixed $\lambda_B>0$ and all sufficiently small $\eta>0$, every isotropic-training global optimum is invertible and has exact encoded conditional-mean rollouts.
For every exact $H$-step latent-planning minimizer $U_{B,\eta}$,
\begin{equation}
 \begin{aligned}
 \Regret_{\eta,H}(U_{B,\eta})&\longrightarrow\Delta_H>0,\\
 \Regret_{\eta,H}(U)&:=J_{*,\eta,H}(U)-\min_{V\in\mathcal U}J_{*,\eta,H}(V),
 \end{aligned}
 \label{ag:positive-regret}
\end{equation}
where
\begin{equation}
 \begin{aligned}
 v_B&=(I_D+\nu\Sigma)^{-1}y,\qquad\norm{v_B}=r_H,\qquad\nu>0,\\
 \Delta_H&=\norm{v_B-y}^2-(\norm{y}-r_H)^2.
 \end{aligned}
 \label{ag:explicit-gap}
\end{equation}
The scalar $\nu$ is unique.

Suppose additionally that
\begin{equation}
 \begin{aligned}
 \Sigma&=s[I_D+(c-1)P],\qquad s=\frac{D}{D+r(c-1)},\qquad c>1,\\
 P&=P^\top=P^2,\qquad\rank P=r\in\{1,\ldots,D-1\}.
 \end{aligned}
 \label{ag:two-level}
\end{equation}
For any fixed $\lambda_T>0$ and $\kappa=c$, every learned-target global optimum then has
\begin{equation}
 L\longrightarrow\Sigma,\qquad
 A^\top A\longrightarrow qI_D,\qquad
 \Regret_{\eta,H}(U_{T,\eta})\longrightarrow0.
 \label{ag:compensation}
\end{equation}
All limits are uniform over global training optima and exact planning minimizers.
Both methods have vanishing training prediction loss and fixed-$H$ encoded rollout MSE.
The learned-target regret is strictly smaller for all sufficiently small positive noise.
\end{longtheorem}

\begin{proof}
\emph{Valid training distribution.}
For $0<\eta<\alpha^2\lambda_{\min}(\Sigma)$, $F_\eta$ is positive definite, commutes with $\Sigma$, and satisfies
\[
 F_\eta\Sigma F_\eta^\top=\alpha^2\Sigma-\eta I_D.
\]
Adding $\Gamma$ and $\eta I_D$ proves that $x'$ has covariance $\Sigma$.
Initializing independent training trajectories with this Gaussian marginal and using independent Gaussian actions preserves it at every time slice.
The control matrix is $I_D$, so every state coordinate is actuated.
The Gaussian training policy and the bounded evaluation action set are distinct; the evaluation bound applies equally to both planners.

\emph{Isotropic metric and reachable means.}
Proposition~\ref{prop:isotropic_joint_metric} applies with $R=\Sigma^{-1}$ and gives $M_{B,\eta}\to q\Sigma^{-1}$.
Since $F_\eta\to\alpha I_D$,
\[
 \mu_{0,H}(U)=\sum_{t=0}^{H-1}\alpha^{H-1-t}a_t.
\]
Its image over $\mathcal U$ is exactly the ball $\{v:\norm{v}\le r_H\}$.
The triangle inequality proves one inclusion.
For the reverse inclusion, choose $a_t=v/(\sum_{j=0}^{H-1}\alpha^j)$ for every $t$.
All these actions satisfy the bound.
Every action time has a nonzero coefficient.

\emph{Different optimal terminal states.}
The Euclidean projection of $y$ onto this ball is $v_*=r_Hy/\norm{y}$.
The baseline instead minimizes $(v-y)^\top\Sigma^{-1}(v-y)$ on the ball.
Since $y$ is outside it, the optimum is on the boundary and satisfies
\[
 \Sigma^{-1}(v_B-y)+\nu v_B=0,\qquad\nu>0.
\]
This gives Eq.~\eqref{ag:explicit-gap}.
The norm of $(I_D+\nu\Sigma)^{-1}y$ decreases continuously and strictly from $\norm{y}$ to zero with $\nu$, proving uniqueness.
If $v_B=v_*$, the stationarity equation requires $\Sigma^{-1}y$ to be parallel to $y$, contrary to the assumption.
The task projection is unique, so $\Delta_H>0$.
Such goals exist for every nonscalar $\Sigma$: take a vector with nonzero components in two distinct eigenspaces and scale it beyond radius $r_H$.
Lemma~\ref{ag:regret-convergence} proves Eq.~\eqref{ag:positive-regret} for the positive-noise systems.

\emph{Unique learned selector for Eq.~\eqref{ag:two-level}.}
Here $\tr\Sigma=D$ and $\cond(\Sigma)=c$.
For $L\in\mathcal S_{D,c}$ let $T=\tr(PL)$.
Then
\begin{equation}
 \frac{T}{r}\le\lambda_{\max}(L)
 \le c\lambda_{\min}(L)
 \le c\frac{D-T}{D-r},\qquad T\le crs.
 \label{ag:selector-bound}
\end{equation}
Since $\Sigma^{-1}=s^{-1}I_D-(c-1)(sc)^{-1}P$,
\[
 \tr(L\Sigma^{-1})=\frac{D}{s}-\frac{c-1}{sc}T\ge D,
\]
and $L=\Sigma$ attains equality.
Equality requires equality throughout Eq.~\eqref{ag:selector-bound}.
In particular, $\tr(P[\lambda_{\max}(L)I_D-L])=0$.
The bracket is positive semidefinite, so each vector in an orthonormal basis of $\operatorname{range}(P)$ has zero quadratic form under it and is annihilated by it.
Thus that range is a maximal-eigenvalue subspace of $L$.
The same argument makes its orthogonal complement a minimal-eigenvalue subspace.
Equality in the middle inequality and the trace constraint fix their eigenvalues to $sc$ and $s$, respectively.
Consequently $L=\Sigma$ is the unique trace minimizer.

Proposition~\ref{prop:learned_target_metric} gives $L\to\Sigma$ and $M_{T,\eta}\to qI_D$.
Lemma~\ref{ag:regret-convergence} now gives Eq.~\eqref{ag:compensation}, uniformly over all minimizer choices.
Lemma~\ref{ag:exact-rollouts} gives exact encoded terminal means and vanishing rollout MSE; the training losses vanish by Proposition~\ref{prop:learned_target_metric}.
Uniform convergence and $\Delta_H>0$ give strict improvement at small positive noise.
\end{proof}

The theorem concerns regret relative to the best feasible sequence, not zero terminal error: the selected goal is outside the limiting reachable ball.
It establishes a family in every dimension and at every fixed finite horizon.
The admissible noise threshold can depend on these parameters.
The learned-target conclusion uses Eq.~\eqref{ag:two-level} and its matching bound; the isotropic failure does not require the two-level restriction.
The theorem analyzes open-loop sequence selection over the stated feasible set.

The isotropic part of Theorem~\ref{ag:separation} proves Theorem~\ref{thm:finite_horizon_separation} in the main text; the additional two-level condition gives the learned-target compensation used in the method analysis.

\subsection{Stability of the planning separation}
\label{ag:stability}

The next lemma controls both changes to the planning cost and changes to the physical task cost.
It also allows approximate minimization of the perturbed latent cost.

\begin{lemma}[A quantitative regret margin]
\label{ag:margin}
Let $\mathcal U$ be a nonempty compact set, and let $b_0,j_0$ be continuous functions on it.
Put $j_0^*=\min_{\mathcal U}j_0$ and suppose
\[
 \Delta=\min_{U\in\operatorname*{arg\,min}_{\mathcal U} b_0}
 [j_0(U)-j_0^*]>0.
\]
For $q>0$, define $t_0=qj_0$ and
\begin{equation}
 \mathcal E=\{U\in\mathcal U:j_0(U)-j_0^*\le3\Delta/4\},\qquad
 \gamma=\min_{U\in\mathcal E}b_0(U)-\min_{U\in\mathcal U}b_0(U)>0.
 \label{ag:margin-gap}
\end{equation}
Let continuous costs $b,t,j$ satisfy
\[
 \norm{b-b_0}_\infty\le\epsilon_B,\qquad
 \norm{t-t_0}_\infty\le\epsilon_T,\qquad
 \norm{j-j_0}_\infty\le\epsilon_J.
\]
Let $U_B,U_T$ have respective optimization errors $\zeta_B,\zeta_T\ge0$, i.e., $b(U_B)\le\min_{\mathcal U}b+\zeta_B$ and $t(U_T)\le\min_{\mathcal U}t+\zeta_T$.
If $2\epsilon_B+\zeta_B<\gamma$, then
\begin{align}
 j(U_B)-\min_{\mathcal U}j&>3\Delta/4-2\epsilon_J,
 \label{ag:baseline-margin}\\
 j(U_T)-\min_{\mathcal U}j&\le(2\epsilon_T+\zeta_T)/q+2\epsilon_J.
 \label{ag:target-margin}
\end{align}
In particular, if also $\epsilon_J<\Delta/16$ and $2\epsilon_T+\zeta_T<q\Delta/8$, then the first regret exceeds $\Delta/2$ and the second is less than $\Delta/4$.
\end{lemma}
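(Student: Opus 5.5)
The plan is to establish, in order, that $\gamma>0$, the baseline lower bound \eqref{ag:baseline-margin}, the target upper bound \eqref{ag:target-margin}, and finally the numerical specialization. All of these follow from elementary sup-norm comparisons.

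\emph{Positivity of $\gamma$.} The set $\mathcal E$ is closed in the compact set $\mathcal U$ by continuity of $j_0$, so it is compact. It is nonempty because it contains every minimizer of $j_0$. Hence $\min_{\mathcal E}b_0$ is attained at some $U'\in\mathcal E$. If $\gamma=0$, then $U'$ minimizes $b_0$, and the definition of $\Delta$ gives $j_0(U')-j_0^*\ge\Delta>3\Delta/4$. This contradicts $U'\in\mathcal E$.

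\emph{Baseline bound.} First I show that $U_B\notin\mathcal E$. Since $\norm{b-b_0}_\infty\le\epsilon_B$, we have $\min b\le\min b_0+\epsilon_B$. Therefore
\[
 b_0(U_B)\le b(U_B)+\epsilon_B\le\min_{\mathcal U}b_0+2\epsilon_B+\zeta_B<\min_{\mathcal U}b_0+\gamma,
\]
which rules out $U_B\in\mathcal E$. It follows that $j_0(U_B)-j_0^*>3\Delta/4$. Next, $\norm{j-j_0}_\infty\le\epsilon_J$ gives $j(U_B)\ge j_0(U_B)-\epsilon_J$ and $\min j\le j_0^*+\epsilon_J$. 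Combining these yields \eqref{ag:baseline-margin}.

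\emph{Target bound.} Using $t_0=qj_0$, we have $\min t\le qj_0^*+\epsilon_T$. Therefore
\[
 qj_0(U_T)\le t(U_T)+\epsilon_T\le qj_0^*+2\epsilon_T+\zeta_T,
\]
so $j_0(U_T)-j_0^*\le(2\epsilon_T+\zeta_T)/q$. Converting from $j_0$ to $j$ uses $j(U_T)\le j_0(U_T)+\epsilon_J$ and $\min j\ge j_0^*-\epsilon_J$. This produces the additional $2\epsilon_J$ term in \eqref{ag:target-margin}.

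\emph{Numerical specialization.} With $\epsilon_J<\Delta/16$, the baseline regret exceeds $3\Delta/4-\Delta/8=5\Delta/8>\Delta/2$. With the stated condition on $\epsilon_T,\zeta_T$, the target regret is below $\Delta/8+\Delta/8=\Delta/4$.

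The only step needing care is the positivity of $\gamma$, specifically the attainment argument on the compact set $\mathcal E$. The remaining steps are direct inequality chains, and no step is a real obstacle.
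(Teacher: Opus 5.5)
Your proof is correct and follows the paper's argument essentially step for step. You establish $\gamma>0$ from compactness of $\mathcal E$ and its disjointness from $\operatorname*{arg\,min}_{\mathcal U} b_0$, and exclude $U_B$ from $\mathcal E$ via $b_0(U_B)\le\min_{\mathcal U}b_0+2\epsilon_B+\zeta_B<\min_{\mathcal U}b_0+\gamma$. You then use $qj_0(U_T)\le qj_0^*+2\epsilon_T+\zeta_T$ and convert to $j$ through $|\min_{\mathcal U}j-j_0^*|\le\epsilon_J$; your attainment argument just spells out the compactness step the paper states in one line.
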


\begin{proof}
The set $\mathcal E$ is nonempty and compact, and is disjoint from $\operatorname*{arg\,min}_{\mathcal U}b_0$.
Thus $\gamma>0$.
The approximate minimizing inequality gives
\[
 b_0(U_B)\le\min_{\mathcal U}b_0+2\epsilon_B+\zeta_B
 <\min_{\mathcal U}b_0+\gamma,
\]
so $U_B\notin\mathcal E$.
Also $qj_0(U_T)\le qj_0^*+2\epsilon_T+\zeta_T$.
Finally $|\min_{\mathcal U}j-\min_{\mathcal U}j_0|\le\epsilon_J$; applying this bound and the pointwise task-cost bound proves Eqs.~\eqref{ag:baseline-margin}--\eqref{ag:target-margin}.
The stated thresholds give respectively a lower bound greater than $5\Delta/8$ and an upper bound less than $\Delta/4$.
\end{proof}

\subsection{Smooth nonlinear perturbations}
\label{ag:nonlinear-models}

We now realize the cost perturbations in Lemma~\ref{ag:margin} through nonlinear dynamics and encoders.
The reference models are the joint optima of Theorem~\ref{ag:separation}, including the two-level condition for its learned-target conclusion.
The nonlinear predictors may themselves be exact one-step conditional means.

\begin{corollary}[Nonlinear robustness of finite-horizon separation]
\label{ag:nonlinear}
Fix the parameters, action set, and goal of Theorem~\ref{ag:separation} with Eq.~\eqref{ag:two-level}.
For each sufficiently small $\eta>0$, choose any global linear-training optima with encoders $A_{B,\eta}$ and $A_{T,\eta}$.
Perturb the physical dynamics to
\begin{equation}
 X_{t+1}=F_\eta X_t+a_t+d(X_t,a_t)+\sqrt\eta Z_t,
 \quad X_0=0,\quad Z_t\overset{\mathrm{iid}}\sim\mathcal N(0,I_D),
 \label{ag:nonlinear-dynamics}
\end{equation}
where $d$ is continuously differentiable and $\sup_{x,\norm{a}\le\bar u}\norm{d(x,a)}\le\varepsilon_d$.
For $\chi\in\{\mathrm B,\mathrm T\}$, let
\begin{equation}
 \begin{aligned}
 f_{\chi}(x)&=A_{\chi,\eta}x+h_{\chi}(x),\qquad
 \sup_x\norm{h_{\chi}(x)}\le\varepsilon_e,\\
 \sup_x\norm{Dh_{\chi}(x)}&\le\varepsilon_e<\sigma_{\min}(A_{\chi,\eta}),
 \end{aligned}
 \label{ag:nonlinear-encoder}
\end{equation}
with $h_{\chi}$ continuously differentiable.
These bounds are global.
Define the exact encoded one-step conditional mean by
\begin{equation}
 p_{\chi}^{\mathrm{cm}}(z,a)
 =\mathbb E_Z f_{\chi}\!\left(F_\eta f_{\chi}^{-1}(z)+a+d(f_{\chi}^{-1}(z),a)+\sqrt\eta Z\right).
 \label{ag:nonlinear-mean}
\end{equation}
Use any continuous predictor $\widetilde p_{\chi}$ satisfying
\begin{equation}
 \sup_{z,\norm{a}\le\bar u}
 \norm{\widetilde p_{\chi}(z,a)-p_{\chi}^{\mathrm{cm}}(z,a)}\le\varepsilon_p.
 \label{ag:predictor-error}
\end{equation}
Roll it out from $f_{\chi}(0)$ and use the cost $c_{\chi}(U)=\norm{\hat z_{\chi,H}(U)-f_{\chi}(y)}^2$.
The physical cost is $j(U)=\mathbb E\norm{X_H(U)-y}^2$.
Suppose each returned sequence $\widetilde U_{\chi}$ has latent-cost optimization error at most $\zeta\ge0$.

There exist $\eta_0,\varepsilon_0,\zeta_0>0$, depending only on the fixed reference parameters, such that for $0<\eta<\eta_0$, $\max\{\varepsilon_d,\varepsilon_e,\varepsilon_p\}<\varepsilon_0$, and $0\le\zeta<\zeta_0$, every such choice satisfies
\begin{equation}
 \begin{aligned}
 \Regret_j(\widetilde U_B)&>\Delta_H/2,\qquad
 \Regret_j(\widetilde U_T)<\Delta_H/4,\\
 \Regret_j(U)&=j(U)-\min_{V\in\mathcal U}j(V).
 \end{aligned}
 \label{ag:nonlinear-regret}
\end{equation}
Both encoders are globally invertible and have uniformly nonsingular Jacobians.
For fixed $D,H$, their encoded rollout mean squared errors satisfy
\begin{equation}
 \sup_{U\in\mathcal U}\mathbb E\norm{\hat z_{\chi,H}(U)-f_{\chi}(X_H(U))}^2
 =O\!\left(\eta+(\varepsilon_d+\varepsilon_e+\varepsilon_p)^2\right).
 \label{ag:nonlinear-mse}
\end{equation}
The constants and conclusions are uniform over the reference global training optima.
In particular, Eq.~\eqref{ag:nonlinear-regret} allows $\varepsilon_p=\zeta=0$: exact one-step conditional-mean prediction and exact sequence optimization do not remove the separation in this nonlinear neighborhood.
\end{corollary}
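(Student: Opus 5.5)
The plan is to reduce the statement to Lemma~\ref{ag:margin}. I will take the linear-Gaussian, exact-rollout optima of Theorem~\ref{ag:separation} as the reference problem. Fix a sequence of reference global optima. Take $b_0(U)=q\,\norm{\mu_{0,H}(U)-y}_{\Sigma^{-1}}^2$ for the baseline and $t_0(U)=q\,\norm{\mu_{0,H}(U)-y}^2$ for the target; these are the limits of the latent costs, since $M_{B,\eta}\to q\Sigma^{-1}$ and $M_{T,\eta}\to qI_D$. Take $j_0(U)=\norm{\mu_{0,H}(U)-y}^2$, so that $t_0=qj_0$ as the lemma requires. Theorem~\ref{ag:separation} shows that the argmin set of $b_0$ consists of sequences with terminal mean $v_B$. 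It also shows that the argmin set of $j_0$ consists of sequences with terminal mean $r_Hy/\norm{y}$. This gives the margin $\Delta=\Delta_H>0$ and, via Eq.~\eqref{ag:margin-gap}, some $\gamma>0$. Both depend only on the fixed reference parameters, because the limits are independent of which optimum is chosen. It then remains to show that the actual costs $c_B,c_T,j$ are uniformly close to $b_0,t_0,j_0$ on $\mathcal U$. Each error should be $o(1)$ as $\eta\to0$ plus $O(\varepsilon_d+\varepsilon_e+\varepsilon_p)$, uniformly over the reference optima. Lemma~\ref{ag:margin} then gives Eq.~\eqref{ag:nonlinear-regret} once $\eta_0,\varepsilon_0,\zeta_0$ are small enough.

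The first ingredient is the invertibility and Jacobian claim. From Proposition~\ref{prop:isotropic_joint_metric}, Proposition~\ref{prop:learned_target_metric} and the singular-value bounds stated after them, $\sigma_{\min}(A_{\chi,\eta})$ and $\norm{A_{\chi,\eta}}$ lie between constants $0<\underline\sigma\le\bar\sigma$ uniformly for $\eta<\eta_0$. Take $\varepsilon_0<\underline\sigma/2$. Then $f_\chi=A+h$ is a global diffeomorphism by Hadamard's theorem, since it is proper and has a nonsingular Jacobian; equivalently, $A^{-1}f_\chi$ is a contraction perturbation of the identity. Moreover $f_\chi^{-1}$ is Lipschitz with constant $\le 2/\underline\sigma$, and $\norm{f_\chi^{-1}(z)-A^{-1}z}\le 2\varepsilon_e/\underline\sigma$.

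The second ingredient is the physical cost $j$. Write $X_H=\mu_{\eta,H}(U)+\delta_H+\text{noise}$. The nonlinear drift contributes $\norm{\delta_H}\le\varepsilon_d\sum_j\norm{F_\eta}^j$, which is uniformly bounded since $\norm{F_\eta}\le\alpha<1$. The noise is mean zero and has covariance $O(\eta)$ uniformly. The cross term between noise and drift needs care, because $d$ depends on the noisy state. I plan to bound it by Cauchy--Schwarz as $O(\sqrt\eta\,\varepsilon_d)$. Together with continuity of $\mu_{\eta,H}$ in $\eta$ on compact $\mathcal U$, this gives $\norm{j-j_0}_\infty\to0$. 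Since the constant noise term $\tr\Omega_{\eta,H}$ is common to all $U$, it can also simply be absorbed into $j_0$.

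The main obstacle is the latent rollout, because errors compound through the nonlinear predictor. I would compare the nonlinear latent rollout $\hat z_t$ with the exact linear encoded rollout $A\mu_{\eta,t}$, setting $e_t=f_\chi^{-1}(\hat z_t)-\mu_{\eta,t}$ and using induction on $t\le H$. The one-step map is $p^{\mathrm{cm}}_\chi(z,a)=\E f_\chi(F_\eta f_\chi^{-1}(z)+a+d+\sqrt\eta Z)$. It differs from $A(F_\eta f_\chi^{-1}(z)+a)$ by at most $\norm{A}\varepsilon_d+\varepsilon_e$. No $\sqrt\eta$ term appears, since $\E Z=0$ kills the linear part and $h$ is bounded. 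Adding $\varepsilon_p$ and pulling back through the Lipschitz inverse gives
\[
\norm{e_{t+1}}\le\alpha\norm{e_t}+C(\varepsilon_d+\varepsilon_e+\varepsilon_p),
\]
so $\sup_U\norm{e_H}=O(\varepsilon)$. Next, $f_\chi(y)=Ay+O(\varepsilon_e)$. Because $\mathcal U$ is compact, $\hat z_H$ stays in a bounded set, so $c_\chi(U)=\norm{A(\mu_{\eta,H}(U)-y)}^2+O(\varepsilon)$ uniformly. Finally, $A^\top A\to q\Sigma^{-1}$ or $qI_D$ uniformly over optima, and $\mu_{\eta,H}\to\mu_{0,H}$, which yields $\epsilon_B,\epsilon_T\to0$ as $\eta,\varepsilon\to0$.

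The MSE bound in Eq.~\eqref{ag:nonlinear-mse} follows from the same decomposition. The term $\hat z_H-f_\chi(X_H)$ splits into an $O(\varepsilon)$ bias from $e_H$ and the drift, plus the encoded noise deviation, whose second moment is $O(\eta)$ by the Lipschitz bound on $f_\chi$. Choosing $\zeta_0=q\Delta_H/16$ together with the thresholds in Lemma~\ref{ag:margin} completes the argument.
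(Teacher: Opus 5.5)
Your proposal is correct, and its outer structure matches the paper's. Both arguments use global invertibility of $f_\chi$ from $\varepsilon_e<\sigma_{\min}(A_{\chi,\eta})$. Both use a pathwise coupling of the perturbed and linear physical processes: your $\delta_H$ is the paper's $\norm{X_H-X_H^0}\le\varepsilon_d S_H(\alpha)$, handled with Cauchy--Schwarz. Both show uniform convergence of the reference costs to $b_0,t_0,j_0$ and finish with Lemma~\ref{ag:margin} at $\Delta=\Delta_H$.

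Where you genuinely differ is the rollout-error step. The paper stays in latent coordinates. It compares $\widetilde p_\chi$ with the linear latent predictor $p_\chi^0(z,a)=A_\chi F_\eta A_\chi^{-1}z+A_\chi a$ and obtains a global one-step bound $b_\chi=\norm{A_\chi}\varepsilon_d+(1+\ell_\chi)\varepsilon_e+\varepsilon_p$, using only surjectivity of $f_\chi$. It then propagates this bound with amplification $\ell_\chi=\norm{A_\chi F_\eta A_\chi^{-1}}$, giving $\norm{\hat z_{\chi,H}-A_\chi\mu_{\eta,H}}\le\ell_\chi^H\varepsilon_e+S_H(\ell_\chi)b_\chi$.

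You instead pull each latent iterate back through $f_\chi^{-1}$ and run the recursion for $e_t=f_\chi^{-1}(\hat z_t)-\mu_{\eta,t}$ in physical coordinates. This leans more on the inverse estimates: $\mathrm{Lip}(f_\chi^{-1})\le 1/(\sigma_{\min}(A_\chi)-\varepsilon_e)$ and $\norm{f_\chi^{-1}(z)-A_\chi^{-1}z}\le\varepsilon_e/\sigma_{\min}(A_\chi)$. In exchange you get $e_0=0$ exactly and the contraction $\norm{F_\eta}\le\alpha<1$. Your bound $\norm{e_H}\le C(\varepsilon_d+\varepsilon_e+\varepsilon_p)/(1-\alpha)$ is therefore uniform in $H$. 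The paper's constant can grow geometrically in $H$ whenever $\ell_\chi>1$. At fixed $H$, as in the statement, both suffice. The paper's latent-space version has the advantage of feeding directly into its rollout-MSE estimate without a separate pullback step.

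One small fix at the end. Your explicit choice $\zeta_0=q\Delta_H/16$ only secures the target-side condition $2\epsilon_T+\zeta_T<q\Delta_H/8$. Lemma~\ref{ag:margin} also needs $2\epsilon_B+\zeta_B<\gamma$, and nothing forces $\gamma\ge q\Delta_H/16$; it can be smaller when $\Sigma$ is nearly scalar. So fix the cost thresholds first, then take $\zeta_0$ below a constant multiple of $\min\{\gamma,q\Delta_H\}$, as the paper does. You should also record that $j$ is continuous on $\mathcal U$, since the lemma is stated for continuous costs. The bounded drift gives a square-integrable envelope, so dominated convergence applies.
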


\begin{proof}
For this proof write $A_{\chi}=A_{\chi,\eta}$, $F=F_\eta$, and $S_H(\beta)=\sum_{j=0}^{H-1}\beta^j$ for $\beta\ge0$.

\emph{1.
Invertibility and conditional means.}
By the derivative bound, $h_{\chi}$ is globally $\varepsilon_e$-Lipschitz.
For any $x,x'$,
\begin{equation}
 (\sigma_{\min}(A_{\chi})-\varepsilon_e)\norm{x-x'}
 \le\norm{f_{\chi}(x)-f_{\chi}(x')}
 \le(\norm{A_{\chi}}+\varepsilon_e)\norm{x-x'}.
 \label{ag:bilipschitz}
\end{equation}
For every $z$, the map $x\mapsto A_{\chi}^{-1}(z-h_{\chi}(x))$ is a contraction on $\mathbb R^D$, so it has a unique fixed point.
This proves surjectivity as well as injectivity.
The Jacobian is nonsingular by the same lower bound, so $f_{\chi}$ is a global $C^1$ diffeomorphism.
Thus Eq.~\eqref{ag:nonlinear-mean} is well defined and is the exact conditional mean of the next encoded state.
Its expectation is finite; bounded $h_{\chi}$ and continuity, together with the Gaussian first moment, also give continuity.

\emph{2.
Uniform perturbation of autoregressive prediction.}
Let $p_{\chi}^0(z,a)=A_{\chi}FA_{\chi}^{-1}z+A_{\chi}a$ and $\ell_{\chi}=\norm{A_{\chi}FA_{\chi}^{-1}}$.
At $z=f_{\chi}(x)$, zero-mean noise gives
\[
 \begin{aligned}
 p_{\chi}^{\mathrm{cm}}(f_{\chi}(x),a)-p_{\chi}^0(f_{\chi}(x),a)
 &=A_{\chi}d(x,a)-A_{\chi}FA_{\chi}^{-1}h_{\chi}(x)\\
 &\quad+\mathbb E_Z h_{\chi}(Fx+a+d(x,a)+\sqrt\eta Z).
 \end{aligned}
\]
Since $f_{\chi}$ is onto, for every $z$ and feasible $a$,
\begin{equation}
 \norm{\widetilde p_{\chi}(z,a)-p_{\chi}^0(z,a)}
 \le b_{\chi}:=\norm{A_{\chi}}\varepsilon_d+(1+\ell_{\chi})\varepsilon_e+\varepsilon_p.
 \label{ag:one-step-bound}
\end{equation}
Comparison with the reference rollout $A_{\chi}\mu_{\eta,t}(U)$ and induction yield
\begin{equation}
 \norm{\hat z_{\chi,H}(U)-A_{\chi}\mu_{\eta,H}(U)}
 \le\ell_{\chi}^H\varepsilon_e+S_H(\ell_{\chi})b_{\chi}.
 \label{ag:rollout-bound}
\end{equation}
Including the goal-encoding perturbation, put
\begin{equation}
 e_{\chi}=(\ell_{\chi}^H+1)\varepsilon_e+S_H(\ell_{\chi})b_{\chi},
 \qquad B_{\chi}=\norm{A_{\chi}}(r_H+\norm{y}).
 \label{ag:cost-bound-constants}
\end{equation}
The reference latent cost is $c_{\chi,\eta}^0(U)=\norm{A_{\chi}(\mu_{\eta,H}(U)-y)}^2$.
Using $\norm{F_\eta}\le\alpha$ and $\norm{\mu_{\eta,H}(U)}\le r_H$, we obtain
\begin{equation}
 \norm{c_{\chi}-c_{\chi,\eta}^0}_\infty\le2B_{\chi}e_{\chi}+e_{\chi}^2.
 \label{ag:latent-cost-bound}
\end{equation}
This argument does not commute expectation with a nonlinear rollout.

\emph{3.
Uniform perturbation of the physical cost.}
Couple Eq.~\eqref{ag:nonlinear-dynamics} to the unperturbed linear process $X_t^0$ using the same innovations and planned actions.
Pathwise,
\[
 \norm{X_{t+1}-X_{t+1}^0}\le\alpha\norm{X_t-X_t^0}+\varepsilon_d,
 \qquad\norm{X_H-X_H^0}\le d_H:=\varepsilon_dS_H(\alpha).
\]
Let $j_\eta^0(U)=\mathbb E\norm{X_H^0(U)-y}^2$ and define
\[
 B_*:=\left[(r_H+\norm{y})^2+\eta D S_H(\alpha^2)\right]^{1/2}.
\]
Cauchy--Schwarz and the pathwise bound give
\begin{equation}
 \norm{j-j_\eta^0}_\infty\le2B_*d_H+d_H^2.
 \label{ag:physical-cost-bound}
\end{equation}
The global bound on $d$ provides a uniform square-integrable envelope, so $j$ is continuous on $\mathcal U$ even though the innovations are unbounded.

\emph{4.
Applying the limiting regret margin.}
Define on the same compact action set
\[
 \begin{aligned}
 \mu_0(U)&=\sum_{t=0}^{H-1}\alpha^{H-1-t}a_t,
 &j_0(U)&=\norm{\mu_0(U)-y}^2,\\
 b_0(U)&=q\norm{\mu_0(U)-y}_{\Sigma^{-1}}^2,
 &t_0(U)&=qj_0(U).
 \end{aligned}
\]
By Theorem~\ref{ag:separation}, all minimizers of $b_0$ have $j_0$ regret $\Delta_H$, while $t_0$ has the same minimizers as $j_0$.
Propositions~\ref{prop:isotropic_joint_metric} and~\ref{prop:learned_target_metric}, together with $F_\eta\to\alpha I_D$, give, uniformly over reference training optima,
\[
 c_{B,\eta}^0\to b_0,\qquad c_{T,\eta}^0\to t_0,\qquad j_\eta^0\to j_0
\]
in the uniform norm on $\mathcal U$.
The encoder norms and inverse norms are uniformly bounded, so $\ell_{\chi}$, $B_{\chi}$, and all finite-horizon constants above are uniformly bounded as well.
Equations~\eqref{ag:latent-cost-bound} and~\eqref{ag:physical-cost-bound} therefore make the total cost perturbations arbitrarily small by choosing $\eta$ and $\varepsilon_d,\varepsilon_e,\varepsilon_p$ sufficiently small.
There is also a common positive lower bound on $\sigma_{\min}(A_{\chi})$, so Eqs.~\eqref{ag:nonlinear-encoder} and~\eqref{ag:bilipschitz} hold uniformly after reducing $\varepsilon_0$ if necessary.
Choose the thresholds to satisfy Lemma~\ref{ag:margin} with $\Delta=\Delta_H$, and then take a common sufficiently small $\zeta_0$.
This proves Eq.~\eqref{ag:nonlinear-regret} for every allowed choice, including nonunique or approximate planning minimizers.

\emph{5.
Encoded rollout error.}
Using the same coupling and the $L^2$ triangle inequality,
\[
 \begin{aligned}
 \left(\mathbb E\norm{\hat z_{\chi,H}-f_{\chi}(X_H)}^2\right)^{1/2}
 &\le\ell_{\chi}^H\varepsilon_e+S_H(\ell_{\chi})b_{\chi}+\norm{A_{\chi}}d_H+\varepsilon_e\\
 &\quad+\norm{A_{\chi}}\sqrt{\eta D S_H(\alpha^2)}.
 \end{aligned}
\]
All bounds are uniform in $U$ and the reference optima.
Squaring gives Eq.~\eqref{ag:nonlinear-mse}.
\end{proof}

Corollary~\ref{ag:nonlinear} concerns a neighborhood of the jointly selected linear models.
It proves stability of their planning behavior, not selection of nearby encoders by unrestricted nonlinear joint optimization.
For nonlinear encoders, an exact one-step encoded conditional mean need not compose into an exact terminal conditional mean; the explicit rollout bound above handles this distinction.
The global perturbation bounds accommodate the unbounded support of Gaussian innovations; compact action constraints alone do not bound noisy state trajectories.

\section{Experimental Details}
\label{app:experiments}

This appendix provides the implementation and evaluation details underlying the experiments in
Section~\ref{sec:experiments}. We first report the evaluation protocol and per-seed planning results,
then describe the latent-cost ordering diagnostic used in Table~\ref{tab:action_ranking}.
We next examine the target allocation learned by $\Lambda$Reg, sensitivity to the anisotropy bound,
and additional representation and prediction diagnostics.

\subsection{Experimental protocol and per-seed results}
\label{app:implementation}

\paragraph{Data and models.}
We use the TwoRoom, Reacher, PushT, and Cube datasets and preprocessing protocol of
LeWM~\citep{maes2026lewm}.
Training clips contain four observations with the temporal spacing used in that protocol.
A randomly initialized ViT-Tiny followed by a projection MLP produces
$192$-dimensional representations, and the causal predictor uses a three-observation context.
Prediction and planning operate in the original latent coordinates, whereas $\Lambda$Reg
is evaluated on the standardized representation $\Lambda^{-1/2}z$.

\begin{figure}[ht]
    \centering
    \includegraphics[width=\linewidth]{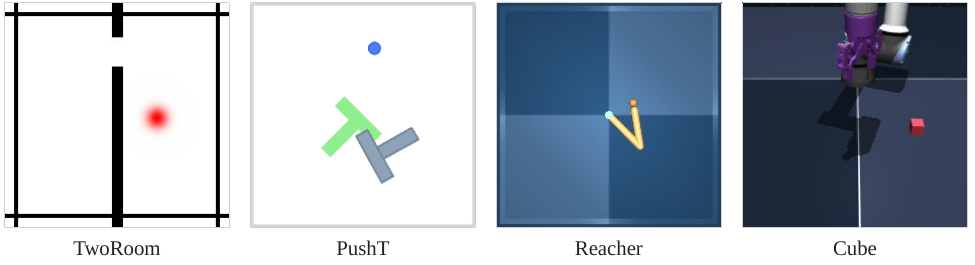}
    \caption{\textbf{Evaluation environments.}
    From left to right: TwoRoom, PushT, Reacher, and Cube.
    All four environments have continuous action spaces, and visual goal planning is performed
    directly from image observations.}
    \label{fig:environments}
\end{figure}

\paragraph{Training and planning settings.}
Table~\ref{tab:implementation_settings} summarizes the settings shared across the experiments.
Unless otherwise stated, AnisoWM uses $\kappa=2$ and $\lambda=0.09$.
The LeWM baseline corresponds to $\kappa=1$ and is trained with the same model architecture,
optimizer settings, training seeds, and evaluation pairs.

\begin{table}[ht]
    \centering
    \caption{\textbf{Implementation settings.}
    Target logits use zero weight decay and are clipped after each update as described in
    Section~\ref{sec:joint_training}.}
    \label{tab:implementation_settings}
    \small
    \begin{tabular}{@{}ll@{}}
        \toprule
        Setting & Value \\
        \midrule
        Epochs & 15 \\
        Training seeds & 3072, 1234, 42 \\
        Optimizer / batch size & AdamW / 128 \\
        Peak learning rate & $5\times10^{-5}$ \\
        Encoder and predictor weight decay & $10^{-3}$ \\
        Target-logit learning-rate schedule & Same as the model \\
        Representation dimension & 192 \\
        Primary anisotropy bound / regularization weight & $2$ / $0.09$ \\
        Projection directions / quadrature knots & 1,024 / 17 \\
        CEM iterations / candidates / elites & 30 / 300 / 30 \\
        Initial--goal pairs per environment & 50 \\
        Environment-step budget per pair & 50 \\
        \bottomrule
    \end{tabular}
\end{table}

\paragraph{Implementation pseudocode.}
Algorithm~\ref{alg:lambdareg} gives the implementation of $\Lambda$Reg.
Relative to SIGReg, the only change to the statistic is to standardize each latent coordinate by the learned diagonal Gaussian target before computing the projected empirical characteristic functions.
This standardization is used only inside the regularizer; prediction and planning operate on the original latent representation.

\begin{algorithm}[ht]
\caption{
\textbf{$\Lambda$Reg with the Epps--Pulley statistic and DDP support.}
The target variances are parameterized by logits $\alpha$ and normalized to have trace $D$.
After standardizing the features by the resulting diagonal target covariance, the remainder is the original SIGReg computation.
}
\label{alg:lambdareg}
\begin{lstlisting}[style=anisocode]
def LambdaReg(x, alpha, global_step, num_slices=1024):
    """x is (N, D); alpha is a learnable vector of size D."""

    # --- learned diagonal Gaussian target ---
    D = x.size(1)
    v = D * alpha.softmax(dim=0)   # sum(v) = D

    # standardize only inside LambdaReg
    x = x * v.rsqrt()

    # --- original SIGReg statistic ---
    # slice sampling -- synchronized across devices
    dev = dict(device=x.device)
    g = torch.Generator(**dev)
    g.manual_seed(global_step)

    proj_shape = (x.size(1), num_slices)
    A = torch.randn(proj_shape, generator=g, **dev)
    A /= A.norm(p=2, dim=0)

    # Epps--Pulley integration points and Gaussian window
    t = torch.linspace(0, 3, 17, **dev)
    exp_f = torch.exp(-0.5 * t**2)

    # empirical characteristic function -- gathered across devices
    x_t = (x @ A).unsqueeze(2) * t       # (N, M, T)
    ecf = (1j * x_t).exp().mean(0)
    ecf = all_reduce(ecf, op="AVG")

    # weighted squared distance to the Gaussian characteristic function
    err = (ecf - exp_f).abs().square().mul(exp_f)

    N = x.size(0) * world_size
    T = torch.trapz(err, t, dim=1) * N
    return T.mean()
\end{lstlisting}
\end{algorithm}

The target variances are
\[
v_i
=
D\frac{\exp(\alpha_i)}
{\sum_{j=1}^{D}\exp(\alpha_j)},
\]
which guarantees
\(
\operatorname{tr}(\Lambda)=D
\)
for
\(
\Lambda=\operatorname{diag}(v_1,\ldots,v_D)
\).
The logits are initialized at zero and, after each optimizer update, clipped as
\[
\alpha_i
\leftarrow
\operatorname{clip}
\left(
\alpha_i,
-\frac{1}{2}\log\kappa,
\frac{1}{2}\log\kappa
\right),
\]
which ensures
\(
\operatorname{cond}(\Lambda)\leq\kappa
\).
The target parameters receive gradients only through $\Lambda$Reg.

\paragraph{Evaluation protocol.}
Evaluation follows the protocol of LeWM~\citep{maes2026lewm}.
For each environment, planning is evaluated on 50 initial--goal pairs, with goals sampled
25 steps ahead within the same dataset trajectory.
CEM uses 300 candidate action sequences, 30 elites, and 30 optimization iterations, and
each evaluation has a 50-step environment budget.

\paragraph{Regularizer hyperparameters.}
We use the SIGReg settings reported by \citet{maes2026lewm}, including $\lambda=0.09$, 1,024 projection directions, and 17 integration knots, for both LeWM and AnisoWM. 
\citet{maes2026lewm} select $\lambda=0.09$ based on their reported SIGReg hyperparameter study, so we retain this value rather than retuning the baseline for AnisoWM.
$\Lambda$Reg additionally introduces the anisotropy bound $\kappa$; the primary comparison uses $\kappa=2$, and Appendix~\ref{app:sweep_details} reports its sensitivity.

\paragraph{Per-seed planning results.}
The main-paper planning results are means over three training seeds.
Table~\ref{tab:per_seed_results} reports the corresponding per-seed AnisoWM results.

\begin{table}[ht]
    \centering
    \caption{\textbf{Per-seed AnisoWM planning success (\%).}
    Each entry reports AnisoWM planning success under the same evaluation protocol
    used for Figure~\ref{fig:main_result}. The final column reports the mean across
    the three training seeds, which the main text rounds to an integer.}
    \label{tab:per_seed_results}
    \small
    \begin{tabular}{@{}lcccc@{}}
        \toprule
        Task & Seed 3072 & Seed 1234 & Seed 42 & Mean \\
        \midrule
        TwoRoom & 90 & 94 & 94 & 92.7 \\
        Reacher & 90 & 88 & 88 & 88.7 \\
        PushT   & 98 & 96 & 96 & 96.7 \\
        Cube    & 78 & 80 & 80 & 79.3 \\
        \bottomrule
    \end{tabular}
\end{table}

\subsection{Latent-cost ordering diagnostic}
\label{app:ranking_protocol}

The finite-horizon analysis in Section~\ref{sec:analysis} concerns the ordering of feasible
outcomes under the latent and task costs: a planner selects by that ordering, and the analysis
shows it can disagree with the task cost even when prediction is exact.
We therefore measure the ordering directly, in a diagnostic where LeWM and AnisoWM score the
same recorded action sequences and are compared against the outcomes those sequences reached.

\paragraph{Candidate sequences and outcomes.}
For each initial--goal pair, stored evaluations provide executed action sequences together with the
state each of them reached.
We label a sequence by the scalar of Table~\ref{tab:outcome_scalars}, evaluated at the state reached
after $H$ environment steps and normalized by the environment's success threshold, so that a value
of one is the success boundary and smaller values are better.
The label is continuous rather than binary, and is read at the horizon rather than minimized along
the trajectory.
The set excludes trajectories generated by either of the two checkpoints being compared, and both
models score the same remaining sequences.
A case is discarded when fewer than eight candidates remain, or when all candidates reach the same
outcome and the case induces no ordering.
The horizon $H$ is fixed per environment before the compared models are loaded, and is independent
of the CEM planning horizon.

\begin{table}[ht]
    \centering
    \caption{\textbf{Outcome scalar and diagnostic horizon.}
    Each scalar is normalized by its threshold, so one is the success boundary and lower is better.
    Cases are counted out of the fifty initial--goal pairs of the evaluation protocol.}
    \label{tab:outcome_scalars}
    \small
    \setlength{\tabcolsep}{6pt}
    \begin{tabular}{@{}llccr@{}}
        \toprule
        Task & Outcome scalar & Threshold & $H$ & Cases \\
        \midrule
        TwoRoom & agent--goal distance & $16$ px & 20 & 45 \\
        Reacher & largest joint-position error & $0.05$ rad & 25 & 50 \\
        PushT   & $\max\!\big(\text{position}/20\,\mathrm{px},\ \text{angle}/(\pi/9)\big)$ & $1$ & 25 & 48 \\
        Cube    & block--target distance & $0.04$ m & 15 & 33 \\
        \bottomrule
    \end{tabular}
\end{table}

\paragraph{Encoded and predicted latent costs.}
The cost a planner minimizes combines the learned representation with a predictor rollout, while
$\Lambda$Reg changes only the target the encoder is trained against.
To separate the two we score each sequence twice.
For a candidate sequence $U$ with realized terminal observation $o_H(U)$,
\begin{equation}
    J_{\mathrm{enc}}(U)
    =
    \|f_\theta(o_H(U))-f_\theta(o_g)\|^2,
    \qquad
    J_{\mathrm{pred}}(U)
    =
    \|\hat z_H(U)-f_\theta(o_g)\|^2.
    \label{eq:ranking_costs}
\end{equation}
$J_{\mathrm{enc}}$ applies the learned representation directly to the realized outcome and therefore
removes rollout prediction from the diagnostic.
$J_{\mathrm{pred}}$ is the cost available to the planner and additionally includes the predictor rollout.

\begin{figure}[ht]
    \centering
    \includegraphics[width=\linewidth]{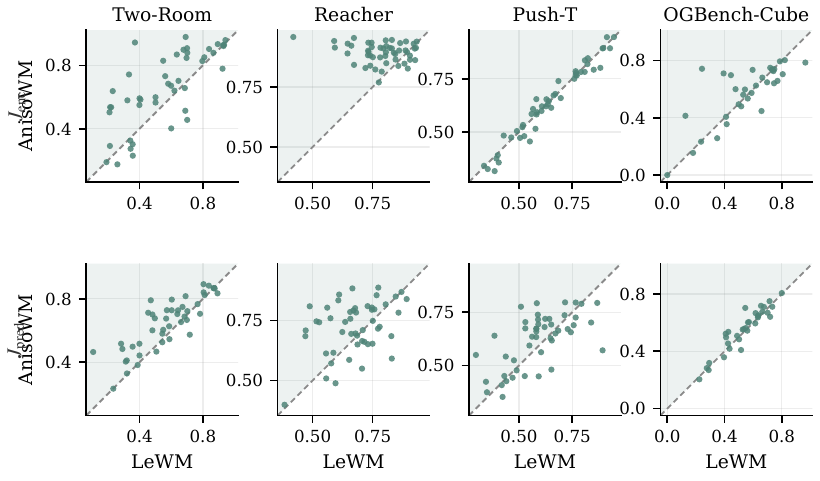}
    \caption{\textbf{Ranking accuracy, pair by pair.}
    Each point is one initial--goal pair: how well LeWM's cost orders its sequences on the
    horizontal axis against how well AnisoWM's does on the vertical one, both scoring the same
    sequences.
    The shaded half-plane above the diagonal is where AnisoWM orders that pair better.
    Top row: $J_{\mathrm{enc}}$, which leaves the representation on its own.
    Bottom row: $J_{\mathrm{pred}}$, the cost the planner minimizes.
    Rows of a column share their axis limits; each panel shows the seed
    Table~\ref{tab:action_ranking} reports.}
    \label{fig:action_ranking}
\end{figure}

\paragraph{Pairwise ordering accuracy.}
Within a case we consider every pair of candidates whose outcomes differ; pairs with identical
outcomes induce no ordering and are excluded.
The accuracy is the fraction of these pairs on which the latent-cost ordering agrees with the
outcome ordering, counting a pair with exactly equal costs as one half.
We evaluate this within each case and average over cases, so that each case contributes equally
regardless of its number of candidates.
Table~\ref{tab:action_ranking} reports this quantity for LeWM and AnisoWM on identical candidate
sets.

Comparing the two costs localizes where a difference between the models arises.
A gap under $J_{\mathrm{enc}}$ is present in the representation itself, before any rollout; a gap
that appears only under $J_{\mathrm{pred}}$ arises once the predictor is involved.
TwoRoom and Reacher show the former, with AnisoWM ahead under both costs, while in PushT the
representation-only ordering is nearly unchanged and only the predicted cost improves
(Figure~\ref{fig:action_ranking}).
\subsection{Additional local cost geometry}
\label{app:cost}

Figure~\ref{fig:cost-additional} shows four additional Cube/PushT example pairs using the visualization of Figure~\ref{fig:cost-main}.
The values above the zoomed panels report Spearman's rank correlation $\rho$ between latent planning cost and task cost within the evaluated region.
Across all additional examples, AnisoWM gives higher rank correlation than LeWM.

\begin{figure}[ht]
    \centering
    \includegraphics[width=0.82\linewidth]{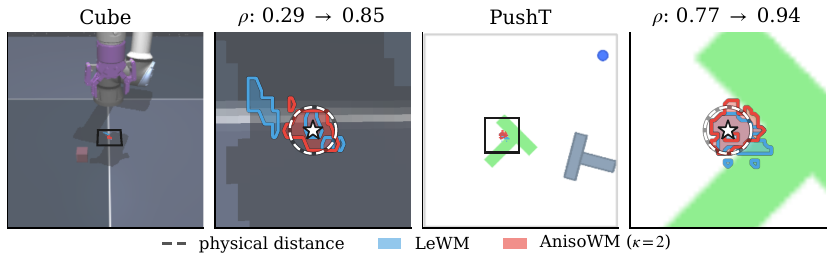}
    \par\vspace{0.25em}
    \includegraphics[width=0.82\linewidth]{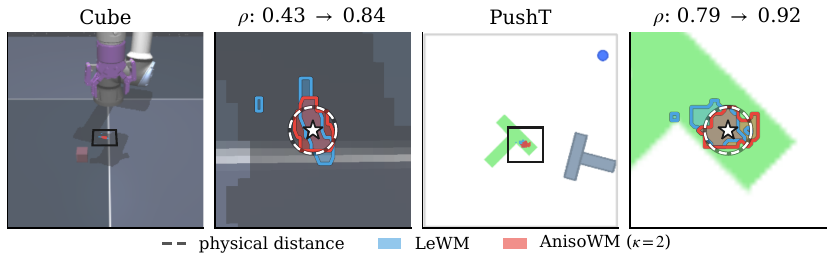}
    \par\vspace{0.25em}
    \includegraphics[width=0.82\linewidth]{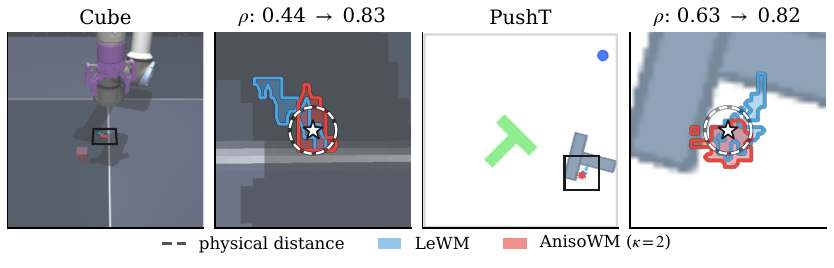}
    \par\vspace{0.25em}
    \includegraphics[width=0.82\linewidth]{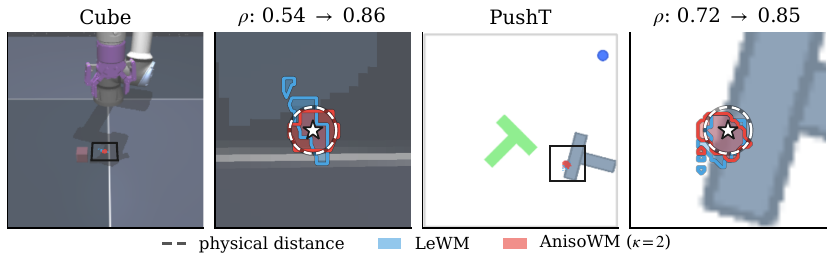}
    \caption{\textbf{Additional latent-cost neighborhoods around the goal.}
    Four additional Cube/PushT example pairs, shown using the same visualization as Figure~\ref{fig:cost-main}.
    The dashed circle denotes the task-cost neighborhood in physical space; blue and red contours denote the lowest-cost regions under LeWM and AnisoWM ($\kappa=2$), respectively.}
    \label{fig:cost-additional}
\end{figure}

\subsection{Learned target allocation}
\label{app:target_dynamics}

The trace and condition-number constraints determine the feasible family of target covariances but
do not specify how variance is allocated across latent coordinates.
This section examines how that allocation evolves during neural world-model training.

\paragraph{Environment-dependent target spectra.}
Figure~\ref{fig:target_spectra} in the main paper shows the learned target spectra at the common
bound $\kappa=2$.
The number of coordinates above the mean target variance differs across environments:
102 in TwoRoom, 124 in Reacher, 95 in PushT, and 97 in Cube.
Since the spectra are sorted independently, these numbers describe differences in the learned
variance distribution rather than aligned semantic coordinates.

\paragraph{Evolution during training.}
All target logits are initialized at zero, so training begins from the isotropic target.
The right panel of Figure~\ref{fig:target_spectra} shows the target spectrum throughout one
TwoRoom run.
The spectrum continues to evolve even after the learned condition number reaches the permitted
bound, indicating that reaching the boundary does not uniquely determine the variance allocation.

\paragraph{Use of the anisotropy budget.}
The learned target need not always saturate the condition-number constraint.
Figure~\ref{fig:target_condition} reports the target condition number during training for the
anisotropy bounds used in the TwoRoom sweep.
The bound is reached for $\kappa\leq8$, whereas the $\kappa=16$ run ends the fifteen-epoch
training budget at a condition number of $11.3$.
Thus the nominal bound need not equal the anisotropy realized by training.

\begin{figure}[ht]
    \centering
    \includegraphics[width=0.55\linewidth]{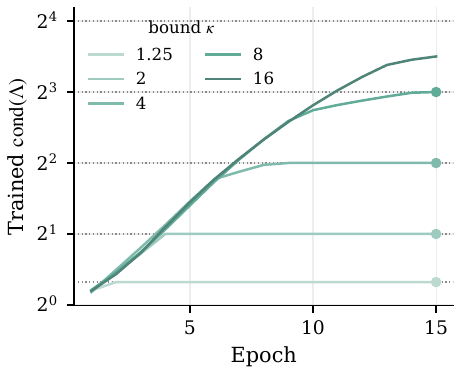}
    \caption{\textbf{Evolution of the learned target condition number.}
    Each curve shows one TwoRoom training run with the indicated anisotropy bound, and the dotted
    line at each bound is the ratio that bound permits.
    Filled markers indicate that the learned target reaches its bound.}
    \label{fig:target_condition}
\end{figure}

\subsection{Sensitivity to the anisotropy bound}
\label{app:sweep_details}

The primary experiments use the same $\kappa=2$ in all four environments.
To examine sensitivity to this choice, we additionally train models at several anisotropic bounds.
The numerical values underlying Figure~\ref{fig:kappa_sweep} are reported in
Table~\ref{tab:kappa_sweep}.

\begin{table}[ht]
    \centering
    \caption{\textbf{Planning success across anisotropic target bounds (\%).}
    Each entry corresponds to the training run used in the sweep of
    Figure~\ref{fig:kappa_sweep}.}
    \label{tab:kappa_sweep}
    \small
    \begin{tabular}{@{}ccccc@{}}
        \toprule
        $\kappa$ & TwoRoom & Reacher & PushT & Cube \\
        \midrule
        1.25 & 90 & 90 & 94 & 82 \\
        2    & 90 & 90 & 98 & 78 \\
        4    & 92 & 84 & 92 & 78 \\
        8    & 92 & 84 & 92 & 78 \\
        16   & 94 & 86 & 92 & 78 \\
        \bottomrule
    \end{tabular}
\end{table}

The preferred value differs across environments.
In particular, increasing the permitted variance contrast is not monotonically beneficial:
TwoRoom continues to improve across the displayed range, whereas the other environments attain
their best anisotropic result at smaller bounds.
These sweep results are single training runs and should therefore be interpreted as sensitivity
diagnostics rather than as replacements for the three-seed primary comparison.

\subsection{Realized representation geometry}
\label{app:training_diagnostics}

The target covariance $\Lambda$ is used inside $\Lambda$Reg, but it does not directly constrain
the empirical covariance of the learned neural features.
We therefore measure the centered feature covariance on a fixed validation probe.

\paragraph{Feature covariance spectrum.}
For each run, let
\[
    \ell_1 \ge \ell_2 \ge \cdots \ge \ell_D
\]
be the eigenvalues of the measured feature covariance.
To compare spectra across runs, we normalize each eigenvalue by the mean eigenvalue.

\begin{figure}[ht]
    \centering
    \includegraphics[width=0.65\linewidth]{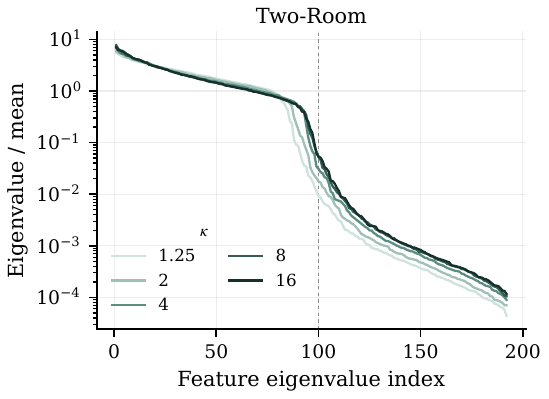}
    \caption{\textbf{Measured feature-covariance spectra in TwoRoom.}
    Eigenvalues of the centered feature covariance are normalized by their mean and sorted
    independently for each displayed anisotropy bound; the dashed line marks index $100$, the
    value quoted in the text.
    These empirical feature covariances are distinct from the learned target covariance $\Lambda$.}
    \label{fig:feature_spectrum}
\end{figure}

Changing $\kappa$ changes the measured feature spectrum as well as the target spectrum.
For example, in TwoRoom the eigenvalue at index $100$, normalized by the mean eigenvalue,
increases from $0.009$ at $\kappa=1.25$ to $0.054$ at $\kappa=16$.

\paragraph{Representation variation.}
With
\[
    \pi_j=\frac{\ell_j}{\sum_k\ell_k},
\]
we compute the entropy effective rank
\begin{equation}
    r_{\mathrm{ent}}
    =
    \exp\!\left(
        -\sum_{j:\pi_j>0}\pi_j\log\pi_j
    \right).
\end{equation}
Across the reported runs, the entropy effective rank ranges from $74$ to $111$ out of
$192$ dimensions.
This empirical statistic is distinct from the participation-ratio lower bound on the target covariance
in Corollary~\ref{cor:boundaries}.

\subsection{Additional prediction diagnostics}
\label{app:prediction_diagnostics}

\paragraph{Latent prediction loss and planning.}
Figure~\ref{fig:prediction_planning} compares latent prediction loss with planning success across
the anisotropy sweep.
The run with the smallest latent MSE is not consistently the run with the highest planning success.
Because each model learns its own representation, however, latent MSE is measured in a different
coordinate system for each run and should not be interpreted as a common physical prediction error.
The figure is therefore included only as a diagnostic of the relationship between the reported latent
training loss and planning performance.

\begin{figure}[ht]
    \centering
    \includegraphics[width=\linewidth]{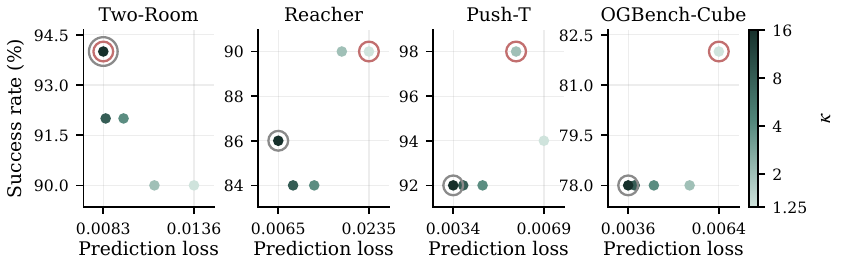}
    \caption{\textbf{Latent prediction loss and planning success.}
    One marker per trained run, coloured by its anisotropy bound.
    A run further left predicts better and a run higher plans better.
    The red ring marks the best-planning run and the grey ring the lowest-loss one; where a single
    run is both, the two rings are concentric.
    The MSE is measured in each model's own learned representation space.}
    \label{fig:prediction_planning}
\end{figure}

\subsection{Qualitative rollouts}
\label{app:qualitative_rollouts}

Figures~\ref{fig:rollout_examples_a} and~\ref{fig:rollout_examples_b} show recorded rollout
pairs for the four environments.
These examples illustrate behavior under the same evaluation planner and are not used as an
aggregate measure of performance.

\begin{figure}[p]
    \centering
    \includegraphics[
        width=\linewidth,
        height=0.78\textheight,
        keepaspectratio
    ]{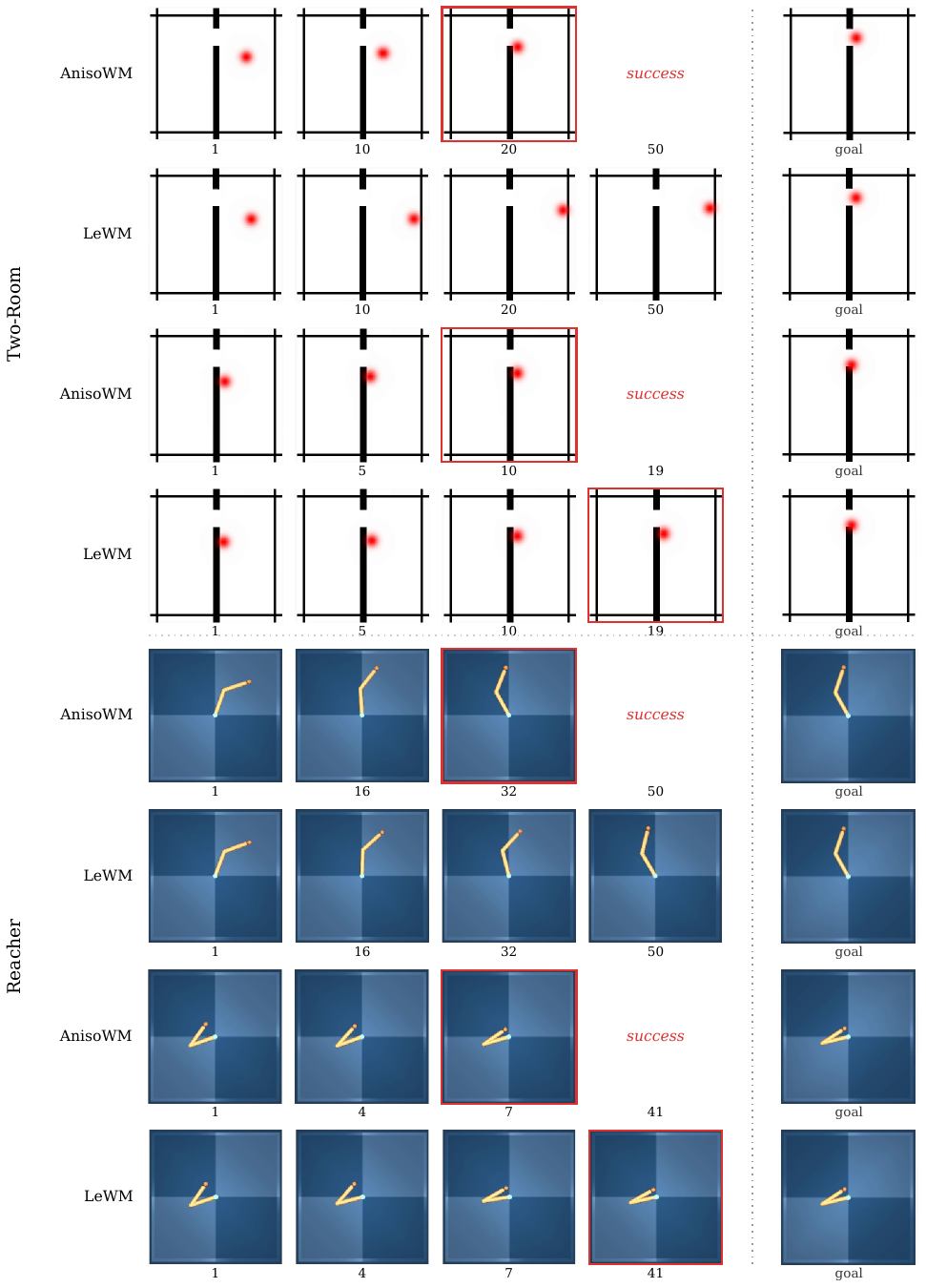}
    \caption{\textbf{Rollouts in TwoRoom and Reacher.}
    Two rollout pairs per environment compare AnisoWM at $\kappa=2$ with LeWM under the same
    evaluation planner and seed.
    Columns show the indicated environment steps and the goal observation; red outlines indicate
    recorded goal attainment.}
    \label{fig:rollout_examples_a}
\end{figure}

\begin{figure}[p]
    \centering
    \includegraphics[
        width=\linewidth,
        height=0.78\textheight,
        keepaspectratio
    ]{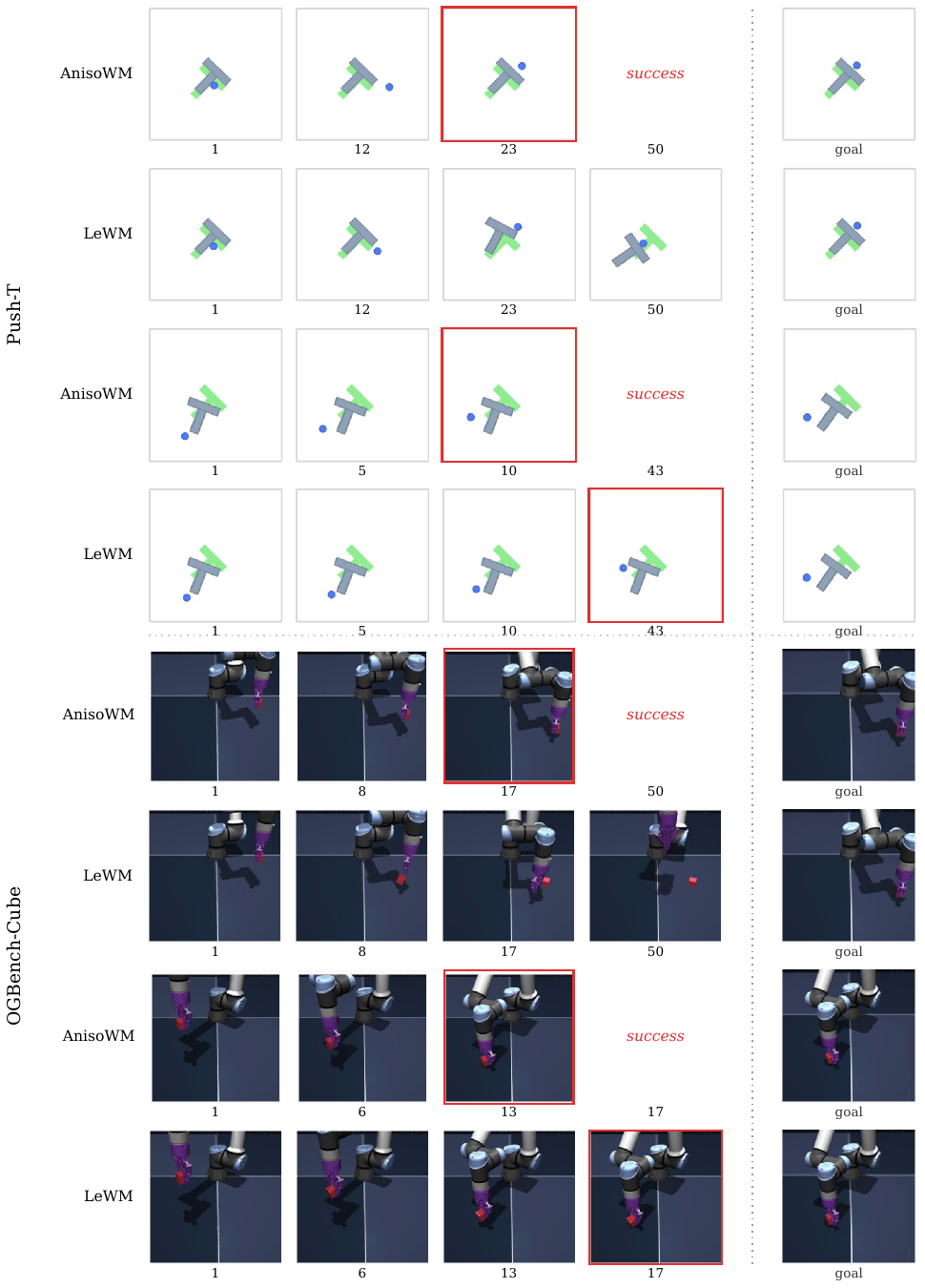}
    \caption{\textbf{Rollouts in PushT and Cube.}
    Two rollout pairs per environment compare AnisoWM at $\kappa=2$ with LeWM under the same
    evaluation planner and seed.
    Columns show the indicated environment steps and the goal observation; red outlines indicate
    recorded goal attainment.}
    \label{fig:rollout_examples_b}
\end{figure}

\end{document}